\newcommand{\pmid}{|}
\renewcommand{\P}{\mathbb{P}}
\newcommand{\bA}{\boldsymbol{A}}
\newcommand{\bQ}{\boldsymbol{Q}}
\newcommand{\bx}{\boldsymbol{x}}
\newcommand{\bR}{\boldsymbol{R}}
\newcommand{\sY}{\mathcal{Y}}
\newcommand{\bbarW}{\boldsymbol{\bar{W}}}
\newcommand{\bZ}{\boldsymbol{Z}}
\newcommand{\bW}{\boldsymbol{W}}
\newcommand{\bV}{\boldsymbol{V}}
\newcommand{\bU}{\boldsymbol{U}}
\newcommand{\sX}{\mathcal{X}}
\newcommand{\sF}{\mathcal{F}}
\DeclareMathOperator*{\argmax}{arg\,max}
\newcommand{\Var}{\mathrm{Var}}
\newcommand{\btheta}{\boldsymbol{\theta}}
\newcommand{\bhtheta}{\boldsymbol{\hat{\theta}}}
\newcommand{\bTheta}{\boldsymbol{\Theta}}
\newcommand{\bhTheta}{\boldsymbol{\hat{\Theta}}}
\newcommand{\scO}{\mathcal{O}}
\newcommand{\wh}{\hat}
\newcommand{\ve}{\varepsilon}
\newtheorem{lemma}{Lemma}
\newtheorem{theorem}{Theorem}
\newtheorem{remark}{Remark}
\newtheorem{prop}{Proposition}
\newcommand{\reals}{\mathbb{R}}
\newcommand{\tp}{^{\top}}
\newcommand{\repk}{^{(k)}}
\DeclareMathOperator{\E}{\mathbb{E}}
\DeclareMathOperator{\Bias}{\mathrm{Bias}}
\newcommand{\ones}{\boldsymbol{1}}
\newtheoremstyle{named}{}{}{\itshape}{}{\bfseries}{}{.5em}{\thmnote{#3.}#1}
\theoremstyle{named}
\newtheorem*{nameddef}{}
\newcommand*\diff{\mathop{}\!\mathrm{d}}
\newcommand{\tr}{\mathrm{tr}}
\newcommand{\eps}{\epsilon}
\newcommand{\pr}[1]{\left( #1 \right)}
\newcommand{\br}[1]{\left[ #1 \right]}
\newcommand{\cbr}[1]{\left\{ #1 \right\}}
\newcommand{\abs}[1]{\left|#1\right|}
\newcommand{\wstar}{w^{\star}}
\newcommand{\vh}{\hat{v}}
\newcommand{\IS}{^{\text{\scshape{iw}}}}
\newcommand{\DR}{^{\text{\scshape{dr}}}}
\newcommand{\deli}{^{\backslash i}}
\newcommand{\delk}{^{\backslash k}}
\newcommand{\vhest}{\wh{v}^{\mathrm{est}}}
\newcommand{\WIS}{^{\text{\scshape{sn}}}}
\newcommand{\ISl}{^{\text{\scshape{iw}-}\lambda}}
\newcommand{\DRl}{^{\text{\scshape{dr}-}\lambda}}
\newcommand{\subIS}{_{\text{\scshape{iw}}}}
\newcommand{\subWIS}{_{\text{\scshape{sn}}}}
\newcommand{\conf}{x}
\newcommand{\bmid}{\;\middle|\;}
\newcommand{\MLE}{^{\text{\scshape{mle}}}}
\newcommand{\POP}{^{\text{\scshape{pop}}}}
\newcommand{\ESLB}{\text{\scshape{eslb}}}
\newcommand{\astar}{a^{\star}}
\begin{document}

\twocolumn[

\aistatstitle{Confident Off-Policy Evaluation and Selection through Self-Normalized Importance Weighting}

\aistatsauthor{Ilja Kuzborskij \And Claire Vernade \And Andr\'as Gy\"orgy \And Csaba Szepesv\'ari}

\aistatsaddress{ DeepMind } ]

\begin{abstract}
  We consider off-policy evaluation in the contextual bandit setting for the purpose of obtaining a robust off-policy \emph{selection} strategy, where the selection strategy is evaluated based on the value of the chosen policy in a set of proposal (target) policies. We propose a new method to compute a lower bound on the value of an arbitrary target policy given some logged data in contextual bandits for a desired coverage. The lower bound is built around the so-called \ac{WIS} estimator. It combines the use of a semi-empirical Efron-Stein tail inequality to control the concentration
  and a new multiplicative (rather than additive) control of the bias.
  The new approach is evaluated on a number of synthetic and real datasets and is found to be superior to its main competitors, both in terms of tightness of the confidence intervals and the quality of the policies chosen.
\end{abstract}

\section{Introduction}

Consider the following offline stochastic decision making problem where
an agent observes a collection of contexts, actions, and associated rewards collected by some \emph{behavior policy} and has to choose a new policy from a finite set of \emph{target policies}.
The agent's goal is to select the policy that has the highest \emph{value}, defined as its expected reward.
We call this variant of the contextual bandit problem \emph{best-policy selection} (in the off-policy setting).

This problem is encountered for instance in personalized recommendation and allocation problems, such as in medical applications, online advertising, and operations research: a \emph{static behavior policy} (e.g.\ a randomized classifier) is run online and for each chosen action, only partial (bandit) feedback is received. The collected data must then be used to evaluate other \emph{static target policies}~\citep{agarwal2017effective} with the goal of choosing a policy that will perform better on average.

We emphasize that this work focuses on such \emph{static} policies. These are an important class of policies that are preferred in practice in many cases. For example, recommender systems based on a batch-learnt classifier with predictable behavior or expert-designed rules as in medical applications.

At its core, this selection problem relies on off-policy evaluation~\citep{BoPe13,dudik2011doubly,swaminathan2015batch}, which is concerned with accurately estimating the value of a target policy, using a logged dataset, and aiming for a good bias-variance trade-off. 
To guarantee that such trade-off holds in practice, one would ideally rely on high-probability confidence bounds.
However, only few works on off-policy evaluation have provided practically computable, tight 
confidence bounds. 
It is recognized, though, that such bounds should depend on
the empirical variance of the estimator~\citep{BoPe13,thomas2015high_a,thomas2015high_b,swaminathan2015self,metelli2018policy}.
In general, this is a non-trivial task and the standard tools such as sub-Gaussian tail concentration inequalities (e.g.\ Bernstein's inequality) are ill-suited for this job.
Indeed, most estimators derive from \emph{\acf{IS}}, a standard technique for estimating a property of a distribution while having access to a sample generated by another distribution.
At the same time, arguably one of the most interesting scenarios is when the target and the behavior policies are misaligned, which corresponds to situations when the weights of \ac{IS} exhibit a heavy-tailed behavior.
In such cases, the control of the moments of the \ac{IS} estimator, and therefore its concentration, is in general futile.

In this paper we revisit \emph{\acf{WIS}}, a self-normalized version of \ac{IS}.
This estimator is asymptotically unbiased, and is known for its small variance in practice~\citep{hesterberg1995weighted}.
Moreover, unlike \ac{IS}, all the moments of \ac{WIS} are simultaneously bounded.
These favorable properties allow us to prove finite-sample concentration inequalities at the price of a (controllable) bias.
\paragraph{Contributions.} 
Our main result is
a new high-probability lower bound on the value of the \ac{WIS} estimator, stated in \cref{sec:bound} and proved in \cref{sec:proofs}.
Moreover, we formulate the off-policy selection problem and propose a systematic, appropriate approach using off-policy evaluation tools. 
In this context, we demonstrate empirically (\cref{sec:experiments}) that our bound achieves the best performance compared to all existing and proposed baselines.

\section{Notation and preliminaries}
\label{sec:setting}
\paragraph{Off-policy evaluation for contextual bandits. }
For the stochastic contextual bandit model, an off-policy evaluation problem is characterized by a triplet
$(P_X, P_{R|X,A}, \pi_b)$,
where $P_X$ is a probability measure over contexts (we assume that the context space is any probability space $(\sX, \Sigma_{\sX}, P_X)$),
$P_{R|X,A}$ is a probability kernel producing the reward distribution given the context $X \in \sX$ and action $A \in [K] = \cbr{1, \ldots, K}$,
and $\pi_b:\sX\to [K]$ is a behaviour policy, that is, a conditional distribution over actions given the context.

The decision maker observes $(S, \pi_b)$, where  $S = \pr{(X_1, A_1, R_1), \ldots, (X_n, A_n, R_n)}$ is a tuple of independent context-action-reward triplets,
obtained by following the behaviour policy $\pi_b$: for all $i\in [n]$, $A_i \sim \pi_b(\cdot |X_i)$, where $X_i\sim P_X$.
The reward $R_i \sim P_{R|X,A}$ is a \emph{bandit feedback} as it only reveals the value of the taken action $A_i$. For example, in a multi-class classification task, the reward may be a (noisy) binary random variable that indicates whether the chosen label is right or not, but the true label is not revealed.
We assume that the rewards are bounded in $[0,1]$
and that $\pi_b$
is known and can be evaluated at any context-action pair. In many applications the behavior policy represents the policy running in the system, which is usually known by the practitioner and can be queried.

A policy $\pi$ is any conditional distribution over the actions and its value $v(\pi)$ is defined by
\begin{equation}
v(\pi) = \int_{\sX} \sum_{a \in [K]} \pi(a | x) r(x,a) \diff P_X(x)
\end{equation}
where $r(x,a) = \int u \diff P_{R|X,A}(u|x,a)$ is the mean reward for a given context-action pair $(x,a)$.
Similarly to $\pi_b$, we assume that any known policy $\pi$ can be evaluated for any context-action pair.
In general, the goal of off-policy evaluation is to return an estimate $\vhest(\pi)$ of the value $v(\pi)$ of some \emph{target policy} with controlled bias and variance.
In contrast,
we are concerned with obtaining a data-dependent scoring function that allows the decision maker to 
choose the highest performing target policy in a set of candidate policies. We call this statistical problem \emph{best-policy selection}.
\paragraph{Best-policy selection. }
\begin{figure}
  \centering
  \includegraphics[width=8.3cm]{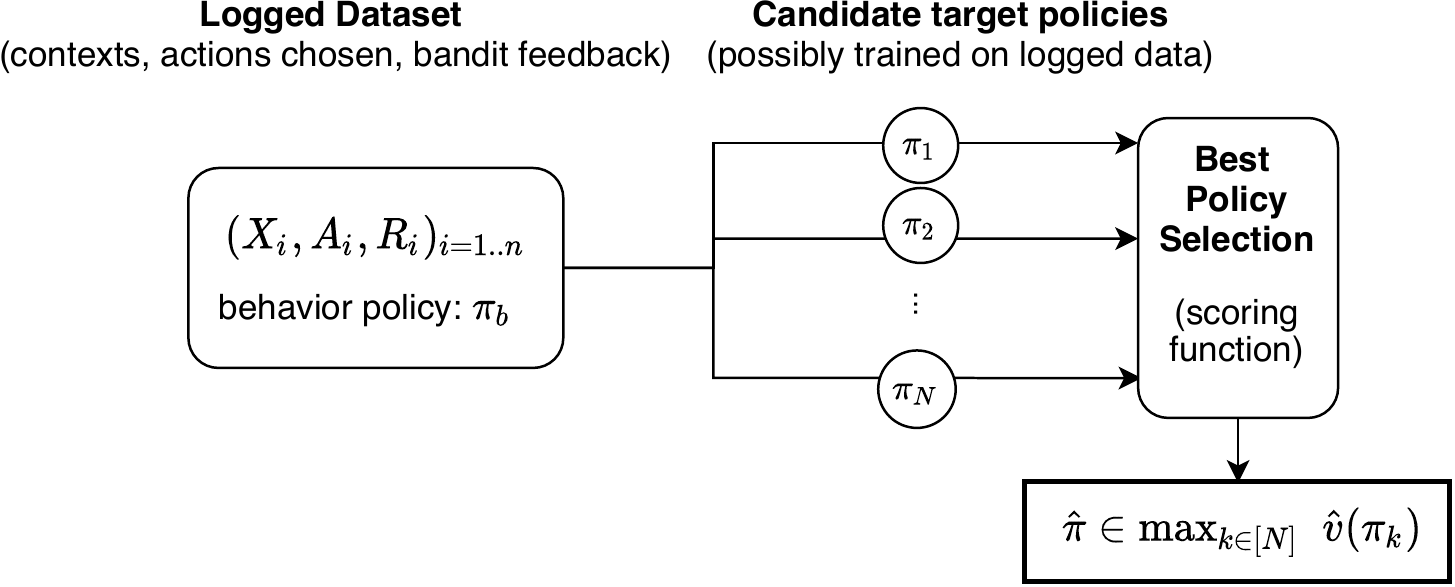}
  \caption{The best-policy selection problem}
  \label{fig:evaluator}
\end{figure}
Given a finite set $\{\pi_1,\ldots, \pi_N\}$ of 
policies, called the \emph{target} policies, our goal is to design a decision algorithm that returns a policy $\hat{\pi}$ with the highest value. We denote $\pi^* \in \max{_{k}}v(\pi_{k})$ a policy with maximum value in the target set, and the objective of the decision maker is to identify $\pi^*$.
The decision maker uses a scoring function $\vhest$ as input and chooses the policy that has the highest score:
$\hat{\pi} \in \argmax_{k} \vhest(\pi_{k})
$ (see Figure~\ref{fig:evaluator} for an illustration of the problem).
The quality of the selected $\hat{\pi}$ is measured by how close its value is to that of the optimal policy $\pi^*$.
The choice of the estimation method used to make the decision is crucial. A naive approach would be to use directly a value estimator (see below for a review of classical methods). However, we demonstrate in this paper that this may lead to dramatic losses. In turn, we propose a scoring function based on a high-probability lower bound on the value.

\begin{remark}
Best-policy selection is related to the problem of off-policy learning, which aims at designing data-based policies that have a high value. In our case, policies are created arbitrarily and possibly trained on the logged dataset, and our decision rule only guarantees that the decision maker returns the best performing one in a given set.
We pose the problem of learning over a discrete class as a decision problem.
\end{remark}

\paragraph{Classical estimation approaches. } \acf{IS} is the most widely known approach to obtain an unbiased value estimator:
$
  \vh\IS(\pi) = \frac1n \sum_{i=1}^n \frac{\pi(A_i|X_i)}{\pi_b(A_i|X_i)} R_i = \frac{1}{n} \sum_{i=1}^n W_iR_i~,
$
which is unbiased since $\E_{\pi_b}[WR]=\E_\pi[R]$.
Each data point is reweighted by its \emph{importance weight}, which is related to the likelihood ratio of the event of selecting the given action
under both policies.
In this paper we focus on a self-normalized version of \ac{IS}, called  \emph{\acf{WIS}} \citep{hesterberg1995weighted}:
\[
  \wh{v}\WIS(\pi) = \frac{\sum_{k=1}^n W_k R_k}{\sum_{i=1}^n W_i}~,
\]
where the sum of the weights (a random variable) is used instead of $n$ for normalization.
While the \ac{WIS} estimator is not unbiased, it is generally regarded as a good estimator.
As a start, we may for example note that it gives values in the $[0,1]$ interval.
To get a sense of the concentration of the \ac{WIS} estimator, it is instrumental to consider the bound that one may get from the Efron-Stein inequality on its variance (see, e.g. \citep{kuzborskij2019efron}): A quick calculation gives
$\text{Var}(\wh{v}\WIS(\pi)) \le n \E[\tilde W_1^2]$ where $\tilde W_1 = W_1/(W_1+\dots+W_n)$.
Note that in the ideal case when the behavior and target policies coincide, $W_i = 1$ and thus $\tilde W_1 =1/n$.
In the Monte-Carlo simulation literature, $n_{\text{eff}} = (\tilde W_1^2 + \dots + \tilde W_n^2)^{-1} \approx
1/(n \E[\tilde W_1^2])$ is known as the effective sample-size \citep{kong1992note,elvira2018rethinking}, and it
provides a quantitative complexity measure of the estimation problem for a target $\pi$: the quality of $\wh{v}\WIS(\pi)$ is as good as if we had used $n_{\text{eff}}$ samples from $\pi$ instead of $n$ samples from $\pi_b$.
However, since $n_{\text{eff}}$ is random, this is not entirely satisfactory as it does not lead to an easy finite-sample concentration bound using standard tools.

\section{Confidence bound for 
\acs{WIS} Estimator}
\label{sec:bound}
\label{sec:sim2}
We now state our main result, a high-probability lower bound on the value of a policy $\pi$ based on the \ac{WIS} estimator.
Even though we use it here as an efficient scoring function for the off-policy selection problem, we believe this result is of independent interest.
\begin{theorem}
  \label{thm:WIS_contextual_sim_lb_value}
  Let $W_i=\pi(A_i|X_i) / \pi_b(A_i|X_i)$ for all $i$, and assume that
  $(W_i, R_i)_{i=1}^n$ are independent from each other, reward $R_i \in \{0,1\}$ a.s. (almost surely) is `one-hot'.\footnote{Reward is `one-hot' when for any context $x$, there exists a unique action $\astar$, such that $r(x, \astar) = 1$, and $r(x, a) = 0$ for $a \neq \astar$.}
  Let $Z = W_1 + \dots + W_n$, $Z\delk = Z - W_k$, and $Z\repk = Z\delk + W_k'$.
  Then, for any $x\ge 2$, with probability at least $1-2e^{-x}$,
  \begin{align*}
    v(\pi)
      &\geq
      \pr{
      B
      \pr{\wh{v}\WIS(\pi) - \eps}_+
      - \sqrt{\frac{x}{2 n}}
    }_+
        \mathrm{where}
  \end{align*}
  \begin{align*}
    \eps &= \sqrt{2 \pr{V\WIS + U\WIS} \pr{x + \frac12 \ln\pr{1 + \frac{V\WIS}{U\WIS}}}}\\
    V\WIS &= \sum_{k=1}^n \E\br{
    \pr{\frac{W_k}{Z} + \frac{W_k'}{Z\repk}}^2
    \bmid W_1^k, X_1^n }~,\\
    U\WIS &= \E[V\WIS \mid X_1^n]\\
    B &= \min\pr{\E\br{\frac{n}{\min_k Z\delk} \bmid X_1^n}^{-1}, 1}
  \end{align*}
  where $(a)_+ = \max(a, 0)$ and $a_1^l=(a_1,\ldots,a_l)$.
\end{theorem}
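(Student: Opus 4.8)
The plan is to decompose the argument into three pieces: a concentration bound for the \ac{WIS} numerator (or rather for a suitably self-normalized quantity) via a semi-empirical Efron-Stein tail inequality, a multiplicative bias control relating $v(\pi)$ to $\E[\wh{v}\WIS(\pi)]$, and finally a standard bounded-differences/Hoeffding step to pass from the conditional statements (given $X_1^n$) to an unconditional high-probability bound. First I would introduce $\wh{v}\WIS(\pi) = \sum_k W_k R_k / Z$ and observe that, conditionally on $X_1^n$, the relevant fluctuations are driven by the randomness in the actions $A_i$ (equivalently the weights $W_i$) and rewards $R_i$. The `one-hot' reward assumption is what makes the multiplicative bias bound work: under it $W_k R_k = W_k \I{A_k = \astar(X_k)}$, so that $\E[W_k R_k \mid X_k] = r(X_k, \astar(X_k)) \cdot \pi(\astar(X_k)\mid X_k)/\pi_b(\cdots)\cdot\pi_b(\cdots)$... more to the point, $R_k$ and $W_k$ become strongly coupled, and one can show $\E[\wh{v}\WIS(\pi) \mid X_1^n] \ge B^{-1} \cdot$(something comparable to $v(\pi)$), or equivalently $v(\pi) \ge B\, \E[\wh v\WIS(\pi)\mid X_1^n]$ after accounting for the $Z\delk$ leave-one-out terms appearing in $B$.

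For the concentration step I would apply an Efron-Stein-type exponential tail inequality (of the kind in \citealp{kuzborskij2019efron}) to $f(W_1,\dots,W_n,R_1,\dots,R_n) = \wh{v}\WIS(\pi)$, conditionally on $X_1^n$. The replace-one perturbation $W_k \mapsto W_k'$ changes $\wh v\WIS$ by an amount controlled by $W_k/Z + W_k'/Z\repk$ (up to the reward factors in $[0,1]$), which is exactly why $V\WIS$ is defined as the conditional sum of squares of these quantities and $U\WIS = \E[V\WIS \mid X_1^n]$ is its expectation — the semi-empirical flavor of the inequality is that the deviation $\eps$ is expressed through the \emph{empirical} variance proxy $V\WIS$ plus its expectation $U\WIS$, with the logarithmic correction $\tfrac12\ln(1 + V\WIS/U\WIS)$ being the usual price for self-bounding/semi-empirical arguments. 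This yields: with probability at least $1 - e^{-x}$ (conditionally on $X_1^n$, hence unconditionally), $\wh v\WIS(\pi) - \eps \le \E[\wh v\WIS(\pi)\mid X_1^n]$, so $(\wh v\WIS(\pi) - \eps)_+ \le \E[\wh v\WIS(\pi) \mid X_1^n]$.

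Combining the two: on the good event, $B(\wh v\WIS(\pi) - \eps)_+ \le B\,\E[\wh v\WIS(\pi)\mid X_1^n] \le v(\pi)$ — but this last inequality only holds \emph{in expectation over $X_1^n$}, not pointwise, since $B$ and the bias relation were derived at the $X_1^n$-conditional level. So the final step is to control the deviation of the conditional-on-$X_1^n$ quantity around its full expectation: I would check that $X_1^n \mapsto B\cdot\E[\wh v\WIS(\pi)\mid X_1^n]$ (or rather the appropriate intermediate quantity whose expectation over $X_1^n$ lower-bounds $v(\pi)$) has bounded differences with increments $O(1/n)$ in each $X_i$ — it is an average-like quantity taking values in $[0,1]$ — and apply McDiarmid/Hoeffding to get, with probability $1 - e^{-x}$, that it exceeds its mean minus $\sqrt{x/(2n)}$. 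A union bound over the two events (hence the $2e^{-x}$ and the requirement $x \ge 2$, which also ensures $e^{-x}$ is small enough for the $\ln$ correction bookkeeping) gives the stated bound, with the outer $(\cdot)_+$ simply because $v(\pi)\ge 0$ trivially.

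The main obstacle I expect is the multiplicative bias control — establishing that $v(\pi) \ge B\, \E[\wh v\WIS(\pi)\mid X_1^n]$ with $B = \min(\E[n/\min_k Z\delk \mid X_1^n]^{-1}, 1)$. The difficulty is that $\wh v\WIS$ is a ratio of dependent random variables, so its conditional expectation does not factor; one must lower-bound $\E[\sum_k W_k R_k / Z]$ by relating $1/Z$ to the leave-one-out sums $1/Z\delk$ (which are independent of $W_k, R_k$), exploit the one-hot structure to identify $\E[W_k R_k \mid \text{rest}]$ cleanly, and then apply Jensen to move the expectation inside the reciprocal, producing the $\E[\cdot]^{-1}$ form of $B$. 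The $\min(\cdot,1)$ and the $(\cdot)_+$ operations are there to keep the bound valid (and non-vacuous) in regimes where these estimates would otherwise overshoot.
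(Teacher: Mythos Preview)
Your three-part decomposition and the concentration step via the Efron--Stein tail inequality are correct and match the paper's argument. There are, however, two genuine issues in how you assemble the bias and context-concentration pieces.

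First, the direction of the bias control is reversed in your sketch. To lower-bound $v(\pi)$ in terms of $\wh{v}\WIS$, you need an \emph{upper} bound on $\E[\wh{v}\WIS(\pi)\mid X_1^n]$ (the concern for a lower confidence bound is that the estimator may overshoot), not a lower bound; your ``$\E[\wh{v}\WIS]\ge B^{-1}\cdot(\ldots)$'' and ``lower-bound $\E[\sum_k W_kR_k/Z]$'' both point the wrong way. The paper's one-hot identity (Lemma~\ref{lem:one_hot_id}) gives
\[
\E\bigl[\wh{v}\WIS(\pi)\,\big|\, X_1^n\bigr]=\sum_{i=1}^n v(\pi\mid X_i)\,\E\Bigl[\frac{1}{w_i^\star+Z^{\backslash i}}\,\Big|\,X_1^n\Bigr]
\le \Bigl(\sum_{i} v(\pi\mid X_i)\Bigr)\,\E\Bigl[\frac{1}{\min_k Z^{\backslash k}}\,\Big|\,X_1^n\Bigr],
\]
and this holds \emph{pointwise} in $X_1^n$, not merely in expectation. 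No Jensen is involved; the $\E[\cdot]^{-1}$ in $B$ appears only at the very end, from dividing the combined inequality through by $C=\E[n/\min_k Z^{\backslash k}\mid X_1^n]$.

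Second, precisely because the bias upper bound is pointwise, the context step is much simpler than your McDiarmid proposal: the paper applies Hoeffding directly to the i.i.d.\ sum $\tfrac{1}{n}\sum_i v(\pi\mid X_i)\in[0,1]$, yielding $\sum_i v(\pi\mid X_i)\le n\,v(\pi)+\sqrt{nx/2}$. Your McDiarmid on $X_1^n\mapsto B\,\E[\wh{v}\WIS\mid X_1^n]$ would require $O(1/n)$ bounded differences for a quantity where $B$ depends on all contexts through conditional weight expectations, and there is no reason this should hold. Combining the pointwise bias bound, Hoeffding on $\sum_i v(\pi\mid X_i)$, and the concentration $\wh{v}\WIS-\eps\le\E[\wh{v}\WIS\mid X_1^n]$ via a union bound gives $\wh{v}\WIS-\eps\le C\,v(\pi)+C\sqrt{x/(2n)}$, and the stated result follows by dividing by $C$ and invoking $v(\pi)\ge 0$.
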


\textbf{Comments. }
This bound essentially depends on an
\emph{Efron-Stein estimate} of the variance of \ac{WIS}, $V\WIS$, while always conditioning on the contexts.
It can be qualified of \emph{semi-empirical} as it relies on taking expectations over the weights.
Thus, its computability relies on our ability to compute those expectations, which requires knowing $\pi_b$ and being able to evaluate it on any context-action pair.
Remarkably, $V\WIS$ can be computed without \emph{any} knowledge of the context distribution $P_X$ or the reward probability kernel $P_{R|X,A}$.
Notably, the bias $B$ quantifying policy mismatch appears as a \emph{multiplicative} term in the bound, rather than additive. Indeed, when $\pi$ coincides with $\pi_b$ exactly, $B=1$ and the estimator suffers no bias. Conversely, the more the mismatch, the larger the bias is.
Similarly to $V\WIS$,
the bias can be computed since the distribution of the importance weights
(conditioned on the contexts)
is known.
Finally, as discussed earlier, it is worth noting that the behavior of $V\WIS$ is closely related to that of the effective sample size $n_\text{eff}$: when the target and the behavior policies coincide, we have $V\WIS = \scO(1/n)$, while when policies are in full mismatch (e.g.\ Dirac deltas on two different actions) we have $V\WIS =\Theta(1)$.
In the intermediate regime of a partial mismatch, for instance when the distribution of importance weights is heavy-tailed, $V\WIS = o(1)$ and thus $\eps \to 0$ as $n \to \infty$.

A `one-hot' assumption on rewards might appear strong at first glance, however it is not unrealistic because it corresponds to the multiclass classification with bandit feedback.

Theorem~1 generalizes and strengthens a similar result on the \ac{WIS} estimator developed in~\cite[Theorem~7]{kuzborskij2019efron}. Compared to the latter work, there are two key differences:
First, our bound is designed for the contextual bandit setting as opposed to the context-free setting considered in~\citep{kuzborskij2019efron}. Second, our bound is significantly tighter, due to a novel multiplicative control of the bias of the estimator.
\textbf{Computation of the bound. }
Although the bound can be computed exactly, computing expectations can be expensive for large action spaces (exponential in $K$).
In this paper we compute an approximation to all expectations in $V\WIS$, $U\WIS$, and $B$ using Monte-Carlo simulations, as presented in  \cref{alg:v_WIS2}, and its Python implementation is presented in \cref{sec:listing}.
The algorithm simply updates averages, $\tilde{V}\WIS_t$, $\tilde{U}\WIS_t$, and $\tilde{B}_t$,
over rounds $t=1,2,\ldots$,
where in each round, a fresh tuple of weights $(W_i')_i$ is sampled from $\pi_b(\cdot | X_1) \times \dots \times \pi_b(\cdot | X_n)$.
The simulation needs to be run until we obtain good enough estimates for the quantities $V\WIS$,$U\WIS$, and $B$. This can be checked via standard empirical concentration bounds on the estimation errors, such as the empirical Bernstein's inequality~\citep{mnih2008empirical,maurer2009empirical}. For example, denoting the sample variance of 
$\tilde{V}\WIS_t$ by $\widehat{\Var}(\tilde{V}\WIS_t)$, stopping the simulation when
\begin{equation}
    \label{eq:convergence_V}
     \sqrt{2 \widehat{\Var}(\tilde{V}\WIS_t)/t} + 14x/(3t -3) \le \ve
\end{equation}
holds, guarantees that
the simulation error $|V\WIS - \tilde{V}\WIS_t|$ is bounded by $\ve$ w.p.\ at least $1-e^{-x}$ (the stopping conditions for the other quantities are deferred to \cref{sec:appendix:stopping}).\footnote{The parameter $\ve$ needs to be specified by the user. A typical choice is $\ve = 1/n$ since $V\WIS \geq 1/n$ a.s.}
Then, denoting by $T_{\ve}$ the number of iterations until stopping, we can use \cref{thm:WIS_contextual_sim_lb_value} with $V\WIS$ replaced by $\tilde{V}\WIS_{T_{\ve}} + \ve$.
Of course, each application of the convergence tests
needs to be combined with \cref{thm:WIS_contextual_sim_lb_value} through a union bound: for example, verifying the convergence for all three variables every $2^k$ steps for $k = 1,2,\ldots$ means that the final bound on the value holds w.p.\ at least $1-(2+3 \log_2(T_{\ve})) e^{-x}$.

\begin{algorithm}[t]
  \caption{Computation of estimates for a variance proxy $V\WIS, U\WIS$. Scalar operations are understood pointwise when applied to vectors.}
  \begin{algorithmic}[1]
    \Require{observed context-action pairs $S = (X_i, A_i))_{i=1}^n$, behavior / target policy $\pi_b$  / $\pi$
    }
    \Ensure{Estimate of a variance proxy $\tilde{V}\WIS$ to be used in~\cref{thm:WIS_contextual_sim_lb_value}}
    \State $\bW \gets \br{\frac{\pi(A_1 \pmid X_1)}{\pi_b(A_1 \pmid X_1)}, \ldots, \frac{\pi(A_n \pmid X_n)}{\pi_b(A_n \pmid X_n)}}$
    \State $\bbarW \gets \br{W_1, W_1 + W_2, \ldots,
    W_1 + \dots + W_n}$
    \State $\bV, \bU \gets [0, \ldots, 0] \in \reals^n$, $\tilde{Z}^{\text{inv}} \gets 0$
    \State $t \gets 1$
    \Repeat
    \State $\bW' \gets \br{\frac{\pi(A_1' \pmid X_1)}{\pi_b(A_1' \pmid X_1)}, \ldots, \frac{\pi(A_n' \pmid X_n)}{\pi_b(A_n' \pmid X_n)}}$
    \Statex \hspace{4mm} where $\bA' \sim \pi_b(\cdot \pmid X_1) \times \dots \times \pi_b(\cdot \pmid X_n)$
    \State Sample $\bW''$, $\bW'''$ as independent copies of $\bW'$
    \State $\bbarW^{'\text{rev}} \gets \br{\sum_{i=1}^n W_i', \sum_{i=2}^n W_i', \ldots, W_n'}$
    \State $\bZ \gets \bbarW_{1:n-1} + \bbarW^{'\text{rev}}_{2:n}$ \Comment{\parbox{3cm}{Partially simulated sums of weights}}
    \State $\bV \gets (1 - \frac1t)\bV + \frac1t \pr{\frac{\bW}{\bZ}}^2$
    \State $\bU \gets (1 - \frac1t)\bU + \frac1t \pr{\frac{\bW''}{\sum_j W_j''}}^2$
    \State $\tilde{Z}^{\text{inv}}_t \gets (1 - \frac1t) \tilde{Z}^{\text{inv}}_t + \frac1t \cdot \frac{1}{\min_k \sum_{j \neq k} W_j'''}$
    \State $\tilde{V}\WIS_t \gets \bV \boldsymbol{1}$, $\tilde{U}\WIS_t \gets \bU \boldsymbol{1}$
    \State $t \gets t + 1$
    \Until{Convergence of $\tilde{V}\WIS_t$, $\tilde{U}\WIS_t$, and $\tilde{Z}_t$\\ \qquad \ \ (see main text and \cref{eq:convergence_V})}
    \State $\tilde{B}_t \gets \min\cbr{1, \frac{1}{n \tilde{Z}^{\text{inv}}_t}}$
    \State \textbf{return} $\tilde{V}\WIS_t$, $\tilde{U}\WIS_t$, $\tilde{B}_t$
  \end{algorithmic}
  \label{alg:v_WIS2}
\end{algorithm}

\section{Proof of \cref{thm:WIS_contextual_sim_lb_value}}

\label{sec:proofs}
We start with the decomposition of $v(\pi) - \wh{v}\WIS(\pi)$ as
\begin{align*}  
  \underbrace{
  v(\pi) - \E\br{v(\pi) \,|\, X_1^n}
  }_{\text{Concentration of contexts}}
  &+
  \underbrace{
  \E\br{v(\pi) \,|\, X_1^n} - \E\br{\wh{v}\WIS(\pi) \mid X_1^n}
  }_{\text{Bias}} \\
  & +
  \underbrace{
  \E\br{\wh{v}\WIS(\pi) \mid X_1^n} - \wh{v}\WIS(\pi)
  }_{\text{Concentration}}~.
\end{align*}
Each paragraph below focuses respectively on the \emph{Concentration}, \emph{Bias} and \emph{Concentration of contexts} term.
\paragraph{Concentration.}
We use a conditioned form of the concentration inequality of \citet[Theorem 1]{kuzborskij2019efron} that we restate below without a proof.\footnote{The proof of \cite{kuzborskij2019efron} can be applied almost exactly with minimal, trivial changes.}
The form of the result is slightly different from its original version to better suit our needs: the version stated here uses a filtration and eventually we use this with $\sF_0$ defined as the $\sigma$-algebra generated by the contexts $X_1^n$.
\begin{theorem}
  \label{thm:concentration}
  Let $(\sF_i)_{i=0}^n$ be a filtration and let $S = \pr{Y_1, \ldots, Y_n}$ be a sequence of random variables such that the components of $S$ are independent given $\sF_0$ and
  $(Y_k)_k$ is $(\sF_k)_k$-adapted.
  Then, for any $\conf \geq 2$ and $y > 0$ we have with probability at least $1-e^{-\conf}$, 
  \begin{align*}
      \lefteqn{|f(S) - \E[f(S)\mid \sF_0]|} \\
 & \leq \sqrt{2 (V + y) \pr{x + \ln\pr{\sqrt{1 + V / y}}}}
  \end{align*}
  where $V = \E\br{\sum_{k=1}^n (f(S) - f(S\repk))^2 \bmid Y_1, \ldots, Y_k}$ with $S\repk$ being $S$ with its $k$th element replaced with an independent copy of $Y_k$.
\end{theorem}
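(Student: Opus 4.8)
My strategy is to split the argument into two parts: (i) an \emph{exponential} Efron--Stein inequality showing that $\exp\!\pr{\lambda Z - \tfrac{\lambda^2}{2}V}$ is an $\sF_0$-conditional sub-solution for every real $\lambda$, and (ii) a \emph{method of mixtures} that converts this one-parameter family of exponential inequalities into the single self-normalized deviation bound of the statement. Throughout I write $Z = f(S) - \E[f(S)\mid\sF_0]$ and $\sF_k = \sigma(\sF_0, Y_1,\ldots,Y_k)$, so that $(\sF_k)_k$ is the adapted filtration, $Z$ is $\sF_n$-measurable and $\sF_0$-centred, and the conditioning ``$Y_1,\ldots,Y_k$'' in the definition of $V$ is read as conditioning on $\sF_k$, i.e.\ $V = \sum_{k=1}^n v_k$ with $v_k = \E[(f(S)-f(S\repk))^2\mid\sF_k]$.

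\textbf{Step (i): the exponential inequality.} I would decompose $Z = \sum_{k=1}^n D_k$ into martingale differences $D_k = \E[f(S)\mid\sF_k] - \E[f(S)\mid\sF_{k-1}]$. Because the coordinates of $S$ are independent given $\sF_0$, integrating out a fresh copy $Y_k'$ of $Y_k$ yields the resampling identity $\E[f(S\repk)\mid\sF_k] = \E[f(S)\mid\sF_{k-1}]$, hence $D_k = \E[f(S) - f(S\repk)\mid\sF_k]$. I would then control each increment's conditional log-moment-generating function sub-Gaussianly by its replace-one conditional second moment, i.e.\ establish $\E[\exp(\lambda D_k - \tfrac{\lambda^2}{2}v_k)\mid\sF_{k-1}]\le 1$. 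Telescoping these increments along the filtration and using $V=\sum_k v_k$ gives the target supermartingale inequality $\E\br{\exp\pr{\lambda Z - \tfrac{\lambda^2}{2}V}\bmid\sF_0}\le 1$, valid for all $\lambda\in\R$ by the sign-symmetry of the squared differences; this is exactly the ingredient inherited from \citet[Theorem 1]{kuzborskij2019efron}.

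\textbf{Step (ii): the method of mixtures.} With $M_\lambda \defeq \exp(\lambda Z - \tfrac{\lambda^2}{2}V)$ satisfying $\E[M_\lambda\mid\sF_0]\le 1$ uniformly in $\lambda$, I would mix against a centred Gaussian prior $\lambda\sim\sN(0,1/y)$ on the tilting parameter and apply Tonelli to obtain $\E[\bar M\mid\sF_0]\le 1$ for $\bar M \defeq \int M_\lambda\,\diff\sN(0,1/y)(\lambda)$. The Gaussian integral is explicit, $\bar M = \sqrt{y/(V+y)}\,\exp\!\pr{Z^2/(2(V+y))}$, and Markov's inequality gives $\P(\bar M \ge e^{x}\mid\sF_0)\le e^{-x}$. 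On the complementary event, taking logarithms and rearranging yields $Z^2 < 2(V+y)\pr{x + \tfrac12\ln(1+V/y)}$, which is precisely the claimed bound since $\tfrac12\ln(1+V/y) = \ln\sqrt{1+V/y}$; moreover the single event $\{\bar M\ge e^x\}$ already captures both signs of $Z$, so the overall probability is $1-e^{-x}$. The floor $y>0$ is what makes the prior proper and simultaneously absorbs the randomness of $V$ into the logarithmic correction.

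\textbf{Main obstacle.} The hard part is Step (i): establishing the exponential inequality with a \emph{data-dependent, adapted} variance proxy $V$ rather than a deterministic one. The delicate points are the resampling representation of the increments (which relies essentially on conditional independence given $\sF_0$) and the per-step sub-Gaussian control $\E[\exp(\lambda D_k - \tfrac{\lambda^2}{2}v_k)\mid\sF_{k-1}]\le 1$, together with the measurability bookkeeping that $v_k$ is $\sF_k$-adapted while the supermartingale property is verified against $\sF_{k-1}$. By contrast, once the exponential inequality is in hand, Step (ii) is a clean, essentially mechanical Gaussian-mixture computation, and the exact form $\sqrt{2(V+y)(x+\ln\sqrt{1+V/y})}$ falls out automatically from the choice of prior variance $1/y$.
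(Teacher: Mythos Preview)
The paper does not actually prove this theorem: it explicitly restates it from \citet[Theorem 1]{kuzborskij2019efron} and notes that ``the proof \ldots can be applied almost exactly with minimal, trivial changes.'' Your sketch therefore already goes further than the paper, and it correctly identifies the two-step architecture of the cited argument (exponential Efron--Stein supermartingale plus a Gaussian method-of-mixtures). Step~(ii) is clean and exactly right: mixing $\exp(\lambda Z-\tfrac{\lambda^2}{2}V)$ against $\lambda\sim\sN(0,1/y)$ gives $\bar M=\sqrt{y/(V+y)}\exp\!\big(Z^2/(2(V+y))\big)$ with $\E[\bar M\mid\sF_0]\le 1$, and Markov then yields the stated bound with a single failure event covering both signs.

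The one place your outline is thin is the per-step claim in Step~(i), namely $\E\!\big[\exp(\lambda D_k-\tfrac{\lambda^2}{2}v_k)\,\big|\,\sF_{k-1}\big]\le 1$, which you assert but do not justify. This is not a generic sub-Gaussian fact (because $v_k$ is $\sF_k$-adapted, not $\sF_{k-1}$-predictable), and it needs the specific structure of $v_k$. The missing ingredient is the structural lower bound $v_k \ge D_k^2 + \E[D_k^2\mid\sF_{k-1}]$: writing $h(y)=\E[f(Y_1^{k-1},y,Y_{k+1}^n)\mid\sF_{k-1}]$, Jensen on the inner expectation over $Y_{k+1}^n$ gives $v_k \ge \E_{Y_k'}[(h(Y_k)-h(Y_k'))^2] = D_k^2 + \Var(h(Y_k)\mid\sF_{k-1})$, and the last variance equals $\E[D_k^2\mid\sF_{k-1}]$. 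Combining this with the elementary scalar inequality $e^{u-u^2/2}\le 1+u+u^2/2$ (applied with $u=\lambda D_k$) yields
\[
\E\!\Big[e^{\lambda D_k-\tfrac{\lambda^2}{2}v_k}\,\Big|\,\sF_{k-1}\Big]
\le e^{-\tfrac{\lambda^2}{2}\E[D_k^2\mid\sF_{k-1}]}\Big(1+\tfrac{\lambda^2}{2}\E[D_k^2\mid\sF_{k-1}]\Big)\le 1,
\]
after which your telescoping argument goes through. With this addition your proof plan is complete and matches the approach of the cited reference; there is nothing further in the present paper to compare against.
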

We apply the inequality with  $f = \vh\WIS$, $S = \pr{(W_1, R_1), \ldots, (W_n, R_n)}$
and $\sF_k$ being the $\sigma$-algebra generated by $X_1^k$; then $((W_k,R_k))_k$ is $(\sF_k)_k$-adapted, and taking $y=\E[V\WIS | X_1^n]$, we get that for any $x\geq 2$, w.p.\ at least $1-e^{-x}$,
\begin{equation}
\label{eq:eps_concentration}
 \E[\wh{v}\WIS(\pi) \mid X_1^n] - \wh{v}\WIS(\pi) \geq - \eps
\end{equation}
where $\eps$ is defined in  \cref{thm:WIS_contextual_sim_lb_value}, and we also used that
$ V \leq V\WIS$ (see \cref{prop:V_WIS_loo} in~\cref{sec:proof_from_the_main_text}).
\paragraph{Bias.}
Now we turn our attention to the \emph{bias} term.\footnote{An original version of the paper contained a mistake in the proof of the bias term. Here we present a corrected version with all experimental results remaining unchanged --- see \cref{sec:errata} for details.}
Let $v(\pi\pmid x)$ denote the value of a policy given a fixed context $x \in \sX$:
$
  v(\pi \pmid x) = \sum_{a \in [K]} \pi(a \pmid x) r(x,a)
$.
Then, since $A_k \sim \pi_b(\cdot \pmid X_k)$,
\begin{align}
  &\E\!\br{\sum_{k=1}^n W_k R_k \!\bmid \!X_1^n}
  =
    \sum_{k=1}^n \E\!\br{\frac{\pi(A_k \pmid X_k)}{\pi_b(A_k \pmid X_k)} \, R_k \!\bmid\! X_k}\\
  &=
    \sum_{k=1}^n \sum_{a \in [K]}\! \pi(a \pmid X_k) r(X_k,a)
  =
    \sum_{k=1}^n v(\pi \pmid X_k)~. \label{eq:bias}
\end{align}
To relate the above to the expectation of an \ac{WIS} estimator to the value we use the following lemma (proof is deferred to \cref{sec:proof_from_the_main_text}).
\begin{lemma}
  \label{lem:one_hot_id}
  Suppose that reward function
  is `one-hot', that is, for any context $x$, $r(x, \astar)=1$ for some unique action $\astar$ and $r(x, a)=0$ for all $a \neq \astar$.
  Denote $\wstar_i = \frac{\pi(\astar \mid X_i)}{\pi_b(\astar \mid X_i)}$.
  Then,
\begin{align*}
  \E\br{\wh{v}\WIS(\pi) \bmid X_1^n}
  =
  \sum_{i=1}^n v(\pi \pmid X_i) \E\br{\frac{1}{w^{\star}_i + Z\deli} \bmid X_1^n}~.
\end{align*}
\end{lemma}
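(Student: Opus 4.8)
The plan is to exploit the one-hot structure to replace the random reward $R_k$ by a deterministic indicator, and then evaluate the conditional expectation by peeling off one coordinate at a time via the tower rule.

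First I would note that under the one-hot assumption, conditionally on $(X_k,A_k)$ the reward $R_k$ has mean $r(X_k,A_k)\in\{0,1\}$; since $R_k\in\{0,1\}$ a.s., this forces $R_k=r(X_k,A_k)=\ind{A_k=\astar(X_k)}$ to be deterministic, where $\astar(x)$ denotes the unique optimal action for context $x$. Hence $W_kR_k=W_k\ind{A_k=\astar(X_k)}$, and on the event $\{A_k=\astar(X_k)\}$ the weight takes the value $W_k=\pi(\astar(X_k)\mid X_k)/\pi_b(\astar(X_k)\mid X_k)=\wstar_k$, which is measurable with respect to $X_k$ alone. So the numerator of $\wh{v}\WIS(\pi)$ equals $\sum_k\wstar_k\ind{A_k=\astar(X_k)}$, while its denominator is $Z=W_k+Z\delk$.

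Next I would write, by linearity, $\E[\wh{v}\WIS(\pi)\mid X_1^n]=\sum_k\E\br{\wstar_k\ind{A_k=\astar(X_k)}/(W_k+Z\delk)\mid X_1^n}$, and for each fixed $k$ apply the tower rule by conditioning additionally on $(A_j)_{j\neq k}$. The crucial observation is that on the event $\{A_k=\astar(X_k)\}$ --- the only event on which the summand is nonzero --- the denominator is exactly $\wstar_k+Z\delk$, which is $(X_1^n,(A_j)_{j\neq k})$-measurable; off this event the summand vanishes. Therefore the inner conditional expectation equals $\frac{\wstar_k}{\wstar_k+Z\delk}\,\P(A_k=\astar(X_k)\mid X_k)=\frac{\wstar_k\,\pi_b(\astar(X_k)\mid X_k)}{\wstar_k+Z\delk}=\frac{\pi(\astar(X_k)\mid X_k)}{\wstar_k+Z\delk}$, using $A_k\sim\pi_b(\cdot\mid X_k)$ and the definition of $\wstar_k$. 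By one-hotness, $v(\pi\mid X_k)=\sum_a\pi(a\mid X_k)r(X_k,a)=\pi(\astar(X_k)\mid X_k)$, so this is $v(\pi\mid X_k)/(\wstar_k+Z\delk)$.

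Finally, taking the outer expectation over $(A_j)_{j\neq k}$ given $X_1^n$ and noting that $v(\pi\mid X_k)$ and $\wstar_k$ are $X_1^n$-measurable while only $Z\delk=\sum_{j\neq k}W_j$ remains random, this becomes $v(\pi\mid X_k)\,\E[1/(\wstar_k+Z\delk)\mid X_1^n]$; summing over $k$ gives the claim. The one step requiring care is the bookkeeping of the conditioning around the denominator --- and this is precisely where the one-hot assumption does the real work, since it is what makes $W_k$ take the fixed value $\wstar_k$ on the event where the summand is nonzero, allowing us to pull the denominator out of the expectation over $A_k$ despite numerator and denominator both depending on $A_k$. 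I would also implicitly invoke the standard absolute-continuity condition (so that $\pi_b(\astar(X_k)\mid X_k)>0$ whenever $\pi(\astar(X_k)\mid X_k)>0$, making $\wstar_k$ well defined) together with the usual convention $\wh{v}\WIS=0$ on the null event $\{Z=0\}$.
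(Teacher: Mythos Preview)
Your proposal is correct and follows essentially the same route as the paper's proof: both isolate the $i$-th summand, use the one-hot property to collapse the expectation over $A_i$ to the single term $A_i=\astar(X_i)$ (where the weight becomes the $X_1^n$-measurable constant $\wstar_i$), and then simplify $\wstar_i\,\pi_b(\astar\mid X_i)=\pi(\astar\mid X_i)=v(\pi\mid X_i)$; the paper writes this as an explicit sum over action tuples while you phrase it via the tower rule, but the content is identical. One minor point: the lemma's hypotheses do not include $R_k\in\{0,1\}$ a.s., so rather than arguing that $R_k$ is deterministic you can simply integrate out $R_k$ first via its conditional mean $r(X_k,A_k)=\ind{A_k=\astar(X_k)}$, which is exactly what the paper does implicitly and leaves the rest of your argument unchanged.
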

And this implies
\begin{align*}
  \E\br{\wh{v}\WIS(\pi) \bmid X_1^n}
  \leq
  \pr{\sum_{i=1}^n v(\pi \pmid X_i)} \E\br{\frac{1}{\min_i Z\deli} \bmid X_1^n}~.
\end{align*}

\paragraph{Concentration of contexts.}
All that is left to do is to account for the randomness of contexts.
Since $(v(\pi \pmid X_k))_{k \in [n]}$ are independent and they take values in the range $[0,1]$, 
by Hoeffding's inequality we have for $x \geq 0$, w.p.\ at least $1-e^{-x},$ that
$
  \sum_{k=1}^n v(\pi \pmid X_k) - n v(\pi) \leq \sqrt{n x / 2}~.
$
Hence we bound the bias term as:
{\small
\begin{align*}
  \lefteqn{v(\pi) - \E\br{\wh{v}\WIS(\pi) \mid X_1^n}} \\
   &\geq
    v(\pi) - \E\br{\frac{1}{\min_k Z\delk} \bmid X_1^n} \sum_{k=1}^n v(\pi \pmid X_k)\\
      &\geq
    v(\pi) \pr{1 - \E\br{ \frac{n}{\min_k Z\delk} \bmid X_1^n}} - \E\br{\frac{\sqrt{n x / 2}}{\min_k Z\delk} \bmid X_1^n}
\end{align*}
}
Combining the bias bound above with the concentration term \cref{eq:eps_concentration} through the union bound we get, w.p.\ at least $1-2 e^{-\conf}$, that
\begin{align*}
  v(\pi) - \wh{v}\WIS(\pi)
  &\geq
    v(\pi) \pr{1 - \E\br{ \frac{n}{\min_k Z\delk} \bmid X_1^n}}\\
    &- \E\br{\frac{\sqrt{n x / 2}}{\min_k Z\delk} \bmid X_1^n} - \eps~.
\end{align*}
Noticing that $v(\pi) \geq 0$ and rearranging gives
\begin{align*}
  v(\pi)
  &\geq
    \pr{
    \underbrace{\E\br{ \frac{n}{\min_k Z\delk} \bmid X_1^n}^{-1}}_{B'}
    \pr{\wh{v}\WIS(\pi) - \eps}
    - \sqrt{\frac{x}{2n}}
    }_+ \\
  & \geq
    \pr{
    \min\cbr{1, B'}
    \pr{\wh{v}\WIS(\pi) - \eps}_+
    - \sqrt{\frac{x}{2n}}
    }_+
\end{align*}%
\noindent because
$(a b - c)_+ = (a (b)_+ - c)_+ \geq (a' (b)_+ - c)_+$ for $a \geq a' \geq 0, b \in \reals, c \geq 0$.
The proof of \cref{thm:WIS_contextual_sim_lb_value} is now complete.

\section{Related work and baseline confidence intervals}
\label{sec:baselines}

The benefits of using confidence bounds in off-policy evaluation and learning has been recognized in a number of works~\citep{BoPe13,thomas2015high_a,swaminathan2015batch,swaminathan2015self}.
Arguably, the standard tool in off-policy evaluation is the \acs{IS} estimator that originates from the sampling literature~\citep{owen2013book}.
However, it has a high variance when the weights have a heavy-tailed distribution. There has been many attempts to stabilize this estimator, including truncation \citep{ionides2008truncated,thomas2015high_b, BoPe13} or smoothing \citep{vehtari2015pareto}.
These more stable estimators admit confidence intervals manifesting a bias-variance trade-off, but tuning the truncation or smoothing process is a hard problem on its own, lacking good practical solutions, as discussed in \citep{gilotte2018offline}.
One approach is to tune the level of truncation depending on the data (e.g.\ by looking at importance weight quantiles) \citep{BoPe13}, however, this does not guarantee that the resulting estimator is unbiased.
Another popular technique is tuning the truncation level on a hold-out sample \citep{thomas2015high_a,swaminathan2015counterfactual}.
A closely related approach to truncation is smoothing, and in this paper we compare against an asymptotically unbiased, smoothed version of \acs{IS} (tuned in a data-agnostic way), described in \cref{sec:baseline_bounds}.

In contrast, \acs{WIS}, has a low variance in practice and good concentration properties even when the distribution of the weights is (moderately) heavy-tailed.
Asymptotic concentration results were already mentioned by \cite{hesterberg1995weighted} and polynomial (low-probability) finite-time bounds were explored by \cite{metelli2018policy}.
An alternative source of variance in \ac{IS} is due to the randomness of the rewards; a popular method mitigating its effect is the so-called \ac{DR} estimator~\citep{dudik2014doubly}, further improved by \cite{farajtabar2018more}, and stabilized by truncations in \citep{wang2017optimal,su2019cab, su2019doubly}. \ac{DR} and \ac{IS} can be more generally and optimally mixed as in \citep{kallus2018balanced} who prove asymptotic MSE error bounds.
Unfortunately, to the best of our knowledge, many of those works present ``sanity-check'' bounds (e.g., verifying asymptotic lack of bias), which are practically uncomputable for problems like ours.
A notable exception is the family of bounds with the aforementioned truncation: in this paper, for completeness, we present finite-sample confidence bounds for such stabilized estimators (see Section~\ref{sec:baseline_bounds}), which we use as additional baselines.

Finally, a somewhat different approach compared to all of the above was recently explored in \citep{karampatziakis2019empirical} based on the \emph{empirical likelihood} approach.
Their estimator and the corresponding lower bound on the value relies on solving a convex optimization problem.
However, in contrast to the above works, their bound only holds asymptotically, that is, in probability as $n \to \infty$.
In the following section we discuss it in detail and compare in the forthcoming experiments.

\subsection{Baseline confidence intervals}
\label{sec:baseline_bounds}
We derive high-probability lower bounds for the stabilized \ac{IS}-$\lambda$ and \ac{DR}-$\lambda$ estimators using the same technique as above.
Proofs for all statements in this section are given in \cref{sec:additional_proofs}.

\textbf{\ac{IS}}-$\lambda$.
Truncation of importance weights is a standard stabilization technique used to bound moments of the \ac{IS} estimator \citep{BoPe13,swaminathan2015self}.
Here we focus on a closely related (albeit theoretically more appealing due to its smoothness) $\lambda$-\emph{corrected} version of \ac{IS}, $\vh\ISl$, where instead of truncation we add a corrective parameter to the denominator of the importance weight, that is,
$
W_i^{\lambda} = {\pi(A_k|X_k)}/\pr{\pi_b(A_k|X_k) + \lambda}
$
for some $\lambda > 0$ (note that $W_i^{\lambda} \leq \min(W_i, 1/\lambda)$).
This ensures that weights are bounded, and setting $\lambda = 1 / \sqrt{n}$ ensures that the estimator is asymptotically unbiased.
Exploiting this fact, we prove the following confidence bound based on the empirical Bernstein's inequality \citep{maurer2009empirical} (also presented in \cref{sec:IS_lambda} for completeness).
\begin{prop}
  \label{prop:IS_lambda_bernstein}
  For the \ac{IS}-$\lambda$ estimator we have with probability at least $1-3e^{-\conf}$, for $\conf > 0$,
  \begin{align*}
    v(\pi)
    &\geq
      \wh{v}\ISl(\pi)
      -\sqrt{\frac{2x}{n} \, \Var\ISl(X_1^n)}
      - \frac{7 x}{3 \lambda (n-1)} \\
      &- \Bias\ISl(X_1^n)
      - \sqrt{\frac{x}{2 n}}
  \end{align*}
  where $\Var\ISl$ and $\Bias\ISl$ are, respectively, the empirical variance and bias of the estimator, defined in the full statement of the proposition in \cref{sec:IS_lambda}.
\end{prop}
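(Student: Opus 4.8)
The plan is to follow exactly the template used for \cref{thm:WIS_contextual_sim_lb_value} --- decompose $v(\pi)-\wh{v}\ISl(\pi)$ into a bias part and a concentration part --- but to exploit that the $\lambda$-corrected weights are bounded: since $W_i^\lambda=\pi(A_i|X_i)/(\pi_b(A_i|X_i)+\lambda)\le 1/\lambda$, the summands $W_i^\lambda R_i$ lie in $[0,1/\lambda]$, so the empirical Bernstein inequality can be applied directly in place of the Efron--Stein argument.

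First I would write out the mean of the estimator. Since $(X_i,A_i,R_i)_{i=1}^n$ are i.i.d.\ with $A_i\sim\pi_b(\cdot\mid X_i)$,
\[
  \E\!\left[\wh{v}\ISl(\pi)\right] = \E\!\left[\tfrac{\pi(A\mid X)}{\pi_b(A\mid X)+\lambda}\,R\right] = \E_X\!\left[\sum_{a\in[K]}\tfrac{\pi_b(a\mid X)}{\pi_b(a\mid X)+\lambda}\,\pi(a\mid X)\,r(X,a)\right],
\]
hence
\[
  v(\pi) - \E\!\left[\wh{v}\ISl(\pi)\right] = \E_X\!\left[\sum_{a\in[K]}\tfrac{\lambda}{\pi_b(a\mid X)+\lambda}\,\pi(a\mid X)\,r(X,a)\right] \;\in\; \bigl[\,0,\ \E_X[b(X)]\,\bigr],
\]
where $b(x):=\sum_{a}\tfrac{\lambda}{\pi_b(a\mid x)+\lambda}\pi(a\mid x)\in[0,1]$, using only $0\le r\le 1$; crucially $b$ is computable from $\pi$ and $\pi_b$ alone. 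This isolates the approximation bias and bounds it by a computable quantity, and is the only place boundedness of the reward is used.

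Then I would control the two sources of randomness and union bound. (i) For the estimator concentration, apply the empirical Bernstein inequality of \citet{maurer2009empirical} to the i.i.d.\ variables $\lambda W_i^\lambda R_i\in[0,1]$ (the sample variance being scale-equivariant) and rescale by $1/\lambda$: with probability at least $1-2e^{-x}$, $\ \wh{v}\ISl(\pi)-\E[\wh{v}\ISl(\pi)]\le\sqrt{\tfrac{2x}{n}\Var\ISl(X_1^n)}+\tfrac{7x}{3\lambda(n-1)}$, where $\Var\ISl(X_1^n)$ is the sample variance of $(W_i^\lambda R_i)_i$; the two $e^{-x}$ terms come from the Bernstein tail bound on the mean together with the concentration of the empirical variance used inside Maurer--Pontil. (ii) For the context average of the bias proxy, apply Hoeffding to the i.i.d.\ variables $b(X_i)\in[0,1]$: with probability at least $1-e^{-x}$, $\ \E_X[b(X)]\le\tfrac1n\sum_{i=1}^n b(X_i)+\sqrt{\tfrac{x}{2n}}=\Bias\ISl(X_1^n)+\sqrt{\tfrac{x}{2n}}$. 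Intersecting these events gives probability at least $1-3e^{-x}$, and on it
\[
  v(\pi)=\wh{v}\ISl(\pi)+\bigl(\E[\wh{v}\ISl(\pi)]-\wh{v}\ISl(\pi)\bigr)+\bigl(v(\pi)-\E[\wh{v}\ISl(\pi)]\bigr)\ge \wh{v}\ISl(\pi)-\sqrt{\tfrac{2x}{n}\Var\ISl(X_1^n)}-\tfrac{7x}{3\lambda(n-1)},
\]
which already implies the claim; weakening the nonnegative bias term to $-\Bias\ISl(X_1^n)-\sqrt{x/(2n)}$ recovers the displayed inequality verbatim (and makes the same expression double as the upper confidence limit $v(\pi)\le\wh{v}\ISl(\pi)+\sqrt{\tfrac{2x}{n}\Var\ISl(X_1^n)}+\tfrac{7x}{3\lambda(n-1)}+\Bias\ISl(X_1^n)+\sqrt{x/(2n)}$).

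I do not expect a genuine obstacle: once the mean of $\wh{v}\ISl$ is written out this is a routine combination of empirical Bernstein and Hoeffding. The only points needing mild care are selecting the correct tail direction of the empirical Bernstein inequality and tracking the $1/\lambda$ rescaling of both its variance and its deterministic term, and bounding the approximation bias by a reward-free quantity (the $r\le 1$ relaxation), which is exactly what forces the extra Hoeffding step and hence the third $e^{-x}$ in the confidence level.
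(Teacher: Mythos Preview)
Your proposal is correct and follows the same template as the paper's proof: decompose into bias plus concentration, apply the empirical Bernstein inequality of \citet{maurer2009empirical} for the concentration term with range $C=1/\lambda$, and use Hoeffding's inequality for the context randomness, then union-bound. The only difference is in the conditioning: the paper conditions on $X_1^n$ throughout---applying Bernstein to $W_i^\lambda R_i$ given the contexts and Hoeffding to $v(\pi\mid X_k)$---whereas you work unconditionally, applying Bernstein directly to the genuinely i.i.d.\ sequence $(W_i^\lambda R_i)_i$ and Hoeffding to $b(X_i)$. Your route is arguably cleaner, since the conditional variables in the paper's version are independent but not identically distributed, so strictly speaking the Maurer--Pontil statement (as quoted in the paper, for i.i.d.\ variables) does not apply verbatim. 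You also correctly observe that the bias $v(\pi)-\E[\wh v\ISl(\pi)]$ is nonnegative, so for the lower-bound direction the $\Bias\ISl$ and $\sqrt{x/(2n)}$ terms are pure slack; the paper does not make this remark explicit.
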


\textbf{DR}-$\lambda$ \citep{dudik2011doubly, farajtabar2018more, su2019doubly} combines a direct model estimator and \ac{IS}, finding a compromise that should behave like \ac{IS} with a reduced variance.
As in the case of \ac{IS}, we introduce a $\lambda$-corrected version of \ac{DR}, $\vh\DRl$ (given formally in \cref{sec:DR_lambda}), where importance weights are replaced with $W_i^{\lambda}$.
This allows to prove the following bound:
\begin{prop}
  \label{prop:DR_lambda_bernstein}
For the \ac{DR}-$\lambda$ estimator defined w.r.t.\ a fixed $\eta : \sX \times [K] \to [0,1]$ we have with probability at least $1-3e^{-\conf}$, for $\conf > 0$,
  \begin{align*}
  v(\pi)
  &\geq
    \wh{v}\DRl(\pi)
    -\sqrt{\frac{2x}{n} \, \Var\DRl(X_1^n)} \\
    &- \frac{7}{3} \pr{1 + \frac{1}{\lambda}} \frac{x}{n-1} 
     - \Bias\DRl(X_1^n)
    - \sqrt{\frac{x}{2 n}}~.
  \end{align*}
where $\Var\DRl$ and $\Bias\DRl$ are, respectively, the variance and bias estimates defined in \cref{sec:DR_lambda}.
\end{prop}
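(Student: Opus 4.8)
The plan is to reuse the three-term decomposition from the proof of \cref{thm:WIS_contextual_sim_lb_value}, now applied to the per-sample summands of the \ac{DR}-$\lambda$ estimator, exactly as was done for \ac{IS}-$\lambda$ in \cref{prop:IS_lambda_bernstein}. Write $\vh\DRl(\pi) = \frac1n\sum_{i=1}^n Z_i$ with $Z_i = W_i^\lambda\pr{R_i - \eta(X_i,A_i)} + \sum_{a\in[K]}\pi(a\pmid X_i)\,\eta(X_i,a)$ and $W_i^\lambda = \pi(A_i\pmid X_i)/\pr{\pi_b(A_i\pmid X_i)+\lambda}$. Since $R_i,\eta\in[0,1]$ and $0 \le W_i^\lambda \le 1/\lambda$, each $Z_i$ is bounded a.s.\ by $1+1/\lambda$ in absolute value (so, unlike \cref{thm:WIS_contextual_sim_lb_value}, no assumption on the reward structure is needed, since \ac{DR}-$\lambda$ uses bounded rather than self-normalized weights), and $Z_i$ is a function of the $i$-th triplet $(X_i,A_i,R_i)$ and the fixed map $\eta$ only; hence, given $X_1^n$, the $Z_i$ are independent and bounded. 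We then write $v(\pi)-\vh\DRl(\pi)$ as the sum of a \emph{concentration-of-contexts} term $v(\pi) - \frac1n\sum_k v(\pi\pmid X_k)$, a \emph{bias} term $\frac1n\sum_k v(\pi\pmid X_k) - \E[\vh\DRl(\pi)\mid X_1^n]$, and a \emph{concentration} term $\E[\vh\DRl(\pi)\mid X_1^n] - \vh\DRl(\pi)$.

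For the \emph{concentration} term I would apply a conditional (given $X_1^n$) form of the empirical Bernstein inequality of \citet{maurer2009empirical} to the independent bounded variables $Z_1,\dots,Z_n$: with probability at least $1-2e^{-x}$, $\E[\vh\DRl(\pi)\mid X_1^n] - \vh\DRl(\pi) \ge -\sqrt{\frac{2x}{n}\,\Var\DRl(X_1^n)} - \frac{7}{3}\pr{1+\frac1\lambda}\frac{x}{n-1}$, where $\Var\DRl(X_1^n)$ is the sample variance of the $Z_i$ and the last term carries the a.s.\ bound $1+1/\lambda$. For the \emph{concentration-of-contexts} term, the $v(\pi\pmid X_k)\in[0,1]$ are i.i.d.\ with mean $v(\pi)$, so Hoeffding's inequality gives $v(\pi) - \frac1n\sum_k v(\pi\pmid X_k) \ge -\sqrt{x/(2n)}$ with probability at least $1-e^{-x}$, as in \cref{thm:WIS_contextual_sim_lb_value}.

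The \emph{bias} term is a deterministic-given-data quantity and is handled by direct computation, mirroring the \ac{IS}-$\lambda$ case: since $A_i\sim\pi_b(\cdot\pmid X_i)$ and $1 - \pi_b(a\pmid X_i)/\pr{\pi_b(a\pmid X_i)+\lambda} = \lambda/\pr{\pi_b(a\pmid X_i)+\lambda}$, one gets $v(\pi\pmid X_i) - \E[Z_i\mid X_i] = \sum_a \pi(a\pmid X_i)\,\frac{\lambda}{\pi_b(a\pmid X_i)+\lambda}\,\pr{r(X_i,a)-\eta(X_i,a)}$, so that $\frac1n\sum_k v(\pi\pmid X_k) - \E[\vh\DRl(\pi)\mid X_1^n]$ is the average of these residual-weighted terms; bounding $|r-\eta|\le 1$ pointwise (or using a sharper pointwise bound) yields the computable quantity $\Bias\DRl(X_1^n)$ from \cref{sec:DR_lambda}. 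This display also exhibits the ``doubly robust'' nature of the bound: the bias is a product of the correction factor $\lambda/(\pi_b+\lambda)$ and the model error $r-\eta$, hence it shrinks both as $\lambda\to 0$ and as $\eta$ approaches the true reward function. A union bound over the three failure events ($2e^{-x}$ for empirical Bernstein, $e^{-x}$ for Hoeffding) and a rearrangement then give the claimed inequality with probability at least $1-3e^{-x}$; in contrast to \cref{thm:WIS_contextual_sim_lb_value}, no positivity truncation is needed because all error terms enter additively.

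The main obstacle is the \emph{concentration} step: one has to justify applying the empirical Bernstein inequality \emph{conditionally} on $X_1^n$ to summands that are independent but not identically distributed, and to track the a.s.\ bound $1+1/\lambda$ on the \ac{DR}-$\lambda$ summands through to the constant $\frac{7}{3}\pr{1+\frac1\lambda}$, together with pinning down the exact sample-variance proxy $\Var\DRl$. A secondary subtlety is selecting the pointwise bound on $|r-\eta|$ in the definition of $\Bias\DRl$ so that the bound stays tight when $\eta$ is an accurate reward model.
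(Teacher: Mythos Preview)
Your proposal is correct and follows essentially the same approach as the paper: the same three-term decomposition, empirical Bernstein (\citealp{maurer2009empirical}) with range $C=1+1/\lambda$ for the concentration term, Hoeffding for the contexts, a direct computation for the bias, and a union bound over three events. Your bias derivation is in fact slightly cleaner than the paper's---you obtain the factored form $\sum_a \pi(a\pmid X_i)\,\tfrac{\lambda}{\pi_b(a\pmid X_i)+\lambda}\,(r(X_i,a)-\eta(X_i,a))$ directly, whereas the paper splits the $r$- and $\eta$-contributions and bounds them separately---and you are right to flag that the empirical Bernstein inequality is being applied conditionally to independent but not identically distributed summands, a point the paper does not comment on.
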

As before, setting $\lambda = 1/\sqrt{n}$ ensures that 
\ac{DR}-$\lambda$
is asymptotically unbiased.

\textbf{Chebyshev-\ac{WIS}.} A Chebyshev-type confidence bound for \ac{WIS} can also be proved relying on the fact that the moments of \ac{WIS} are bounded. We present this result as a (naive) alternative approach to the more involved one proposed in this work. This idea was explored in the context of Markov decision processes by~\cite{metelli2018policy}.
\begin{prop}
  \label{prop:wis_cheb}
  With probability at least $1-3 e^{-x}$ for $x > 0$,
  \begin{align*}
    &v(\pi) \geq
    \frac{N_x}{n} \pr{
      \vh\WIS(\pi)
      - \sqrt{\frac{\sum_{k=1}^n \E[W_k^2 | X_k]}{N_x^2} \, e^x}
    }
    - \sqrt{\frac{x}{2 n}}~,\\
    ~
    &\text{where}
    ~
    N_x = n - \sqrt{2 x \sum_{k=1}^n \E\br{W_k^2\mid X_1^n}}~.
  \end{align*}
\end{prop}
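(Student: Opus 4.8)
The plan is to reuse the three-term decomposition from the proof of \cref{thm:WIS_contextual_sim_lb_value}, but to replace the delicate bias/concentration analysis of $\vh\WIS$ by two crude applications of a Chernoff- and a Chebyshev-type bound, always conditioning on the contexts $X_1^n$. Write $Y = \sum_{k=1}^n W_k R_k$ and $Z = \sum_{k=1}^n W_k$, so that $\vh\WIS(\pi) = Y/Z$; recall from the computation leading to \cref{eq:bias} that $\E[Y \mid X_1^n] = \sum_{k=1}^n v(\pi \pmid X_k)$, and that $\E[W_k \mid X_k] = 1$ because $A_k \sim \pi_b(\cdot \pmid X_k)$, whence $\E[Z \mid X_1^n] = n$. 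The value is then sandwiched by chaining: randomness of the contexts (Hoeffding), then $\tfrac1n\sum_k v(\pi\pmid X_k) = \tfrac1n\E[Y\mid X_1^n]$, then a lower bound on $\E[Y\mid X_1^n]$ in terms of the observed $Y$, then the identity $Y = \vh\WIS(\pi)\,Z$ together with a lower bound on $Z$.

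First I would lower-bound $Z$: conditionally on $X_1^n$, $Z \ge N_x$ with probability at least $1-e^{-x}$. Since the $W_k$ are non-negative and independent given the contexts, a one-sided Chernoff argument goes through with no Bernstein-type correction: for $\lambda > 0$, the elementary inequality $e^{-u} \le 1 - u + u^2/2$ valid for all $u \ge 0$ gives $\E[e^{-\lambda W_k}\mid X_k] \le \exp\!\pr{-\lambda + \tfrac{\lambda^2}{2}\E[W_k^2\mid X_k]}$, hence $\E[e^{-\lambda Z}\mid X_1^n] \le \exp\!\pr{-\lambda n + \tfrac{\lambda^2}{2}\sigma^2}$ with $\sigma^2 = \sum_{k=1}^n \E[W_k^2\mid X_1^n]$. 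Markov's inequality and the optimal choice $\lambda = t/\sigma^2$ yield $\P(Z \le n - t \mid X_1^n) \le e^{-t^2/(2\sigma^2)}$, and taking $t = \sqrt{2x\sigma^2}$ recovers exactly the definition of $N_x$. The key point is that the sign constraint $W_k \ge 0$ is what lets the variance proxy be the raw second moment $\sigma^2$ and produces the clean $\sqrt{2x}$ factor.

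Next I would control the \emph{upper} tail of $Y$. Here the summands $W_k R_k$ may be heavy-tailed, so no exponential-moment bound is available and I would simply apply Chebyshev's inequality: conditionally on $X_1^n$, $\Var(Y\mid X_1^n) = \sum_k \Var(W_k R_k \mid X_k) \le \sum_k \E[W_k^2 R_k^2\mid X_k] \le \sum_k \E[W_k^2\mid X_k] = \sigma^2$ using $R_k \in [0,1]$, so with probability at least $1-e^{-x}$ we get $\E[Y\mid X_1^n] \ge Y - \sqrt{\sigma^2 e^x}$. This weaker ($e^x$ rather than $\sqrt{x}$) control of the numerator is exactly the price paid for allowing heavy tails, and it is the source of the $e^x$ inside the square root of the statement.

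To assemble, I would add Hoeffding's inequality for the independent, $[0,1]$-valued quantities $v(\pi\pmid X_k)$ (as in the ``Concentration of contexts'' paragraph of the main proof). On the intersection of the three events, of probability at least $1-3e^{-x}$ by a union bound, chain $v(\pi) \ge \tfrac1n\sum_k v(\pi\pmid X_k) - \sqrt{\tfrac{x}{2n}} = \tfrac1n\E[Y\mid X_1^n] - \sqrt{\tfrac{x}{2n}} \ge \tfrac1n\pr{Y - \sqrt{\sigma^2 e^x}} - \sqrt{\tfrac{x}{2n}}$, then substitute $Y = \vh\WIS(\pi)\,Z \ge \vh\WIS(\pi)\,N_x$ (valid because $\vh\WIS(\pi) \ge 0$ and $Z \ge N_x$), and finally pull out $N_x/n$, using $\sigma^2 = \sum_k\E[W_k^2\mid X_k]$. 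I expect the only subtle points to be: (i) spotting that $W_k \ge 0$ gives a sub-Gaussian lower tail for $Z$ with the second-moment proxy, which is what makes $N_x$ take the stated clean form rather than carry an extra Bernstein term — this is the real crux; and (ii) minor bookkeeping when $N_x \le 0$, the regime in which the factorization $\tfrac1{N_x}\sqrt{\sigma^2 e^x} = \sqrt{\sigma^2 e^x / N_x^2}$ and hence the displayed bound are uninformative, so one works under $N_x > 0$, the regime of interest.
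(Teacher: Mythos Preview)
Your proof is correct. It shares the same three ingredients as the paper's proof---Hoeffding for the contexts, a Chernoff lower-tail bound on $Z=\sum_k W_k$ (your derivation is exactly the paper's \cref{lem:bernstein_lower_tail}), and a Chebyshev step responsible for the $e^x$---but you route them differently. The paper applies Chebyshev to the \emph{ratio} $\vh\WIS$ itself and then invokes the Efron--Stein inequality to bound $\Var(\vh\WIS\mid X_1^n)\le \E\!\br{\sum_k W_k^2/Z^2 \mid X_1^n}$, after which the denominator is replaced by $N_x$ via \cref{lem:bernstein_lower_tail}. You instead apply Chebyshev to the \emph{numerator} $Y=\sum_k W_k R_k$, whose conditional variance is trivially bounded by $\sigma^2=\sum_k\E[W_k^2\mid X_k]$, and only then combine with $Z\ge N_x$ through $Y=\vh\WIS Z$. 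Your route is more elementary (no Efron--Stein needed) and makes the role of the high-probability event $\{Z\ge N_x\}$ cleaner, since you use it only once on the observed $Z$ rather than inside an expectation; the paper's route, on the other hand, fits more naturally into the bias/concentration template of \cref{thm:WIS_contextual_sim_lb_value} that it explicitly reuses. Both arrive at the identical bound after the factorization $\tfrac1n\sqrt{\sigma^2 e^x}=\tfrac{N_x}{n}\sqrt{\sigma^2 e^x/N_x^2}$.
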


\textbf{\ac{EL} esimator.}
The EL estimator for off-policy evaluation introduced by \cite{karampatziakis2019empirical} comes with \emph{asymptotic} confidence intervals: for any error probability $\delta>0$, the coverage probability of the confidence interval tends to $1-\delta$ as the sample size $n \to \infty$.
In particular, the EL estimator is based on a \ac{MLE} $\vh\MLE(\pi) = \bW\tp \bQ\MLE \bR$ where $\bW = [W_1, \ldots, W_n]\tp$, $\bR = [R_1, \ldots, R_n]\tp$, and the matrix $\bQ\MLE$ is a solution of some (convex) empirical maximum likelihood optimization problem which can be solved efficiently.
The empirical likelihood theory of \cite{owen2013book} provides a way to get a slightly different estimator that comes with asymptotic confidence intervals. 
More precisely, assuming that $\bQ\POP \succeq 0$ is some matrix satisfying $v(\pi) = \bW\tp \bQ\POP \bR$, the asymptotic theorem of empirical likelihood~\citep{owen2013book} gives an asymptotic identity
\[
  \lim_{n \to \infty} \P\pr{\tr(\ln \bQ\MLE) - \tr(\ln \bQ\POP) \leq \tfrac12 \chi^2_{q:1-\delta} } = 1 - \delta
\]
for any error probability $\delta$ where $\chi^2_{q:1-\delta}$ is the $1-\delta$ quantile of a $\chi^2$ distribution with one degree-of-freedom.
Based on this \citep{karampatziakis2019empirical}  estimated the value $v(\pi)$ by solving the optimization problem
\begin{align*}
  \min_{\bQ \succeq 0} \bW\tp \bQ \bR \quad s.t.\
  &\bW\tp \bQ \ones = 1, \ones\tp \bQ \ones = 1,\\
  &x_n + \tr(\ln \bQ) \geq \tr(\ln \bQ\MLE)
\end{align*}
where $x_n$ is a specific sequence satisfying $x_n \to \chi^2_{q:1-\delta}$ as $n \to \infty$.
In our experiments we use the implementation of the authors of \cite{karampatziakis2019empirical} available in \citep{mineiro2019el_code}. More precisely, we use their function \texttt{asymptoticconfidenceinterval}, which returns the lower and upper bounds of the confidence interval to run our experiments (we use only the lower bound as returned value).

\section{Experiments}
\label{sec:experiments}

\begin{figure*}[!t]
    \centering
    \includegraphics[width=0.245\linewidth]{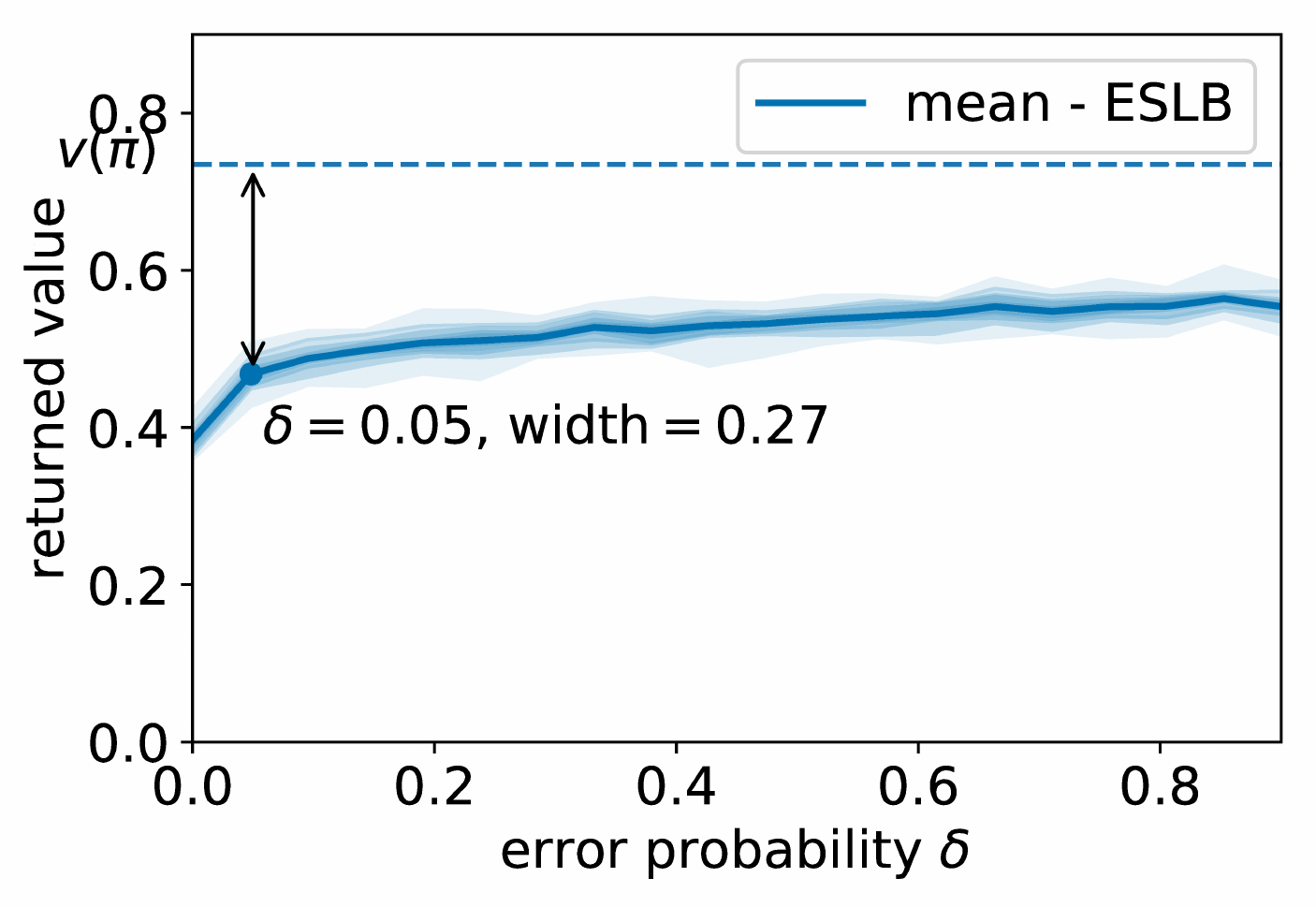}
    \includegraphics[width=0.245\linewidth]{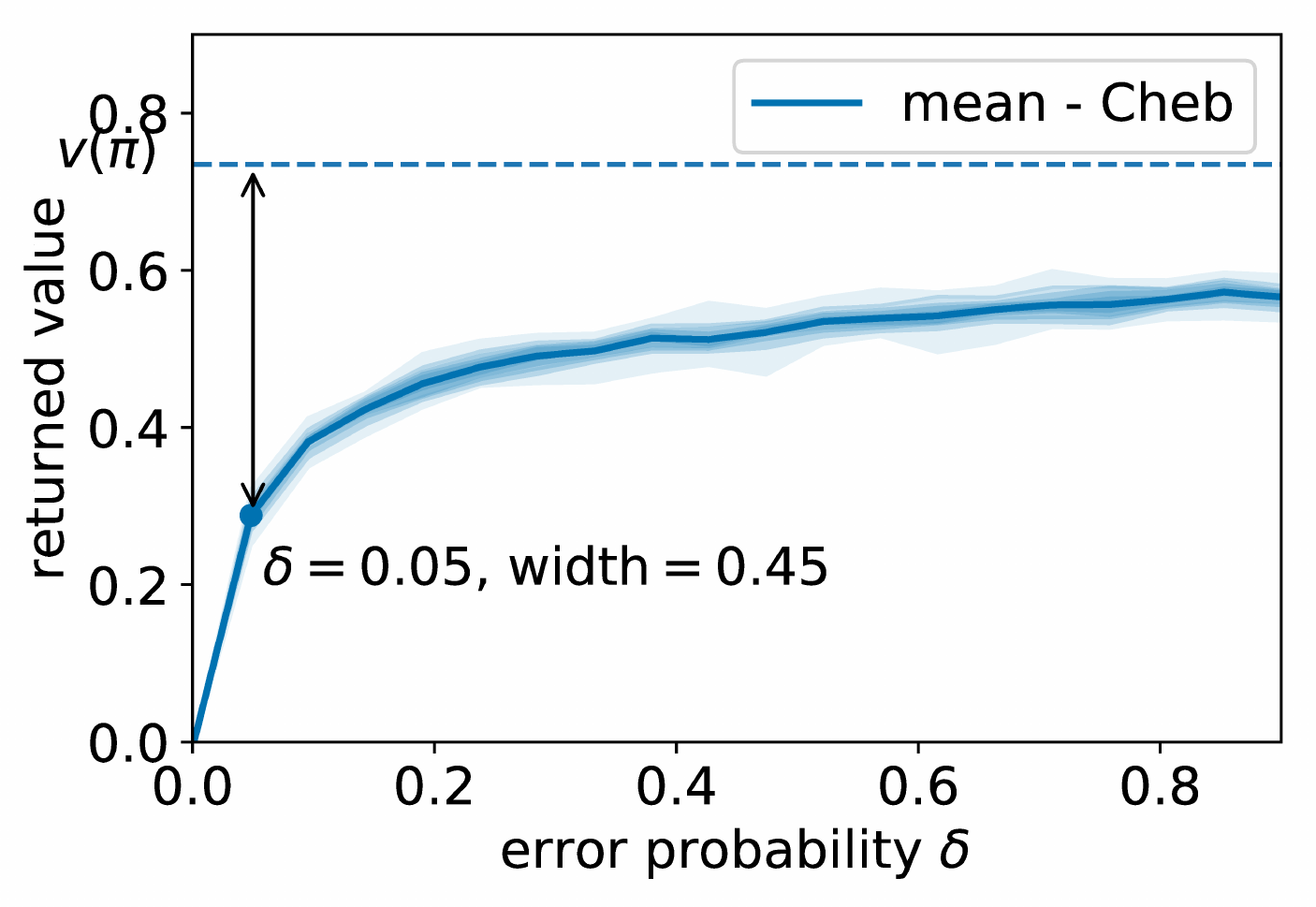}
    \includegraphics[width=0.245\linewidth]{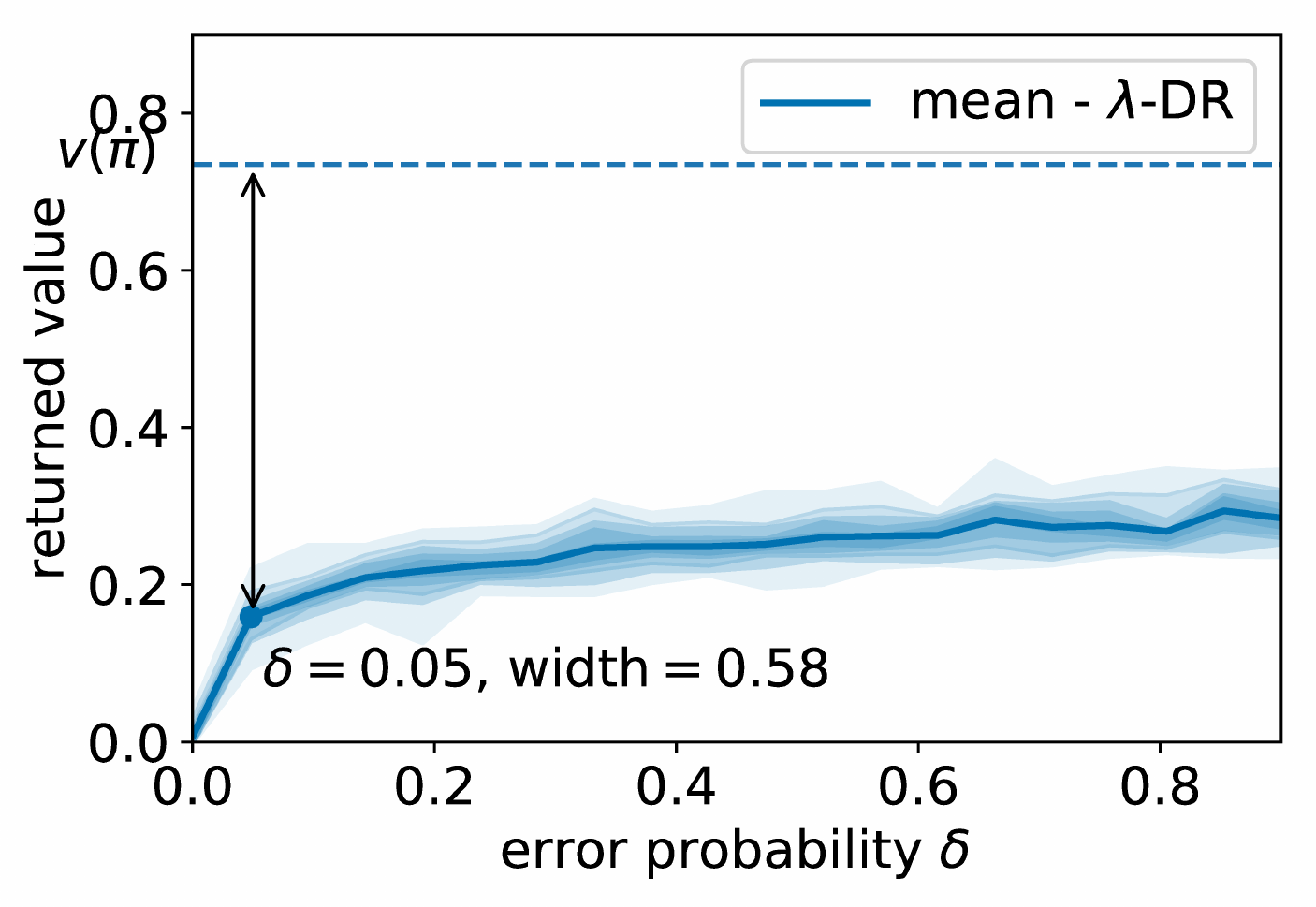}
    \includegraphics[width=0.245\linewidth]{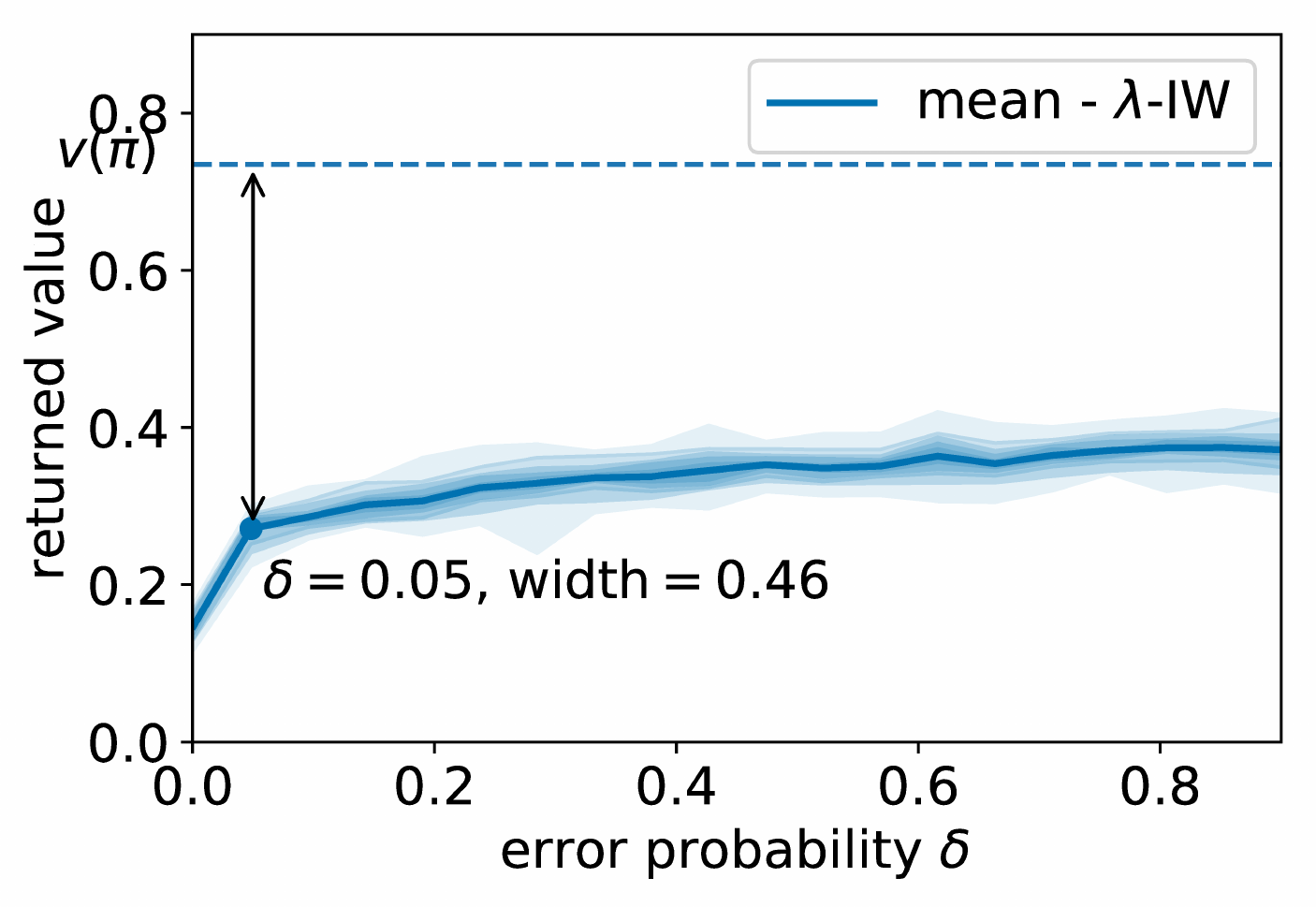}
    \caption{Empirical tightness: analysis on synthetic data. From left to right: \ac{ESLB}, Chebyshev, $\lambda$-\ac{DR} and $\lambda$-\ac{IS}. 100 runs for each value of $\delta$.}
    \label{fig:coverage-analysis}
\end{figure*}

\begin{table*}[!t]
  \caption{Average test rewards for a $5$-action problem on a synthetic benchmark.
    Symbol $-\infty$ indicates that no policy can be selected, since the confidence bound is always vacuous.
    Here the behaviour policy is faulty on two actions and the target policies are: ideal, fitted on $\vh\IS$, and fitted on $\vh\WIS$.
  }
  \begin{center}
  \begin{small}
    \begin{tabular}{|c|c|c|c|}
      \hline
      Sample size &	$5000$ &		$10000$ &		$20000$ \\
      \hline
      \ac{ESLB} & \textbf{1.000 $\pm$ 0.004} & \textbf{1.000 $\pm$ 0.004} & \textbf{1.000 $\pm$ 0.004} \\
      $\lambda$-\ac{IS}  & 0.710 $\pm$ 0.443 & 0.837 $\pm$ 0.356 & 0.900 $\pm$ 0.238 \\
      $\lambda$-DR  & $-\infty$ & 0.837 $\pm$ 0.356 & 0.941 $\pm$ 0.140 \\
      Cheb-\ac{WIS}  & $-\infty$ &  $-\infty$ &  $-\infty$ \\
      \hline
      DR  & 0.896 $\pm$ 0.187 & 0.871 $\pm$ 0.279 & 0.951 $\pm$ 0.122 \\
      Emp.Lik.~\citep{karampatziakis2019empirical}  & 0.844 $\pm$ 0.312 & 0.819 $\pm$ 0.354 & 0.883 $\pm$ 0.293 \\
      \hline
      Best policy on the test set  & 1.000 $\pm$ 0.004 & 1.000 $\pm$ 0.004 & 1.000 $\pm$ 0.004 \\
      \hline
    \end{tabular}
    \end{small}
\end{center}
\label{tab:synth1}
\end{table*}

  \begin{table*}[!t]
    \centering
    \caption{
      Average test rewards on a real benchmark.
      Here the behaviour policy is faulty on two actions and the target policies are: ideal, fitted on $\vh\IS$, and fitted on $\vh\WIS$.
      }
    \resizebox{\textwidth}{!}{
      \begin{tabular}{|c|c|c|c|c|c|c|c|c|}
        \hline
         Name &    Yeast & PageBlok & OptDigits & SatImage & isolet & PenDigits & Letter & kropt \\

         Size &   1484 & 5473 & 5620 & 6435 & 7797 & 10992 & 20000 & 28056  \\
         \hline
         \ac{ESLB}   &  \textbf{0.90 $\pm$ 0.27} &  \textbf{0.91 $\pm$ 0.27} & \textbf{0.91 $\pm$ 0.26} & \textbf{0.91 $\pm$ 0.26} & \textbf{0.90 $\pm$ 0.27} & \textbf{0.91 $\pm$ 0.27} & \textbf{0.91 $\pm$ 0.27} & \textbf{0.91 $\pm$ 0.27}\\
         $\lambda$-\ac{IS}  &  \textbf{0.91 $\pm$ 0.26 }   & \textbf{0.91 $\pm$ 0.27} &  0.72 $\pm$ 0.40 & 0.70 $\pm$ 0.39 & 0.75 $\pm$ 0.40 &\textbf{0.9 $\pm$ 0.27} & \textbf{0.90 $\pm$ 0.27} & \textbf{0.90 $\pm$ 0.27}\\
         $\lambda$-DR & $-\infty$ &	\textbf{0.91 $\pm$ 0.27} &	$-\infty$ &	$-\infty$ & \textbf{0.90 $\pm$ 0.27} &	\textbf{0.91 $\pm$ 0.26} & \textbf{0.91 $\pm$ 0.27} & \textbf{0.91 $\pm$ 0.27} \\
                    Cheb-\ac{WIS}  &	  $-\infty$ &	  $-\infty$ &     $-\infty$ &     $-\infty$ &   $-\infty$ & $-\infty$ & \textbf{0.90 $\pm$ 0.27} & $-\infty$\\
                    \hline
        DR  &   0.52 $\pm$ 0.31 & 0.75 $\pm$ 0.36 & 0.68 $\pm$ 0.32 & 0.62 $\pm$ 0.39 & 0.21 $\pm$ 0.29 & 0.79 $\pm$ 0.31 & 0.63 $\pm$ 0.28 & \textbf{0.91 $\pm$ 0.27} \\
        Emp.Lik.~\citep{karampatziakis2019empirical} &  0.31 $\pm$ 0.32 &	0.66 $\pm$ 0.40 & 	0.28 $\pm$ 0.35 & 0.63 $\pm$ 0.40 & 0.21 $\pm$ 0.29	& 0.54 $\pm$ 0.42 & 0.24 $\pm$ 0.33 & 0.71 $\pm$ 0.29\\
        \hline
    \end{tabular}
    }
    \label{tab:real_data_results}
  \end{table*}

Our experiments aim to verify two hypotheses\footnote{Our code is available at \url{https://github.com/deepmind/offpolicy_selection_eslb}.}: (i) the \acf{ESLB} of \cref{thm:WIS_contextual_sim_lb_value} is empirically tighter than its main competitors (discussed in~\cref{sec:baselines}), which is assessed through the value-gap and experiments for the best policy selection problem; (ii) best policy selection based on confidence bounds is superior to selection using just the bare estimators.
Henceforth, we will be concerned mainly with comparisons between confidence bounds.
Therefore, most estimators which do not come with practically computable confidence bounds are outside of the scope of the following experiments.
Our experimental process is inspired by previous work on off-policy evaluation \citep{dudik2011doubly, dudik2014doubly, farajtabar2018more}.

\subsection{Policies and datasets}
\label{sec:summary_exp_setting}
We summarize our experimental setup here; all details can be found in \cref{sec:exp_details}.
We consider a contextual bandit problem such that for every context $\bx$ there is a single action with reward $1$, denoted by $\rho(\bx) \in [K]$, and the reward of all other actions is $0$. This setup is closely related to multiclass classification problems: treating feature vectors as contexts (arriving sequentially) and the predicted label as the action, and defining the reward to be 1 for a correct prediction and 0 otherwise, we arrive at the above bandit problem; this construction has been used in off-policy evaluation (see, e.g., \citep{bietti2018contextual}).

Throughout we consider Gibbs policies: we define an ideal Gibbs policy as
$\pi^{\text{ideal}}(y \mid \bx) \propto e^{\frac{1}{\tau} \mathbb{I}\{y=\rho(\bx)\}}$, where $\mathbb{I}$ denotes the indicator function\footnote{For an event $E$, $\mathbb{I}\{E\}=1$ if $E$ holds, and $0$ otherwise.}
and $\tau > 0$ is a temperature parameter.
The smaller $\tau$ is, the more peaky is the distribution on the predicted label.
To create mismatching policies, we consider a \emph{faulty} policy type for which the peak is shifted to another, wrong action for a set of faulty actions $F \subset [K]$ (i.e., if $\rho(\bx) \in F$, the peak is shifted by $1$ cyclically).
In the following we consider faulty behavior policies, while one among the target policies is \emph{ideal}.

Motivated by the large body of literature on off-policy learning \citep{swaminathan2015batch,swaminathan2015counterfactual,joachims2018deep}, which considers the problem of directly learning a policy from logged bandit feedback, we also consider trained target policies: the policies have a parametric form $\pi^{\bhTheta\subIS}(k|\bx) \propto e^{\frac{1}{\tau} (\bhtheta\subIS)_k\tp \bx}$ and  $\pi^{\bhTheta\subWIS}(k|\bx) \propto e^{\frac{1}{\tau} (\bhtheta\subWIS)_k\tp \bx}$, and their parameters are learned by respectively maximizing the empirical values $\vh\IS$ and $\vh\WIS$ (through gradient descent), to imitate parameter fitting w.r.t.\ these estimators (see \cref{sec:exp_details:learnt_policies} for details).

Some of our experiments require 
a precise control of the distribution of the contexts, as well as of the sample size. To accomplish this, we generate
\emph{synthetic datasets} from an underlying multiclass classification problem
through the scikit-learn function \verb+make_classification()+.

Finally,
8 \emph{real} multiclass classification datasets are chosen from OpenML \citep{dua2017openml} (see Table~\ref{tab:datasets} in \cref{sec:exp_details})
with classification tasks of various sizes, dimensions and class imbalance.

\subsection{Empirical tightness analysis: comparison of existing bounds}

\ac{ESLB} of \cref{thm:WIS_contextual_sim_lb_value}, Chebyshev, $\lambda$-\ac{DR} and $\lambda$-\ac{IS} take as input a parameter $\delta \in (0,1)$ that controls the theoretical error probability of the obtained lower bound (the coverage probability is $1-\delta$). However, there is usually a gap between this theoretical value and the actual empirical coverage obtained in practice.
We fix a synthetic problem  with size $n=10^4$, $\tau_{b}=0.3$ (Gibbs behavior policy, with two faulty actions) and $\tau_t=0.3$ (Gibbs target policy, with one different faulty action).
As an indication of the difficulty of the problem, the effective sample size (see Sec.~\ref{sec:setting}) here is 
$n_{\text{eff}}=655$, which is an order of magnitude smaller than $n$ (a moderate policy mismatch), but should allow a reasonable estimation.

For each value of $\delta$, we repeated the same experiments 100 times, regenerating a new but identically distributed logged dataset and computing the estimates.
The empirical distribution of the lower bound $\hat v_n(\pi)$
and the width $v(\pi)-\hat v_n(\pi)$ at  $\delta=0.05$ (the $\delta$-value used in the experiments) are shown in Figure~\ref{fig:coverage-analysis}.
We can observe that all lower bounds have a positive distance to the true value \emph{for any } $\delta \in (0,1)$.
This means that none of the lower bounds is tight at this sample size. Nonetheless, \ac{ESLB} is considerably tighter for low error probabilities ($\delta \leq 0.1$). It is expected that this tightness is a key ingredient to make more accurate decisions.

Nonetheless, in all 100 repetitions, the lower bound is \emph{always} below the true value, meaning that the empirical coverage of the estimators is $1>1-\delta$ for all $\delta$. This is not the case for \ac{EL}, as shown in Figure~\ref{fig:EL-tests}. 
This simulation highlights two interesting facts about \ac{EL}. We can see that the returned lower bound is always very close to the true value, which should be a perfect property for our selection problem. But unfortunately, the estimated lower bound also suffers from a quite large variance, resulting an empirical coverage below $1-\delta$ when $\delta$ is small.
This is likely the reason for the inferior performance of \ac{EL} on our real data experiments, presented in the next section. 

 \begin{figure}
     \centering
     \includegraphics[width=0.4\textwidth]{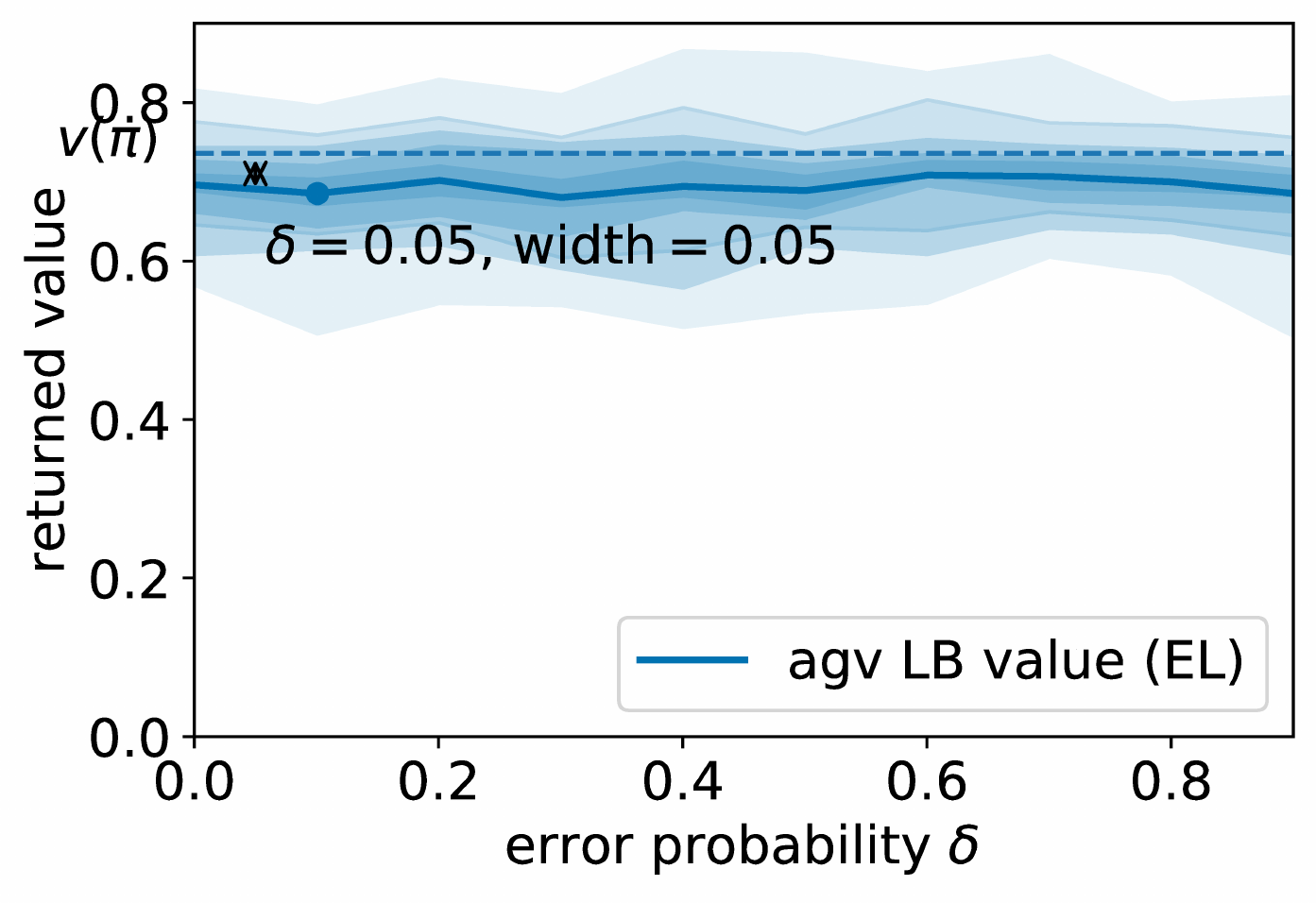}
     \caption{Empirical tightness of the EL lower bound estimator. The returned value is on average very close to the true value, but the empirical coverage is below $1-\delta$ for small values of $\delta$. 
    }
     \label{fig:EL-tests}
 \end{figure}

\subsection{Best-policy selection}
We evaluate all estimators on the best-policy selection
problem (see Figure~\ref{fig:evaluator}). For all experiments, we use a behavior policy with two faulty actions and temperature $\tau = 0.2$. The set of candidate target policies is $\pi^{\text{ideal}}, \pi^{\bhTheta\subIS}, \pi^{\bhTheta\subWIS}$ with temperature $\tau=0.2$ for synthetic and (almost deterministic) $\tau=0.1$ for real datasets.
The performance of a selected policy is its average reward collected on a separate test sample over $10$ independent trials ($5\cdot 10^4$ examples in synthetic case).
In all cases, we set the error probability $\delta=0.01$ except for OptDigits and SatImage where it is $\delta=0.05$.%
\footnote{For those datasets, all confidence intervals were vacuous with $\delta=0.01$ so we adjusted it to obtain exploitable results.}
Since the best-policy selection problem relies on the comparison of $N=3$ confidence bounds, by application of the union bound, the final result holds with a lower probability, i.e.\ $\delta$ is replaced by $\delta/N$.

\cref{tab:synth1} presents the results on a synthetic benchmark.
\ac{ESLB} perfectly returns the best policy on all trials while other estimators fail at least once.

Results on real data, summarized in \cref{tab:real_data_results}, show that \ac{ESLB} also achieves the best average performance in all cases, but other confidence-based methods turn out to be reasonable alternatives --- especially for large samples. This is in contrast with the pure estimators \ac{DR} and \ac{EL} \citep{karampatziakis2019empirical} that consistently make selection mistakes and show significantly lower performance on average on all datasets. This confirms our hypothesis that selection must be performed by a confidence-based scoring function.

The detailed decomposition of bounds in both \cref{tab:synth1,tab:real_data_results} into constituent terms (concentration, bias, etc.) can be found in~\cref{sec:decomp}.

\subsection{Discussion}

\paragraph{Confidence bound for the off-policy selection.}
In this paper we demonstrated that the high probability lower bound on the value can be used effectively to select the best policy.
We considered only a finite set of $N$ target policies, however bounds on the value can be combined using a \emph{union bound} to provide a bound on the value of the selected policy.
In such case \cref{thm:WIS_contextual_sim_lb_value} would hold with a slight modification, where we would pay a $x + \ln(N)$ term in place of $x$.
In other words, for $\pi^{\star} \in \max_{\pi \in \cbr{\pi_1, \ldots, \pi_N}} \ESLB_{x + \ln(N)}(\pi)$
where $\ESLB_x(\pi)$ denotes a right hand side of the bound in \cref{thm:WIS_contextual_sim_lb_value},
with probability at least $1-2 e^{-x}$ we have $v(\pi^{\star}) \geq \ESLB_{x + \ln(N)}(\pi^{\star})$.

Naturally, one might wonder how to extend the above to the uncountable class of policies where one could maximize the lower bound over policies: Such optimization is known as the \emph{off-policy policy optimization}.
Several works explored such possibility for \ac{WIS} estimator by showing bounds in the \ac{PAC} framework~\citep{swaminathan2015self,athey2021policy}.
\ac{PAC} bounds are known to be conservative in general: In our case they would hold with respect to the \emph{worst} policy in a given policy class.
The concentration inequality we build upon in this work (\cref{thm:concentration}) is also available in a so-called \ac{PAC}-Bayes formulation~\citep{kuzborskij2019efron}, where the inequality holds for all probability measures over a given policy space simultaneously.
Such \ac{PAC}-Bayes formulation offers an alternative to the \ac{PAC} formulation, while it was shown to be less conservative numerically~\citep{dziugaite2017computing}.

\textbf{Upper bound on the value. } Our concentration inequality is symmetrical and should in principle allow to obtain an upper-bound on the value as well. Note however that for our use case we aim at finding the policy with highest value and so for that task the lower bound suffices. Nonetheless, upper-bounds could be computed to get a sense of the \emph{width} of the confidence interval and thus of its tightness.

\section{Conclusions and future work}

We derived the first high-probability truncation-free finite-sample confidence bound on the value of a contextual bandit policy that employs \ac{WIS} estimator, which turned into a practical and principled off-policy selection method.
The sharpness of our bound is due to a careful handling of the empirical variance of \ac{WIS} estimator, but this desirable property comes at the cost of an increased computation complexity.
Indeed, Monte-Carlo simulations are needed to obtain the key terms in the bound.
These efforts allow us to achieve state-of-the-art performance on a variety of contextual bandit tasks.
Nevertheless, we see several future directions for our work.

\paragraph{Off-policy learning. } We have demonstrated that \ac{ESLB} can be used to improve the behavior policy by choosing a better target policy provided there is one in the finite set of candidates.
In principle, we can obtain even better improvement by
selecting the policy from an uncountable class by maximizing the \ac{ESLB} --or a similar-- e.g. as in \cite{swaminathan2015counterfactual}.

\paragraph{ESLB for MDPs. } Extending our analysis to stateful Markov Decision Processes is a challenging but promising direction. Indeed, off-policy evaluation and learning is a major topic of research in reinforcement learning. In that more general setting, for each learning episode, a policy generates not only a reward but an entire sequence of states, actions and rewards, which eventually characterize its value. We believe that similar techniques as those applied here may allow to obtain high-probability bounds on the policy value, at least in finite-horizon MDPs.

\subsubsection*{Acknowledgements}
We are grateful to Tor Lattimore for many insightful comments.

\if0
\section*{Broader Impact}
We believe that a major aspect of the present research should be categorized as basic research because 1/ it does not target any specific application and 2/ our main contribution is a theorem that can be of independent interest and inspire investigations in various domains. 
The problem we consider and that we attempt to solve may occur in various situations where the available data has been collected by an automated agent, in interaction with a possibly automated environment. 
A particular use of our algorithm may have both negative and positive impacts and we are not able, at that stage of our research, to accurately evaluate those.  We are not aware of any immediate short term negative implications of this research. On the contrary, we hope that our method may contribute to helping future systems make better and more fair counterfactual decisions. 
\fi

\newpage

\appendix

\onecolumn

\hsize\textwidth
  \linewidth\hsize \toptitlebar {\centering
  {\Large\bfseries Confident Off-Policy Evaluation and Selection through Self-Normalized Importance Weighting: Supplementary Material \par}}
\bottomtitlebar

\aistatsauthor{Ilja Kuzborskij \And Claire Vernade \And Andr\'as Gy\"orgy \And Csaba Szepesv\'ari}~\\
\aistatsaddress{ DeepMind }

\section{Revision of the Bias Term (Errata)}
\label{sec:errata}
In the original version of the paper the bias of estimator $\wh{v}\WIS(\pi)$ was controlled by relying on a relationship between $\E\br{\wh{v}\WIS(\pi) \mid X_1^n}$ and $v(\pi)$:
\begin{align}
  \label{eq:bias_harris_problematic}
  \E\br{\wh{v}\WIS(\pi) \mid X_1^n}
  &= \E\br{\frac{\sum_{k=1}^n W_k R_k}{\sum_{k=1}^n W_k} \bmid X_1^n} \\
  &\stackrel{(*)}{\leq} \underbrace{\E\br{ \frac{1}{\sum_{k=1}^n W_k} \bmid X_1^n}}_{(a)}\left(\sum_{k=1}^n v(\pi \pmid X_k)\right)~. \nonumber
\end{align}
This was done through the use of the Harris' inequality:
\begin{theorem}[Harris' inequality {\cite[Theorem~2.15]{boucheron2013concentration}}]
  \label{thm:harris}
  Let $f : \reals^n \to \reals$ be a non-increasing and $g : \reals^n \to \reals$ be a non-decreasing function.
  Then for real-valued random variables $(X_1, \ldots, X_n)$ independent from each other, we have
  $
    \E[f(X_1, \ldots, X_n) g(X_1, \ldots, X_n)] \leq \E[f(X_1, \ldots, X_n)] \E[g(X_1, \ldots, X_n)]~.
  $
\end{theorem}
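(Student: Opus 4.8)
The plan is to establish the inequality by induction on the number of variables $n$, using a two-point ``coupling'' argument for the base case and a conditioning argument for the inductive step; this is the classical route to Harris/FKG-type inequalities for product measures.

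First I would dispose of the case $n=1$. Let $X$ and $Y$ be independent copies of $X_1$. Because $f$ is non-increasing and $g$ is non-decreasing on $\reals$, the two factors of $\bigl(f(X)-f(Y)\bigr)\bigl(g(X)-g(Y)\bigr)$ can never be simultaneously positive or simultaneously negative, so this product is $\le 0$ pointwise. Taking expectations over the independent pair $(X,Y)$ and expanding, the two cross terms factorize (by independence) into $\E[f(X)]\,\E[g(Y)]$ and $\E[f(Y)]\,\E[g(X)]$, while $X$ and $Y$ being identically distributed forces the two ``diagonal'' terms to be equal; this gives $2\E[f(X_1)g(X_1)]-2\E[f(X_1)]\,\E[g(X_1)]\le 0$, i.e.\ the claim for $n=1$.

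For the inductive step, assume the statement holds for $n-1$ independent variables. Freeze the last coordinate: put $F(t)=\E[f(X_1,\dots,X_{n-1},t)]$ and $G(t)=\E[g(X_1,\dots,X_{n-1},t)]$, the expectations being over $X_1,\dots,X_{n-1}$. For each fixed $t$, the partial functions $(x_1,\dots,x_{n-1})\mapsto f(x_1,\dots,x_{n-1},t)$ and $(x_1,\dots,x_{n-1})\mapsto g(x_1,\dots,x_{n-1},t)$ are still non-increasing and non-decreasing respectively, so the inductive hypothesis gives $\E[fg\mid X_n=t]\le F(t)G(t)$. Since monotonicity in a coordinate survives integrating out the other coordinates, $F$ is non-increasing and $G$ non-decreasing in $t$, and $\E[F(X_n)]=\E[f]$, $\E[G(X_n)]=\E[g]$. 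Combining the tower property with the $n=1$ case applied to $F$ and $G$ viewed as functions of the single variable $X_n$,
\[
  \E[fg]=\E\bigl[\E[fg\mid X_n]\bigr]\le \E[F(X_n)G(X_n)]\le \E[F(X_n)]\,\E[G(X_n)]=\E[f]\,\E[g],
\]
which closes the induction.

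I do not expect a genuine obstacle here: the only delicate points are measurability of the partial functions and integrability, which I would handle by first proving the bound for bounded monotone $f,g$ (already enough for the use in this paper, where $g$ is the bounded reciprocal of a sum of non-negative weights and $f$ takes values in $[0,1]$) and, if full generality is desired, passing to the limit via truncation and monotone convergence. An alternative would be to deduce the statement from the discrete Chebyshev sum inequality followed by an approximation argument, but the induction-plus-coupling proof above is shorter and self-contained.
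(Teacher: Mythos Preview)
Your proof is correct and is precisely the standard induction-plus-coupling argument for Harris' inequality. Note, however, that the paper does not actually prove this theorem: it is merely stated with a citation to \cite[Theorem~2.15]{boucheron2013concentration}, where the proof given is essentially the one you outline (base case via the two-point Chebyshev sum inequality, then Fubini/conditioning for the induction). So there is nothing to compare against here beyond observing that your argument matches the classical one in the cited reference.
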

In particular, the original proof notes that $(w_1, \ldots, w_n) \to \sum_i w_i R_i$ is non-decreasing a.s.\ and $(w_1, \ldots w_n) \to (\sum_i w_i)^{-1}$ is non-increasing (since the $R_i$ are non-negative), and
 applies Harris' inequality to get \cref{eq:bias_harris_problematic}.
Combining \cref{eq:bias_harris_problematic} with Hoeffding's inequality the proof then got to
\begin{align*}
  \lefteqn{v(\pi) - \E\br{\wh{v}\WIS(\pi) \mid X_1^n}} \\
  &\stackrel{(*)}{\geq}
    v(\pi) - \E\br{\frac{1}{\sum_{k=1}^n W_k} \bmid X_1^n} \sum_{k=1}^n v(\pi \pmid X_k)\\
      &\geq
    v(\pi) \pr{1 - \E\br{ \frac{n}{\sum_{k=1}^n W_k} \bmid X_1^n}} - \E\br{\frac{\sqrt{n x / 2}}{\sum_{k=1}^n W_k} \bmid X_1^n}
\end{align*}
which holds with probability at least $1-e^{-x}, x > 0$.
Unfortunately, the step $(*)$ does not hold: we do not know if the Harris inequality applies because $(w_1, \ldots, w_n) \to \sum_i w_i R_i$ is not necessarily non-decreasing.
The problem is that the rewards $R_1, \ldots, R_n$ \emph{depend} on the importance weights through the actions.
The proof of $(*)$ remains an open problem.

\paragraph{Fix.}
Here we present a control of the bias of a very similar form while avoiding Harris' inequality.
The price we pay is an assumption that the reward function if `one-hot' --- note that this is the case in all our experiments and in general in all problems akin to classification, i.e. where only one action is the correct label for a given context.
\begin{nameddef}[\cref{lem:one_hot_id}]
  Suppose that reward function
  is `one-hot', that is, for any context $x$, $r(\astar \mid x)=1$ for some unique action $\astar$ and $r(a \mid x)=0$ for all $a \neq \astar$.
  Denote $\wstar_i = \frac{\pi(\astar \mid X_i)}{\pi_b(\astar \mid X_i)}$.
  Then,
\begin{align*}
  \E\br{\frac{\sum_{i=1}^n W_i R_i}{\sum_{j=1}^n W_j} \bmid X_1^n}
  =
  \sum_{i=1}^n v(\pi \pmid X_i) \E\br{\frac{1}{w^{\star}_i + \sum_{j \neq i} W_j} \bmid X_1^n}~.
\end{align*}
\end{nameddef}
\begin{proof}
  Proof is given in \cref{sec:proof_from_the_main_text}.
\end{proof}
Clearly, since the optimal action is unknown, taking $w^{\star}_i \geq 0$, \cref{lem:one_hot_id} implies inequality
\begin{align*}
  \E\br{\wh{v}\WIS(\pi) \mid X_1^n} \leq
  \underbrace{\E\br{\frac{1}{\min_i \sum_{j \neq i} W_j} \bmid X_1^n}}_{(b)} \pr{\sum_{i=1}^n v(\pi \pmid X_i)}~.
\end{align*}
We observe that $(*)$ only differs from the above by replacing term $(a)$ with term $(b)$.
Clearly $(b)$ is a larger quantity, and in the following we numerically assess the extent of how much we loose.
We also note that none of our numerical results are affected: as we see from tables below the difference is minor and even the new bias $(b)$ remains close to $1$ (recall that it appears multiplicatively w.r.t.\ the concentration term).
\begin{table}[H]
  \centering
  \begin{tabular}{|l|c|c|c|}
    \hline
    Sample size & 5000 & 10000 & 20000\\
    \hline
    New bias: Gibbs-fitted-IW & $0.9590 \pm 0.0009$ & $0.9792 \pm 0.0006$ & $0.9895 \pm 0.0004$\\
    New bias: Gibbs-fitted-SN & $0.9430 \pm 0.0021$ & $0.9720 \pm 0.0012$ & $0.9867 \pm 0.0005$\\
    New bias: Ideal & $0.9575 \pm 0.0009$ & $0.9784 \pm 0.0006$ & $0.9892 \pm 0.0004$\\
    \hline
    Old (Harris' ineq.) bias: Gibbs-fitted-IW & $0.9899 \pm 0.0009$ & $0.9945 \pm 0.0006$ & $0.9971 \pm 0.0004$\\
    Old (Harris' ineq.) bias: Gibbs-fitted-SN & $0.9743 \pm 0.0021$ & $0.9874 \pm 0.0012$ & $0.9944 \pm 0.0005$\\
    Old (Harris' ineq.) bias: Ideal & $0.9884 \pm 0.0009$ & $0.9937 \pm 0.0006$ & $0.9968 \pm 0.0004$\\
    \hline
  \end{tabular}
  \caption{Comparison of new (b) and old (a) bias terms on a synthetic benchmark.}
\end{table}

\begin{table}[H]
  \centering
\resizebox{\textwidth}{!}{
  \begin{tabular}{|l|c|c|c|c|c|c|c|c|}
    \hline
    Bias type: policy &    Yeast & PageBlok & OptDigits & SatImage & isolet & PenDigits & Letter & kropt \\
    Sample size &   1484 & 5473 & 5620 & 6435 & 7797 & 10992 & 20000 & 28056  \\
    \hline
    New bias: Gibbs-fitted-IW & $0.7053 \pm 0.0296$ & $0.9290 \pm 0.0033$ & $0.9397 \pm 0.0039$ & $0.9393 \pm 0.0057$ & $0.9539 \pm 0.0023$ & $0.9618 \pm 0.0026$ & $0.9707 \pm 0.0009$ & $0.9784 \pm 0.0010$\\
    New bias: Gibbs-fitted-SN & $0.5635 \pm 0.0415$ & $0.9220 \pm 0.0042$ & $0.8925 \pm 0.0061$ & $0.9185 \pm 0.0063$ & $0.9161 \pm 0.0020$ & $0.9543 \pm 0.0028$ & $0.9689 \pm 0.0008$ & $0.9779 \pm 0.0011$\\
    New bias: Ideal & $0.9791 \pm 0.0001$ & $0.9594 \pm 0.0004$ & $0.9343 \pm 0.0011$ & $0.9371 \pm 0.0009$ & $0.9601 \pm 0.0005$ & $0.9656 \pm 0.0007$ & $0.9818 \pm 0.0004$ & $0.9864 \pm 0.0003$\\
    \hline
    Old (Harris' ineq.) bias: Gibbs-fitted-IW & $0.8964 \pm 0.0168$ & $0.9849 \pm 0.0027$ & $0.9920 \pm 0.0022$ & $0.9871 \pm 0.0050$ & $0.9950 \pm 0.0010$ & $0.9908 \pm 0.0026$ & $0.9882 \pm 0.0009$ & $0.9903 \pm 0.0010$\\
    Old (Harris' ineq.) bias: Gibbs-fitted-SN & $0.8152 \pm 0.0245$ & $0.9788 \pm 0.0039$ & $0.9519 \pm 0.0059$ & $0.9680 \pm 0.0059$ & $0.9626 \pm 0.0020$ & $0.9835 \pm 0.0027$ & $0.9865 \pm 0.0008$ & $0.9898 \pm 0.0011$\\
    Old (Harris' ineq.) bias: Ideal & $0.9963 \pm 0.0006$ & $0.9964 \pm 0.0006$ & $0.9889 \pm 0.0012$ & $0.9854 \pm 0.0009$ & $0.9964 \pm 0.0006$ & $0.9944 \pm 0.0007$ & $0.9988 \pm 0.0003$ & $0.9982 \pm 0.0003$\\
    \hline
  \end{tabular}
}
\caption{Comparison of new (b) and old (a) bias terms on a UCI datasets.}
\end{table}

\section{More on the stopping criteria for $\tilde{V}\WIS_t$, $\tilde{U}\WIS_t$, and $\tilde{B}_t$ in \cref{alg:v_WIS2}}
\label{sec:appendix:stopping}

As discussed in \cref{sec:sim2}, we control the simulation error introduced by the output of \cref{alg:v_WIS2} by applying a stopping criterion based on the empirical Bernstein's inequality (\cref{thm:empirical_bernstein}).
In particular, for a user specified precision $\ve > 0$, the estimation of $V\WIS$ is stopped when
\begin{equation*}
  \ve \geq \sqrt{\frac{2 \widehat{\Var}(\tilde{V}\WIS_t)}{t}} + \frac{7}{3} \cdot \frac{2 x}{t-1}
\end{equation*}
is satisfied.
Suppose that the simulation has stopped after $T_{\ve}$ iterations.
Then, the above guarantees w.p.\ at least $1-e^{-x}, x > 0$ that $|V\WIS - \tilde{V}\WIS_{T_{\ve}}| \leq \ve$.
We note that this comes by a direct application of \cref{thm:empirical_bernstein} where the range $C=2$, since $V\WIS \leq 2$ a.s.

Similarly, we have a stopping criterion for $U\WIS$, that is we stop when
\begin{equation*}
  \ve \geq \sqrt{\frac{2 \widehat{\Var}(\tilde{U}\WIS_t)}{t}} + \frac{7}{3} \cdot \frac{2 x}{t-1}
\end{equation*}
is satisfied.
This gives w.h.p\ $|U\WIS - \tilde{U}\WIS_{T_{\ve}}| \leq \ve$.

In case of $\tilde{B}_T$, we control its simulation error indirectly through controlling an error $|Z^{\mathrm{inv}}_{T_{\ve}} - 1/Z| \leq \ve$, i.e.\ stopping when
\begin{equation*}
  \ve \geq \sqrt{\frac{2 \widehat{\Var}(Z^{\mathrm{inv}}_t)}{t}} + \frac{7}{3} \cdot \frac{M x}{t-1}~,
\end{equation*}
is satisfied, where $M = 1 / \sum_i \min_{a \in [K]} \frac{\pi(a|X_i)}{\pi_b(a|X_i)}$ (note that $1/Z \leq M$ a.s.\ for fixed $X_1^n$).
The reason for this becomes clear by observing a simple lower bound on $B$:
\[
  B = \min\pr{1, \frac{1}{\E\br{\frac{n}{Z} \bmid X_1^n}}}
  \geq
  \min\pr{1, \frac{1}{\E\br{n (Z^{\mathrm{inv}}_{T_{\ve}} + \ve) \bmid X_1^n}}}~.
\]

Finally, we note that convergence of $\tilde{V}\WIS_t$, $\tilde{U}\WIS_t$, and $\tilde{B}_t$ might take different number of steps and in practice one would split~\cref{alg:v_WIS2} into separate subroutines for estimation of respective quantities with different stopping criteria.
As mentioned before the sample variance can be easily computed online, for instance by using Welford's method.

\section{Additional proofs}
\label{sec:additional_proofs}

\subsection{Proofs from \cref{sec:proofs}}
\label{sec:proof_from_the_main_text}

\begin{nameddef}[\cref{lem:one_hot_id}]
  Suppose that reward function
  is `one-hot', that is, for any context $x$, $r(x, \astar)=1$ for some unique action $\astar$ and $r(x, a)=0$ for all $a \neq \astar$.
  Denote $\wstar_i = \frac{\pi(\astar \mid X_i)}{\pi_b(\astar \mid X_i)}$.
  Then,
\begin{align*}
  \E\br{\wh{v}\WIS(\pi) \bmid X_1^n}
  =
  \sum_{i=1}^n v(\pi \pmid X_i) \E\br{\frac{1}{w^{\star}_i + Z\deli} \bmid X_1^n}~.
\end{align*}
\end{nameddef}
\begin{proof}[Proof of \cref{lem:one_hot_id}]
  Consider a single summand of $\E[\vh\WIS(\pi) \mid X_1^n]$: we expand the expectation over $\pi_b$ and simplify the $i$-th term of the sum by noticing that for one-hot reward functions, $v(\pi \mid X_i) = \pi(\astar \mid X_i)$.
  \begin{align*}
  \E\br{\frac{W_i R_i}{W_1 + \dots + W_n} \bmid X_1^n}
  &= \sum_{a_1, \ldots, a_n}  \frac{\frac{\pi(a_i \mid X_i)}{\pi_b(a_i \mid X_i)} \cdot r(a_i \mid X_i)}{\frac{\pi(a_1 \mid X_1)}{\pi_b(a_1 \mid X_1)} + \dots + \frac{\pi(a_n \mid X_n)}{\pi_b(a_n \mid X_n)}} \cdot \pi_b(a_1 \mid X_1) \dots \pi_b(a_n \mid X_n)\\
    &= \sum_{a_1, \ldots, a_{i-1}, a_{i+1}, \dots, a_n}  \frac{\frac{\pi(\astar \mid X_i)}{\pi_b(\astar \mid X_i)}}{\frac{\pi(\astar \mid X_i)}{\pi_b(\astar \mid X_i)} + \sum_{j\neq i}\frac{\pi(a_j \mid X_j)}{\pi_b(a_j \mid X_j)}} \cdot \pi_b(\astar \mid X_i) \prod_{j \neq i} \pi_b(a_j \mid X_j)\\
    &= v(\pi \pmid X_i) \E\br{\frac{1}{\wstar_i + \sum_{j \neq i} W_j} \bmid X_1^n}
  \end{align*}
  Now, summing over $i \in [n]$ we get the statement.
\end{proof}

Note that we obtain an equality, and we can further upper-bound the rightmost term by $\E\br{\frac{1}{\min_{i\in[n]} \wstar_i + \sum_{j \neq i} W_j} \bmid X_1^n}$.

To prove \cref{prop:V_WIS_loo} we will need the following statement:
\begin{prop}
    \label{prop:V_loo}
    Let $S=((W_i,R_i))_{i=1}^n$ be independent random variables distributed according to some probability measure on $\sY_1\times \dots \times \sY_n$, let $f(S) = \frac{\sum_{i=1}^n W_i R_i}{\sum_{i=1}^n W_i}$, and $f_k(S\repk) = \frac{\sum_{i\neq k} W_i R_i}{\sum_{i \neq k}^n W_i}$.
    Let $E_k = R_k - f_k(S\delk)$.
    Then for all $k \in [n]$,
    \begin{align*}
      f(S) - f_k(S\delk) = \frac{W_k E_k}{\sum_{i=1}^n W_i}~.
    \end{align*}
  \end{prop}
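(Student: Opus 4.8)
The plan is to prove the identity by direct algebraic manipulation; no probabilistic content is needed, since the equality holds pointwise for every realization of $S$ (indeed for arbitrary reals $W_i, R_i$ as long as the denominators are nonzero, which is guaranteed a.s.\ because the importance weights are positive). First I would fix shorthand consistent with the rest of the paper: write $Z = \sum_{i=1}^n W_i$ for the full normalizer and $Z\delk = Z - W_k = \sum_{i\neq k} W_i$ for the leave-one-out normalizer, and similarly introduce the un-normalized numerators $N = \sum_{i=1}^n W_i R_i$ and $N\delk = N - W_k R_k = \sum_{i\neq k} W_i R_i$. With this notation $f(S) = N/Z$ and $f_k(S\delk) = N\delk/Z\delk$.

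Next I would put the difference over a common denominator:
\[
  f(S) - f_k(S\delk) = \frac{N}{Z} - \frac{N\delk}{Z\delk} = \frac{N\,Z\delk - N\delk\,Z}{Z\,Z\delk}.
\]
Expanding the numerator using $Z\delk = Z - W_k$ and $N\delk = N - W_k R_k$, the $NZ$ terms cancel and it collapses to $W_k(R_k Z - N)$, so that
\[
  f(S) - f_k(S\delk) = \frac{W_k}{Z}\cdot\frac{R_k Z - N}{Z\delk}.
\]

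Finally I would identify the second factor with $E_k$. Since $E_k = R_k - f_k(S\delk) = (R_k Z\delk - N\delk)/Z\delk$, a second application of $Z\delk = Z - W_k$ and $N\delk = N - W_k R_k$ gives $R_k Z\delk - N\delk = R_k Z - N$, hence $(R_k Z - N)/Z\delk = E_k$. Substituting yields $f(S) - f_k(S\delk) = W_k E_k / Z = \dfrac{W_k E_k}{\sum_{i=1}^n W_i}$, which is the claim.

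There is no genuine obstacle here: the statement is a routine algebraic identity. The only thing to be mildly careful about is the bookkeeping in the two expansions — the numerator of the difference $f(S)-f_k(S\delk)$ and the numerator of $E_k$ — and to record that the argument implicitly uses $Z>0$ (and $Z\delk>0$) so that all the divisions are well defined, which holds almost surely under the standing assumption that the importance weights are positive.
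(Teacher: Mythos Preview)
Your proof is correct and is exactly the ``simple algebra'' the paper has in mind: the paper does not spell out the computation for \cref{prop:V_loo} but simply points to Proposition~2 of \citet{kuzborskij2019efron}, whose content is precisely the pointwise identity you derived. Your common-denominator expansion and identification of $(R_k Z - N)/Z\delk$ with $E_k$ is the standard route, so there is nothing to add.
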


\begin{prop}
  \label{prop:V_WIS_loo}
  Let $f(S) = \frac{\sum_{i=1}^n W_i R_i}{\sum_{i=1}^n W_i}$.
  Then,
  \[
    \sum_{k=1}^n \E\br{(f(S) - f(S\repk))^2 \bmid W_1^k, X_1^n}
    \leq V\WIS
    = \sum_{k=1}^n \E\br{\pr{\frac{W_k}{\sum_{i=1}^n W_i} + \frac{W_k'}{W_k' + \sum_{i \neq k} W_i}}^2 \bmid W_1^k, X_1^n}~.
  \]
\end{prop}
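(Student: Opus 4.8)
The plan is to establish the bound \emph{termwise and pointwise}: for each $k \in [n]$ I will show that, almost surely,
\[
  \pr{f(S) - f(S\repk)}^2 \leq \pr{\frac{W_k}{Z} + \frac{W_k'}{Z\repk}}^2~,
\]
and then the proposition follows immediately by applying $\E\br{\,\cdot \mid W_1^k, X_1^n}$ to both sides (monotonicity of conditional expectation) and summing over $k \in [n]$, since the right-hand side then becomes exactly $V\WIS$.

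To obtain the pointwise inequality, fix $k \in [n]$ and note that $S$ and $S\repk$ differ only in their $k$th coordinate, so deleting that coordinate from either tuple produces the same sequence and hence the same leave-$k$-out \ac{WIS} estimator, which I write $f_k(S\delk)$. Applying \cref{prop:V_loo} to $S$ gives
\[
  f(S) - f_k(S\delk) = \frac{W_k E_k}{Z}, \qquad E_k := R_k - f_k(S\delk)~,
\]
and applying it to $S\repk$ (whose $k$th weight is $W_k'$ and whose $k$th reward is the corresponding independent copy $R_k'$) gives
\[
  f(S\repk) - f_k(S\delk) = \frac{W_k' E_k'}{Z\repk}, \qquad E_k' := R_k' - f_k(S\delk)~.
\]
Subtracting these two identities yields $f(S) - f(S\repk) = \frac{W_k E_k}{Z} - \frac{W_k' E_k'}{Z\repk}$.

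It remains to bound the ``residuals'' $E_k, E_k'$. Since $f_k(S\delk)$ is a convex combination of the rewards $(R_i)_{i \neq k}$, all of which lie in $[0,1]$, it also lies in $[0,1]$; combined with $R_k, R_k' \in [0,1]$ this yields $\abs{E_k} \leq 1$ and $\abs{E_k'} \leq 1$. The triangle inequality together with $W_k, W_k' \geq 0$ then gives $\abs{f(S) - f(S\repk)} \leq \frac{W_k}{Z} + \frac{W_k'}{Z\repk}$, and squaring produces the claimed pointwise bound. I do not anticipate a genuine obstacle here; the only step requiring care is the first one --- verifying that the two perturbed tuples share a common leave-$k$-out estimator so that \cref{prop:V_loo} applies verbatim to both --- and after that the argument is elementary. (Throughout, one works on the almost-sure event where $Z\delk > 0$, on which all the \ac{WIS} estimators appearing above are well defined.)
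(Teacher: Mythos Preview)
Your proposal is correct and follows essentially the same approach as the paper: both arguments use \cref{prop:V_loo} to write $f(S)-f(S\repk)=\tilde W_k E_k-\tilde U_k E_k'$ with $|E_k|,|E_k'|\le 1$, obtain the pointwise bound $(\tilde W_k+\tilde U_k)^2$, and then take conditional expectations and sum. The only cosmetic difference is that you reach the pointwise bound via the triangle inequality before squaring, whereas the paper expands the square and bounds term by term.
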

\begin{proof}
    Denote
    \begin{align*}
      \tilde{W}_k = \frac{W_k}{\sum_{i=1}^n W_i}~, \qquad \tilde{U}_k = \frac{W_k'}{W_k' + \sum_{i \neq k} W_i} \qquad k \in [n]~.
    \end{align*}
    By~\cref{prop:V_loo}
    \begin{align*}
      f(S) - f(S\repk)
      &=
        f(S) - f_k(S\delk) + f_k(S\delk) - f(S\repk)\\
      &=
        \frac{W_k E_k}{\sum_{i=1}^n W_i} - \frac{W_k' E_k'}{W'_k + \sum_{i \neq k} W_i}
        =
        \tilde{W}_k E_k - \tilde{U}_k E_k'
    \end{align*}
    where $E_k' = R_k' - f_k(S\delk)$.
    Taking square on both sides gives
    \begin{align*}
      \pr{f(S) - f(S\repk)}^2
      &=
        \tilde{W}_k^2 E_k^2 + \tilde{U}_k^2 (E_k')^2 - 2 \tilde{W}_k \tilde{U}_k E_k E_k' \\
      &\leq
        \tilde{W}_k^2 + \tilde{U}_k^2  + 2 \tilde{W}_k \tilde{U}_k \tag{Since $E_k, E_k' \in [-1, 1]$ a.s.} \\
      &=
        \pr{\tilde{W}_k + \tilde{U}_k}^2~.
    \end{align*}
  \end{proof}

\begin{proof}
From simple algebra (see Proposition 2 in \citep{kuzborskij2019efron} discussion), we have
  \[
    f(S) - f_k(S\delk) = \frac{W_k(R_k - f_k(S\delk))}{\sum_{i=1}^n W_i} \leq \frac{W_k}{\sum_{i=1}^n W_i} \qquad k \in [n]~.
  \]
  Then, the desired result follows from an application of \cref{prop:V_loo}, with $f = \vh\WIS$ and $S = \pr{(W_1, R_1), \ldots, (W_n, R_n)}$, given the contexts. 
\end{proof}

\subsection{Polynomial Bounds for Weighted Importance Sampling}

Since the exact calculation of $V\WIS$ could be prohibitive, we use a shortcut to lower bound the denominator $\sum_i W_i$.
The promised lower bound is based on the following (more or less standard) result:
\begin{lemma}
  \label{lem:bernstein_lower_tail}
  Assume that the non-negative random variables $W_1, W_2, \ldots, W_n$ are distributed independently from each other given $\sF_0$.
  Then, for any $t \in [0, \sum_{k=1}^n \E[W_k\mid\sF_0])$,
  \begin{align*}
    \P\pr{\sum_{i=1}^n W_i \leq t \bmid \sF_0} 
	\leq \exp\pr{- \frac{\pr{t - \sum_{k=1}^n \E\br{W_k\mid\sF_0}}^2}{2 \sum_{k=1}^n \E\br{W_k^2\mid\sF_0}}}
  \end{align*}
  and in particular for any $x>0$, with probability at least $1-e^{-x}$,
  \begin{equation}
    \label{eq:b_n}
    \sum_{i=1}^n W_i
    >
    \sum_{k=1}^n \E[W_k\mid\sF_0] - \sqrt{2 x \sum_{k=1}^n \E\br{W_k^2\mid\sF_0}}~.
  \end{equation}
\end{lemma}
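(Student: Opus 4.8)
The plan is to apply the Cram\'er--Chernoff method to the \emph{lower} tail of $\sum_i W_i$, carrying the conditioning on $\sF_0$ through every step. Fix $\lambda>0$. Since the event $\{\sum_i W_i\le t\}$ coincides with $\{e^{-\lambda\sum_i W_i}\ge e^{-\lambda t}\}$, the conditional Markov inequality gives
\[
  \P\!\left(\sum_i W_i\le t \,\Big|\, \sF_0\right) \le e^{\lambda t}\,\E\!\left[e^{-\lambda\sum_i W_i}\,\Big|\,\sF_0\right].
\]
Conditional independence of the $W_i$ given $\sF_0$ factorizes the right-hand expectation into $\prod_{i=1}^n \E[e^{-\lambda W_i}\mid\sF_0]$, reducing the task to a one-coordinate bound.

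For each factor I would use the elementary inequality $e^{-u}\le 1-u+\tfrac{u^2}{2}$, valid for all $u\ge 0$ (the difference vanishes to first order at $0$ and has nonnegative second derivative on $[0,\infty)$). Applied with $u=\lambda W_i\ge 0$ and followed by $1+s\le e^{s}$, this yields
\[
  \E[e^{-\lambda W_i}\mid\sF_0] \le 1-\lambda\,\E[W_i\mid\sF_0]+\tfrac{\lambda^2}{2}\,\E[W_i^2\mid\sF_0] \le \exp\!\Big(-\lambda\,\E[W_i\mid\sF_0]+\tfrac{\lambda^2}{2}\,\E[W_i^2\mid\sF_0]\Big).
\]
Writing $\mu=\sum_k\E[W_k\mid\sF_0]$ and $v=\sum_k\E[W_k^2\mid\sF_0]$, taking the product over $i$ and combining with the Markov step produces $\P(\sum_i W_i\le t\mid\sF_0)\le \exp\!\big(\lambda(t-\mu)+\tfrac{\lambda^2}{2}v\big)$.

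It remains to optimize the free parameter. Since $t<\mu$ by hypothesis, the exponent is a convex quadratic in $\lambda$ with minimizer $\lambda^\star=(\mu-t)/v>0$, and substituting gives exponent $-(\mu-t)^2/(2v)$, which is precisely the stated conditional tail bound (if $v=0$ then necessarily $\mu=0$ and the admissible interval for $t$ is empty, so there is nothing to prove). For the high-probability statement, I would set this bound equal to $e^{-x}$ and solve $(\mu-t)^2/(2v)=x$ for the branch $t<\mu$, obtaining $t=\mu-\sqrt{2xv}$; hence $\P(\sum_i W_i\le \mu-\sqrt{2xv}\mid\sF_0)\le e^{-x}$, which rearranges to \cref{eq:b_n}. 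There is no serious obstacle here; the only points needing a moment's care are the scalar estimate $e^{-u}\le 1-u+u^2/2$ on $u\ge0$ and the handling of the degenerate case $v=0$.
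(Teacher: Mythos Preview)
Your proof is correct and follows essentially the same approach as the paper: the Chernoff method on the lower tail, factorization via (conditional) independence, the scalar bound $e^{-u}\le 1-u+u^2/2$ for $u\ge 0$ followed by $1+s\le e^s$, and then optimization in $\lambda$. If anything, your write-up is slightly more careful, as you explicitly handle the degenerate case $v=0$ and keep the conditioning on $\sF_0$ throughout, whereas the paper drops it for notational convenience.
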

\begin{proof}
We drop conditioning on $\sF_0$ to simplify notation.
  Chernoff bound readily gives a bound on the lower tail
  \begin{align*}
    \P\pr{\sum_{i=1}^n X_i \leq t} \leq \inf_{\lambda > 0} e^{\lambda t} \E\br{e^{- \lambda \sum_{i=1}^n X_i}}~.
  \end{align*}
  By independence of $X_i$
  \begin{align*}
    \prod_{i=1}^n \E\br{e^{- \lambda X_i}}
    &\leq
    \prod_{i=1}^n \pr{1 - \lambda \E\br{X_i} + \frac{\lambda^2}{2} \E\br{X_i^2} } \tag{$e^{-x} \leq 1 - x + \frac{1}{2} x^2$ for $x \geq 0$}\\
    &\leq
    e^{- \lambda \sum_{i=1}^n \E\br{X_1} + \frac{\lambda^2}{2} \sum_{i=1}^n \E\br{X_i^2}} \tag{$1+x \leq e^x$ for $x \in \reals$ and i.i.d.\ assumption}
  \end{align*}
  Getting back to the Chernoff bound gives,
  \begin{align*}
    \lambda = \max\cbr{\frac{\sum_{i=1}^n \E\br{X_i} - t}{\sum_{i=1}^n \E\br{X_i^2}}, 0}~.
  \end{align*}
  This proves the first result.
  The second result comes by inverting the bound and solving a quadratic equation.
\end{proof}

\paragraph{\cref{prop:wis_cheb} (restated).}
\emph{With probability at least $1-3 e^{-x}$ for $x > 0$,}
\[
  v(\pi) \geq
  \frac{N_x}{n} \pr{
  \vh\WIS(\pi)
  - \sqrt{\frac{\sum_{k=1}^n \E[W_k^2 | X_k]}{N_x^2} \, e^x}
  }
  - \sqrt{\frac{x}{2 n}}~.
\]
where
\[
  N_x = n - \sqrt{2 x \sum_{k=1}^n \E\br{W_k^2\mid X_1^n}}~.
\]
\begin{proof}[Proof of \cref{prop:wis_cheb}]
The decomposition into the bias and the concentration is as in the proof of~\cref{thm:WIS_contextual_sim_lb_value}, where the concentration of contexts is handled once again through Hoeffding's inequality.
Hence, we'll focus only on the concentration.

Let $Z = \vh\WIS(\pi) - \E[\vh\WIS(\pi)]$.
Chebyshev's inequality gives us:
\begin{align*}
  \P\pr{|Z| \geq \sqrt{t \Var(\vh\WIS(\pi) \mid X_1^n)}} \leq \frac{1}{t} \qquad t > 0~.
\end{align*}
This implies
\begin{align*}
  \abs{\vh\WIS(\pi) - \E[\vh\WIS(\pi)]}
  \leq
  \sqrt{e^x \, \Var(\vh\WIS(\pi) \mid X_1^n)} \tag{w.p. at least $1-e^{-x}$, $x > 0$}\\
  \leq
  \sqrt{\frac{e^x}{N_x^2} \sum_{k=1}^n \E[W_k^2 | X_k]} \tag{w.p. at least $1-2 e^{-x}$ (union bound)}
\end{align*}
where by Efron-Stein's inequality and Proposition~2 of~\cite{kuzborskij2019efron}:
\begin{align*}
\Var(\vh\WIS(\pi) \mid X_1^n)
\leq
\E\br{\sum_{k=1}^n \pr{\vh\WIS_S(\pi) - \vh\WIS_{S\delk}(\pi)}^2 \;\middle|\; X_1^n}
\leq
\E\br{\frac{\sum_{k=1}^n W_k^2}{(\sum_{i=1}^n W_i)^2} \;\middle|\; X_1^n}
\end{align*}
and a lower bound on the sum of weights comes from~\cref{lem:bernstein_lower_tail}.
\end{proof}

\subsection{Confidence Bound for $\lambda$-Corrected Importance Sampling Estimator}
\label{sec:IS_lambda}
Recall the following empirical Bernstein bound given in Theorem~\ref{thm:empirical_bernstein}.
\begin{theorem}[\cite{maurer2009empirical}\footnote{\cite{maurer2009empirical} stated inequality in another direction. However, we can show the one we stated by the symmetry of Bernstein's inequality.}]
  \label{thm:empirical_bernstein}
  Let $Z, Z_1, \ldots, Z_n$ be i.i.d.\ random variables with values in $[0,C]$ and let $\conf > 0$.
  Then with probability at least $1-2e^{-\conf}$,
  \[
    \frac1n \sum_{i=1}^n Z_i - \E[Z] \leq \sqrt{\frac{2 \Var(Z_1, \ldots, Z_n) \conf}{n}} + \frac{7 C \conf}{3 (n-1)}
  \]
  where sample variance is defined as
  \begin{equation}
    \Var(Z_1, \ldots, Z_n) = \frac{1}{n (n-1)} \sum_{1 \leq i < j \leq n} \pr{Z_i - Z_j}^2~.
  \end{equation}
\end{theorem}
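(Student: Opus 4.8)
The plan is to obtain the stated upper-tail empirical Bernstein inequality from the original lower-tail form of \cite{maurer2009empirical} by a reflection argument, exactly as hinted in the footnote. First I would recall the Maurer--Pontil bound in its original orientation: for i.i.d.\ random variables $Z_1, \ldots, Z_n$ taking values in $[0,C]$ and any $\delta \in (0,1)$, with probability at least $1-\delta$,
\[
  \E[Z] - \frac1n \sum_{i=1}^n Z_i \leq \sqrt{\frac{2 \Var(Z_1, \ldots, Z_n) \ln(2/\delta)}{n}} + \frac{7 C \ln(2/\delta)}{3(n-1)}~,
\]
which follows from their $[0,1]$-valued statement by rescaling $Z_i \mapsto Z_i/C$ (the sample variance scales by $C^2$, hence $\sqrt{\Var}$ scales by $C$, and the linear term scales by $C$). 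Setting $\delta = 2e^{-\conf}$ gives $\ln(2/\delta) = \conf$ and the probability $1 - 2e^{-\conf}$ appearing in the theorem.

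Then I would apply this inequality to the reflected variables $Z_i' \defeq C - Z_i$, which are again i.i.d.\ and take values in $[0,C]$. Two elementary observations finish the proof: (i) the sample variance is invariant under this affine reflection, since $(Z_i' - Z_j')^2 = (Z_i - Z_j)^2$ termwise in the double sum, so $\Var(Z_1', \ldots, Z_n') = \Var(Z_1, \ldots, Z_n)$; and (ii) $\E[Z'] - \frac1n \sum_i Z_i' = \frac1n \sum_i Z_i - \E[Z]$. Substituting these identities into the bound for $(Z_i')$ yields exactly
\[
  \frac1n \sum_{i=1}^n Z_i - \E[Z] \leq \sqrt{\frac{2 \Var(Z_1, \ldots, Z_n) \conf}{n}} + \frac{7 C \conf}{3(n-1)}
\]
with probability at least $1 - 2e^{-\conf}$, which is the claim.

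There is essentially no obstacle here beyond bookkeeping: one must invoke the original Maurer--Pontil statement with the correct relationship between $\delta$ and $\conf$ (so that the reflected version carries probability $1 - 2e^{-\conf}$ rather than, say, $1 - e^{-\conf}$), and apply the $[0,C]$ rescaling consistently to both the $\sqrt{\cdot}$ term and the linear term. Everything else is the one-line reflection, together with the trivial shift-invariance of the sample variance.
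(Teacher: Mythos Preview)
Your proposal is correct and follows exactly the approach indicated by the paper, which does not give a standalone proof but only the footnote hint that the stated direction follows from the original Maurer--Pontil inequality ``by the symmetry of Bernstein's inequality.'' Your reflection $Z_i' = C - Z_i$ together with the rescaling to $[0,C]$ and the substitution $\delta = 2e^{-x}$ is precisely that symmetry argument made explicit.
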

The following proposition states a concentration bound for the value when using the $\lambda$-\ac{IS} estimator.
\paragraph{Proposition \ref{prop:IS_lambda_bernstein} (restated).}
\emph{
  For the $\lambda$-\ac{IS} estimator we have with probability at least $1-3e^{-\conf}$, for $\conf > 0$,
  \begin{align*}
    v(\pi)
    &\geq
      \wh{v}\ISl(\pi)
      -\sqrt{\frac{2x}{n} \, \Var(\wh{v}\ISl(\pi) \mid X_1^n)}
      - \frac{7 x}{3 \lambda (n-1)}\\
    &- \frac1n \sum_{k=1}^n \sum_{a \in [K]} \pi(a | X_k) \abs{\frac{\pi_b(a \mid X_k)}{\pi_b(a \mid X_k) + \lambda} - 1}
      - \sqrt{\frac{x}{2 n}}~.
  \end{align*}
  and the variance of the estimator is defined as
  \begin{equation}
    \Var(\wh{v}\ISl(\pi)) = \frac{1}{n (n-1)} \sum_{1 \leq i < j \leq n} \pr{W_i^{\lambda} R_i - W_j^{\lambda} R_j}^2~.
  \end{equation}
  }
\begin{proof}
We start with the decomposition
\begin{align*}
  v(\pi) - \wh{v}\ISl(\pi)
  =
  \underbrace{
  v(\pi) - \E\br{\wh{v}\ISl(\pi) \mid X_1^n}
  }_{\mathrm{Bias}}
  +
  \underbrace{
  \E\br{\wh{v}\ISl(\pi) \mid X_1^n} - \wh{v}\ISl(\pi)
  }_{\mathrm{Concentration}}~.
\end{align*}
Observing that $W_k^{\lambda} \leq 1/\lambda$
The concentration term is bounded by \cref{thm:empirical_bernstein} with $C = 1/\lambda$, that is:
\[
  \E\br{\wh{v}\ISl(\pi) \mid X_1^n} - \wh{v}\ISl(\pi)
  \geq
  \sqrt{\frac{2x}{n} \, \Var(\wh{v}\ISl(\pi) \mid X_1^n)}
  + \frac{7 x}{3 \lambda (n-1)}~.
\]
Now we focus on the bias term which is further decomposed as follows:
\begin{align*}
  v(\pi) - \E\br{\wh{v}\ISl(\pi) \pmid X_1^n}
  =
  v(\pi) - \frac1n \sum_{k=1}^n v(\pi | X_k) + \frac1n \sum_{k=1}^n v(\pi | X_k) - \E\br{\wh{v}\ISl(\pi) \pmid X_1^n}
\end{align*}
Since $(v(\pi \pmid X_k))_{k \in [n]}$ are independent and they take values in the range $[0,1]$, 
by Hoeffding's inequality we have w.p.\ at least $1-e^{-x}, x \geq 0$ that
\begin{align*}
  \frac1n \sum_{k=1}^n v(\pi \pmid X_k) - v(\pi) \leq \sqrt{\frac{x}{2 n}}~.
\end{align*}
Finally,
\begin{align*}
  \E\br{\wh{v}\ISl(\pi) \bmid X_1^n} - \frac1n \sum_{k=1}^n v(\pi \pmid X_k)
  &=
    \frac1n \sum_{k=1}^n \E\br{ \pr{W_k^{\lambda} - W_k} R_k \bmid X_k}\\
  &=
    \frac1n \sum_{k=1}^n \E\br{ \pr{\frac{\pi(A_k \mid X_k)}{\pi_b(A_k \mid X_k) + \lambda} - \frac{\pi(A_k \mid X_k)}{\pi_b(A_k \mid X_k)}} R_k \bmid X_k}\\
  &=
    \frac1n \sum_{k=1}^n \sum_{a \in [K]} \pi(a | X_k) \pr{\frac{\pi_b(a \mid X_k)}{\pi_b(a \mid X_k) + \lambda} - 1} r(X_k, a)\\
  &\leq
    \frac1n \sum_{k=1}^n \sum_{a \in [K]} \pi(a | X_k) \abs{\frac{\pi_b(a \mid X_k)}{\pi_b(a \mid X_k) + \lambda} - 1}~.
\end{align*}
Putting all together and applying a union bound we get the statement w.p.\ at least $1-3 e^{-x}$.
\end{proof}

\subsection{Confidence Bound for $\lambda$-Corrected \acl{DR} Estimator}
\label{sec:DR_lambda}
Doubly-Robust (\ac{DR}) estimators were introduced in the machine learning literature for off-policy evaluation by \cite{dudik2011doubly}, and refined in \cite{farajtabar2018more,su2019doubly}. They combine a direct model estimator and \ac{IS}, finding a compromise that should behave like \ac{IS} with a reduced variance. To compute $\vh\DR$, a reward estimator $\eta : \sX \times [K] \to [0,1]$ must be learned on a subset of the logged dataset. Then, 
\[
\vh\DR(\pi) = \hat{V}_{\eta}(\pi) + \frac{1}{n} \sum_{i=1}^n W_i(R_i - \eta(X_i,A_i)),
\]
where $\hat{V}_{\eta}(\pi) = (1/n) \sum_{i=1}^n \sum_{a \in [K]}\pi(a|X_i)\eta(X_i,a)$ is the expected reward of $\pi$ given $\eta$.
Now we prove a very similar bound for the $\lambda$-Corrected \acl{DR} estimator.
\paragraph{Proposition \ref{prop:DR_lambda_bernstein} (restated).}
\emph{For the $\lambda$-\ac{DR} estimator defined w.r.t.\ a fixed $\eta : \sX \times [K] \to [0,1]$ we have with probability at least $1-3e^{-\conf}$, for $\conf > 0$,
  \begin{align*}
  v(\pi)
  &\geq
    \wh{v}\DRl(\pi)
    -\sqrt{\frac{2x}{n} \, \Var(\wh{v}\DRl(\pi) \mid X_1^n)}
    - \frac{7}{3} \pr{1 + \frac{1}{\lambda}} \frac{x}{n-1}\\
  &- \frac1n \sum_{k=1}^n \sum_{a \in [K]} \pi(a | X_k)
    \pr{
    \abs{ \frac{\pi_b(a \mid X_k)}{\pi_b(a \mid X_k) + \lambda} - 1}
    +
    \eta(a | X_k) \pr{1 - \frac{\pi(a \mid X_k)}{\pi_b(a \mid X_k) + \lambda}}
    }
    - \sqrt{\frac{x}{2 n}}~.
\end{align*}
  and the variance of the estimator is defined as
  \begin{equation}
    \Var(\wh{v}\DRl(\pi)) = \frac{1}{n (n-1)} \sum_{1 \leq i < j \leq n} \pr{Z_i - Z_j}^2
  \end{equation}
  where $Z_i = W_i^{\lambda} (R_i - \eta(X_i, A_i)) + \sum_{a \in [K]} \pi(a | X_i) \eta(a, X_i)$.
  }
\begin{proof}
  We follow the path in as in the proof of \cref{prop:IS_lambda_bernstein} with minor modifications.
  Once again, considering the decomposition
\begin{align*}
  v(\pi) - \wh{v}\DRl(\pi)
  =
  \underbrace{
  v(\pi) - \E\br{\wh{v}\DRl(\pi) \mid X_1^n}
  }_{\mathrm{Bias}}
  +
  \underbrace{
  \E\br{\wh{v}\DRl(\pi) \mid X_1^n} - \wh{v}\DRl(\pi)
  }_{\mathrm{Concentration}}~.
\end{align*}
and observing that $W_k^{\lambda} \leq 1/\lambda$,
the concentration term is bounded by \cref{thm:empirical_bernstein} with $C = 1 + 1/\lambda$ assuming that $\|\eta\|_{\infty} \leq 1$, that is:
\[
  \E\br{\wh{v}\DRl(\pi) \mid X_1^n} - \wh{v}\DRl(\pi)
  \geq
  \sqrt{\frac{2x}{n} \, \Var(\wh{v}\DRl(\pi) \mid X_1^n)}
  + \frac{7}{3} \pr{1 + \frac{1}{\lambda}} \frac{x}{n-1}~.
\]
Now we focus on the bias term which is further decomposed as follows:
\begin{align*}
  v(\pi) - \E\br{\wh{v}\DRl(\pi) \pmid X_1^n}
  =
  v(\pi) - \frac1n \sum_{k=1}^n v(\pi | X_k) + \frac1n \sum_{k=1}^n v(\pi | X_k) - \E\br{\wh{v}\DRl(\pi) \pmid X_1^n}
\end{align*}
As in the proof of \cref{prop:IS_lambda_bernstein} w.p.\ at least $1-e^{-x}, x \geq 0$ we have
\begin{align*}
  \frac1n \sum_{k=1}^n v(\pi \pmid X_k) - v(\pi) \leq \sqrt{\frac{x}{2 n}}~.
\end{align*}
Finally,
\begin{align*}
  &\E\br{\wh{v}\DRl(\pi) \bmid X_1^n} - \frac1n \sum_{k=1}^n v(\pi \pmid X_k)\\
  &=
    \frac1n \sum_{k=1}^n \pr{ \E\br{ W_k^{\lambda} (R_k - \eta(X_k, A_k)) - W_k R_k \bmid X_k} + \sum_{a \in [K]} \pi(a | X_k) \eta(a, X_k)
    }\\
  &=
    \frac1n \sum_{k=1}^n \pr{ \E\br{ (W_k^{\lambda} - W_k) R_k - W_k^{\lambda} \eta(X_k, A_k)  \bmid X_k} + \sum_{a \in [K]} \pi(a | X_k) \eta(a, X_k)
    }\\
  &=
    \frac1n \sum_{k=1}^n \sum_{a \in [K]} \pi(a | X_k) \pr{ \frac{\pi_b(a \mid X_k)}{\pi_b(a \mid X_k) + \lambda} - 1} r(X_k, a) \\
    &+
    \frac1n \sum_{k=1}^n \sum_{a \in [K]} \pi(a | X_k) \eta(a | X_k) \pr{1 - \frac{\pi(a \mid X_k)}{\pi_b(a \mid X_k) + \lambda}}\\
  &\leq
    \frac1n \sum_{k=1}^n \sum_{a \in [K]} \pi(a | X_k)
    \pr{
    \abs{ \frac{\pi_b(a \mid X_k)}{\pi_b(a \mid X_k) + \lambda} - 1}
    +
    \eta(a | X_k) \pr{1 - \frac{\pi(a \mid X_k)}{\pi_b(a \mid X_k) + \lambda}}
    }
\end{align*}
Putting all together and applying a union bound we get the statement w.p.\ at least $1-3 e^{-x}$.
\end{proof}

\section{Additional Experimental Details}
\label{sec:exp_details}

\subsection{Policies.}
\paragraph{Parametrized oracle-based policies. }
For a given dataset $((\bx_i, y_i))_{i=1}^n \subset (\sX \times \sY)^n$, we assume we have access to an oracle $\rho:\sX \to \sY$ that maps contexts to their true label\footnote{In general, this oracle has to be learnt, see discussions on datasets below.}.  
We define an ideal Gibbs policy as
$\pi^{\text{ideal}}(y \mid \bx) \propto e^{\frac{1}{\tau} \mathbb{I}\{y=\rho(\bx)\}}$
and $\tau > 0$ is a temperature parameter.
The smaller $\tau$ is, the more peaky is the distribution on the predicted label.
To create mismatching policies, we consider a \emph{faulty} policy type for which the peak is shifted to another, wrong action for a set of faulty actions $F \subset [K]$ (i.e., if $\rho(\bx) \in F$, the peak is shifted by $1$ cyclically), that is, a faulty policy $\pi^{\text{faulty}(F)}$ is the same as the ideal policy when $\rho(\bx) \not\in F$, and it has distribution
$\pi^{\text{faulty}(F)}(y \mid \bx) \propto e^{\frac{1}{\tau} \mathbb{I}\{y-1= \rho(\bx) \bmod K\}}$.

In the following we consider faulty behavior policies, while one among the target policies is \emph{ideal}.

\paragraph{Learnt policies. }
\label{sec:exp_details:learnt_policies}
There is an important literature on off-policy learning \citep{swaminathan2015batch,swaminathan2015counterfactual,joachims2018deep} that considers the problem of directly learning a policy from logged bandit feedback. These algorithms minimize a loss defined by either \ac{IS} or \ac{WIS} on a parametrized family of policy. 
We implements those two type of parametrized policies as follows: we introduce
$\pi^{\bTheta}(y=k \mid \bx) \propto e^{\frac{1}{\tau} \bx \tp \btheta_k}$ with two choices of parameters given by the optimization problems:
$
  \bhTheta\subIS \in \argmax_{\bTheta \in \reals^{d \times K}} \vh\IS(\pi^{\bTheta})~,  
  \bhTheta\subWIS \in \argmax_{\bTheta \in \reals^{d \times K}} \vh\WIS(\pi^{\bTheta})
$.
In practice we obtain these by running gradient descent with step size $0.01$ for $10^5$ steps.
In all cases the temperature is set to $\tau=0.1$.

\subsection{Datasets and oracles}

\paragraph{Synthetic dataset. } To allow for a precise control of the distribution of the contexts, as well as of the sample size, we generate an underlying multiclass classification problem
through the scikit-learn function \verb+make_classification()+ 
\footnote{See \href{https://scikit-learn.org/stable/modules/generated/sklearn.datasets.make_classification.html}{scikit-learn documentation}}.
Then we obtain a ground truth oracle by training a classifier $\wh{r}$ with a regularized logistic regression (with hyperparameter tuned on the validation set).

\paragraph{Real Datasets.  }
The chosen 8 datasets (see Table~\ref{tab:datasets} in \cref{sec:exp_details}) are loaded from OpenML \citep{dua2017openml}, using scikit-learn \citep{scikit-learn}.
To simplify and stabilize the Gibbs policy construction process, we use the true labels as the peaks of the Gibbs oracle.
In the literature on off-policy evaluation, some experimental settings rely on a ground truth function, which is a multi-class classifier learned on a held-out full-information dataset. This ground truth then replaces the true labels in the policies. Depending on the accuracy of the learnt function, this might naturally induce noise in the policies by having them make mistakes due to a relatively bad oracle. 
Note that in the case of synthetic datasets, it is easy and costless to generate a large train set, get a highly accurate classifier, and discard this data. 
However, for real datasets, the more data is used for training the oracle, the less is available to generate a logged dataset and perform the actual off-policy evaluation experiments. 

While this moves the process away from practice, it has the advantage of allowing a precise control of the values of the policies we create. This is a key point to design stable and reproducible experiments. Learning perfectly interpolating classifiers would lead to the same results, except for the time spent and the data used to do so. 
\paragraph{Baselines.}
In addition to the confidence bound discussed in~\cref{sec:baselines} we consider the standard \ac{DR} estimator and the recent estimation algorithm of~\cite{karampatziakis2019empirical} based on \acf{EL}.
For \ac{DR} (and $\lambda$-DR), rewards are modeled by a ridge regressor (one per class) where a hyperparameter is tuned by a $10$-fold cross-validation (leave-one-out cross-validation for sample size $\leq 100$).
For both $\lambda$-\ac{IS} and $\lambda$-DR, $\lambda$ is set to $1/\sqrt{n}$.
\begin{table}[H]
    \centering
    \small{
    \begin{tabular}{c|c|c|c|c|c|c|c|c|}
         name &   Yeast & PageBlok & OptDigits & SatImage & isolet & PenDigits & Letter & kropt \\
         \hline
         OpenML ID & 181 &  30 & 28 & 182 & 300 & 32 & 6 & 184 \\
         \hline
         Size & 1484 & 5473 & 5620 & 6435 & 7797 & 10,992 & 20,000 & 28,056 \\
    \end{tabular}
    }
    \caption{Real Datasets used in experiments}
    \label{tab:datasets}    
  \end{table}

 \paragraph{Empirical coverage analysis: the case of the Empirical Likelihood estimator. }

 We run the same experiment as that presented in Figure~\ref{fig:coverage-analysis} to study the tightness of the returned lower bound for each estimator: the Gibbs temperature is $\tau=0.3$ and the sample size is $N=1000$ (new dataset for each run), so that the Effective Sample Size is on average $650 \pm 10$. Results are shown on Figure~\ref{fig:EL-tests}. These simulations highlight two interesting facts that make \ac{EL} a slightly different solution to our problem than all other state-of-the-art estimators. First, the returned lower bound is always very close to the true value, and on average just slightly under it. But while this should be a perfect property for our task, the returned value also suffers from quite a large variance such that in many runs the lower bound is larger than the true value (the confidence interval is violated). This seems to indicate that our setting has not yet reached the asymptotic regime in which the confidence interval should have a coverage probability close to $1-\delta$. We conjecture that this may explain the bad performance of EL in our experiments on data (see Section~\ref{sec:experiments}).
 
 \begin{figure}
     \centering
     \includegraphics[width=7cm]{emp_cov_EL_N50-better-annotated.pdf}
     \includegraphics[width=7cm]{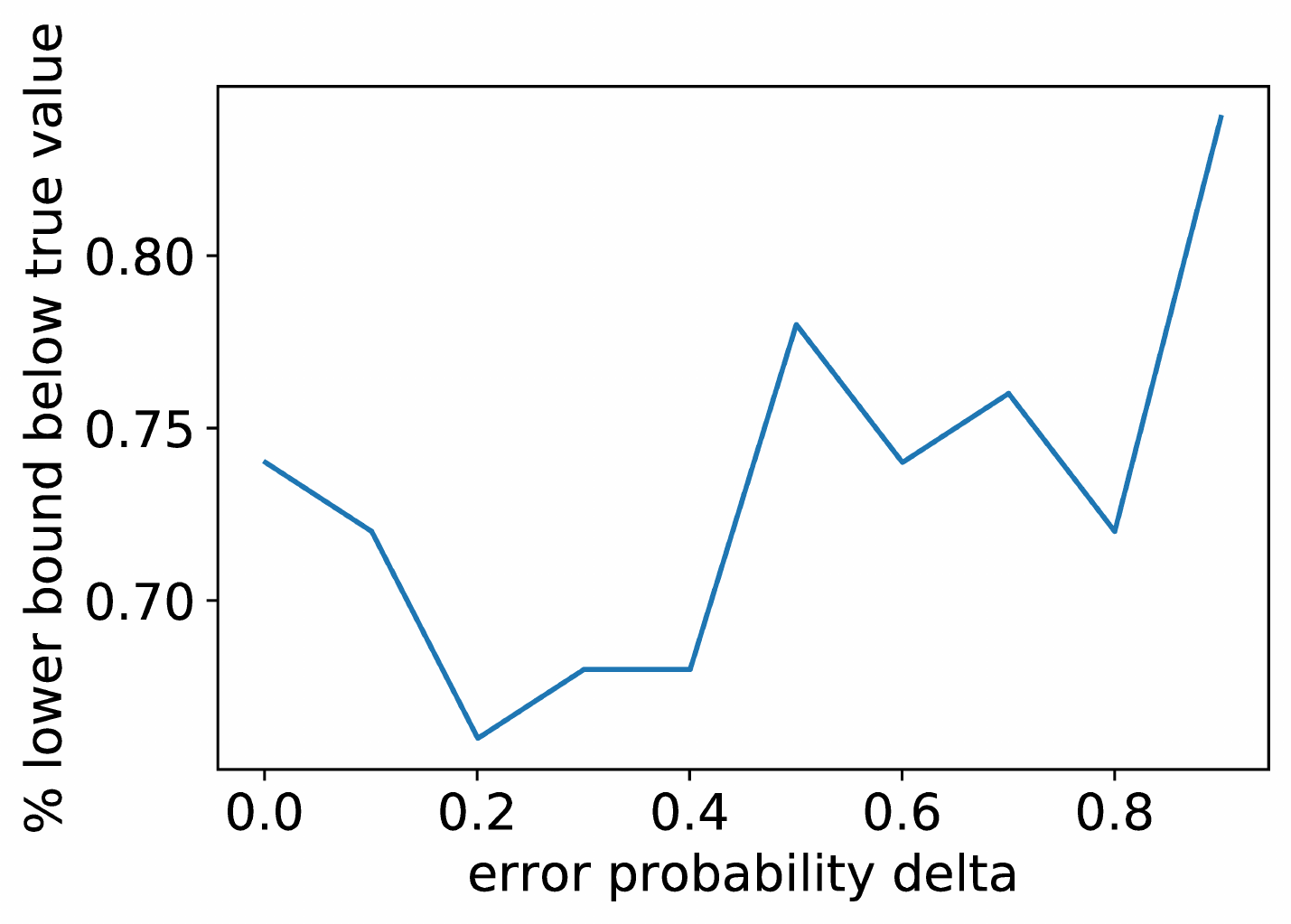}
     \caption{\textbf{Left:}Empirical tightness of the EL lower bound estimator. The returned value is on average very close to the true value. \textbf{Right:} Average rate of violated confidence interval: the empirical coverage is much worse than the true one ($\delta$). Results averaged over 50 runs}
     \label{fig:ap-EL-tests}
 \end{figure}

\section{Implementation of \cref{alg:v_WIS2}}
\label{sec:listing}

In this section we provide a code listing in Python for computing the bound of \cref{thm:WIS_contextual_sim_lb_value}.
In particular, the function $\verb!eslb(...)!$ implements computation of the bound through the Monte-Carlo simulation described in \cref{alg:v_WIS2}.
The function \verb!eslb(t_probs, b_probs, weights, rewards, delta, n_iterations, n_batch_size)! takes $7$ arguments: \verb!t_prob! and \verb!b_prob! are $[0,1]^{n \times K}$ matrices where the $i$-th row corresponds to $\pi(\cdot | X_i)$ and $\pi_b(\cdot | X_i)$ respectively.
Next, \verb!weights! is an vector of importance weights belonging to $\reals_+^n$, similarly \verb!rewards! is a reward vector in $[0,1]^n$, and $\delta \in (0,1)$ is an error probability (recall that the lower bound holds with probability at least $1-\delta$).
Finally, \verb!n_iterations! and \verb!n_batch_size! are Monte-Carlo iterations and the sample size (batch size) used in the simulation (larger \verb!n_batch_size! requires more memory but ensures faster convergence of the simulation).
\verb!eslb(...)! returns a Python dictionary holding $5$ enries: entry \verb!lower_bound! corresponds to the actual lower bound computed according to \cref{thm:WIS_contextual_sim_lb_value}; \verb!est_value! is $\vh(\pi)$, \verb!concentration! is a concentration term denoted by $\epsilon$ in \cref{thm:WIS_contextual_sim_lb_value}, \verb!mult_bias! is a multiplicative bias denoted by $B$, and \verb!concentration_of_contexts! is a $\sqrt{x /(2 n)}$ term.

\vfill

\lstset{ 
upquote=true,
columns=fullflexible,
basicstyle=\scriptsize\ttfamily,
literate={*}{{\char42}}1
         {-}{{\char45}}1
         {\ }{{\copyablespace}}1
}

\newcommand{\copyablespace}{\BeginAccSupp{method=hex,unicode,ActualText=00A0}\hphantom{x}\EndAccSupp{}}

\pagebreak
\begin{lstlisting}[language=Python,caption={Computation of the bound of \cref{thm:WIS_contextual_sim_lb_value}: ``eslb(...)'' function.}]
# Copyright 2020 DeepMind Technologies Limited.
#  
#  
# Licensed under the Apache License, Version 2.0 (the "License");
# you may not use this file except in compliance with the License.
# You may obtain a copy of the License at
#  
# https://www.apache.org/licenses/LICENSE-2.0
#  
# Unless required by applicable law or agreed to in writing, software
# distributed under the License is distributed on an "AS IS" BASIS,
# WITHOUT WARRANTIES OR CONDITIONS OF ANY KIND, either express or implied.
# See the License for the specific language governing permissions and
# limitations under the License.

from __future__ import division
from math import sqrt, log as ln
import numpy as np

def sample_from_simplices_m_times(p, m):
    """ Sample from n probability simplices m times.

    p -- n times K (matrix where earch row describes a probability simplex)
    m -- number of times to sample

    Returns n-times-m matrix of indices of simplex corners.
    """
    axis = 1
    r = np.expand_dims(np.random.rand(p.shape[1-axis], m), axis=axis)
    p_ = np.expand_dims(p.cumsum(axis=axis), axis=2)
    return (np.repeat(p_, m, axis=2) > r).argmax(axis=1)

def eslb(t_probs, b_probs, weights, rewards, delta, n_iterations, n_batch_size):
    """ Computes Efron-Stein lower bound of Theorem 1 as described in Algorithm 1.
    Here n is a sample size, while K is a number actions.

    t_probs -- n-times-K matrix, where $i$-th row correponds to $\pi(\cdot | X_i)$
    b_probs -- n-times-K matrix, where $i$-th row correponds to $\pi_b(\cdot | X_i)$
    weights -- n-sized vector of importance weights
    rewards -- n-sized reward vector
    delta -- error probability in (0,1)
    n_iterations -- Monte-Carlo simulation iterations
    n_batch_size -- Monte-Carlo simulation batch size

    Returns dictionary with 5 enries: lower_bound corresponds to the actual lower bound;
    est_value is an empirical value, concentration is a concentration term, mult_bias
    is a multiplicative bias, and while concentration_of_contexts is a term responsible
    for concentration of contexts.
    """
    conf = ln(2.0/delta)      
    n = len(weights)
    ix_1_n = np.arange(n)
    W_cumsum = weights.cumsum()
    W_cumsum = np.repeat(np.expand_dims(W_cumsum, axis=1), n_batch_size, axis=1)
    W = np.repeat(np.expand_dims(weights, axis=1), n_batch_size, axis=1)
    
    weight_table = t_probs / b_probs
     
    V_unsumed = np.zeros((n,))
    E_V_unsumed = np.zeros((n,))
    E_loo_recip_W = 0.0

    for i in range(n_iterations):
        A_sampled = sample_from_simplices_m_times(b_probs, n_batch_size)
        W_sampled = weight_table[ix_1_n, A_sampled.T].T
        W_sampled_cumsum = W_sampled[::-1, :].cumsum(axis=0)[::-1, :]
        Z = np.copy(W_cumsum)
        Z[:-1, :] += W_sampled_cumsum[1:, :]

        A_sampled_for_U = sample_from_simplices_m_times(b_probs, n_batch_size)
        W_sampled_for_U = weight_table[ix_1_n, A_sampled_for_U.T].T
        A_sampled_for_B = sample_from_simplices_m_times(b_probs, n_batch_size)
        W_sampled_for_B = weight_table[ix_1_n, A_sampled_for_B.T].T        

        Z_repk = Z - W + W_sampled
        W_tilde = W / Z
        U_tilde = W_sampled_for_U / Z_repk
        V_t = (W_tilde + U_tilde)**2

        E_V_new_item = ((W_sampled / W_sampled.sum(axis=0))**2).mean(axis=1)
        V_new_item = V_t.mean(axis=1)

        E_V_unsumed += (E_V_new_item - E_V_unsumed) / (i+1)
        V_unsumed += (V_new_item - V_unsumed) / (i+1)
        
        loo_sum_W = np.outer(np.ones((n,)), np.sum(W_sampled_for_B, axis=0)) - W_sampled_for_B
        recip_min_loo_sum_W = 1.0 / np.min(loo_sum_W, axis=0)
        E_loo_recip_W += (recip_min_loo_sum_W.mean() - E_loo_recip_W) / (i+1)
                
    V = V_unsumed.sum()
    E_V = E_V_unsumed.sum()

    eff_N = 1.0 / E_loo_recip_W
    mult_bias = min(1.0, eff_N / n)
    concentration = sqrt(2.0 * (V + E_V) * (conf + 0.5 * ln(1 + V/E_V)))
    concentration_of_contexts = sqrt(conf / (2*n))
    est_value = weights.dot(rewards) / weights.sum()
    lower_bound = mult_bias * (est_value - concentration ) - concentration_of_contexts

    return dict(lower_bound=max(0, lower_bound), est_value=est_value, concentration=concentration,
                mult_bias=mult_bias, concentration_of_contexts=concentration_of_contexts)  
\end{lstlisting}

\section{Value Bound Decomposition}
\label{sec:decomp}
In this section we present the decomposition of each confidence bound on the value for various estimators evaluated in \cref{sec:experiments}.
In particular, the decomposition is done w.r.t.\ the respective lower bounds on the concentration, bias, and concentration of contexts terms:
\begin{align*}
  v(\pi) - \wh{v}(\pi)
  =
  \underbrace{
  v(\pi) - \E\br{v(\pi) \,|\, X_1^n}
  }_{\text{Concentration of contexts}}
  +
  \underbrace{
  \E\br{v(\pi) \,|\, X_1^n} - \E\br{\wh{v}(\pi) \mid X_1^n}
  }_{\text{Bias}}
  +
  \underbrace{
  \E\br{\wh{v}(\pi) \mid X_1^n} - \wh{v}(\pi)
  }_{\text{Concentration}}~.
\end{align*}
In the following tables each term is presented w.r.t. three target policies discussed in \cref{sec:summary_exp_setting}:
That is \verb!Ideal! is $\pi^{\text{ideal}}$, while \verb!Gibbs-fitted-IW! is $\pi^{\bhTheta\subIS}$, and \verb!Gibbs-fitted-SN! is $\pi^{\bhTheta\subWIS}$.
\subsection{Synthetic Dataset}

\begin{table}[H]
  \centering
  \caption{Concentration term $\eps$ for different confidence intervals and target policies.}
\begin{tabular}{|c|c|c|c|}
  \hline
  Concentration & 5000 & 10000 & 20000\\
  \hline
  \ac{ESLB}: Ideal & 0.680 $\pm$ 0.061 & 0.497 $\pm$ 0.019 & 0.346 $\pm$ 0.013\\
  \ac{ESLB}: Gibbs-fitted-IW & 0.650 $\pm$ 0.079 & 0.498 $\pm$ 0.018 & 0.363 $\pm$ 0.011\\
  \ac{ESLB}: Gibbs-fitted-SN & 0.770 $\pm$ 0.068 & 0.561 $\pm$ 0.029 & 0.378 $\pm$ 0.017\\
  \hline
  $\lambda$-\ac{IS}: Ideal & 0.346 $\pm$ 0.023 & 0.271 $\pm$ 0.008 & 0.206 $\pm$ 0.007\\
  $\lambda$-\ac{IS}: Gibbs-fitted-IW & 0.350 $\pm$ 0.024 & 0.274 $\pm$ 0.008 & 0.208 $\pm$ 0.007\\
  $\lambda$-\ac{IS}: Gibbs-fitted-SN & 0.348 $\pm$ 0.024 & 0.273 $\pm$ 0.008 & 0.207 $\pm$ 0.007\\
  \hline
  Cheb-\ac{WIS}: Ideal & 5.437 $\pm$ 0.000 & 3.242 $\pm$ 0.000 & 2.030 $\pm$ 0.000\\
  Cheb-\ac{WIS}: Gibbs-fitted-IW & 4.006 $\pm$ 0.438 & 2.969 $\pm$ 0.086 & 2.034 $\pm$ 0.026\\
  Cheb-\ac{WIS}: Gibbs-fitted-SN & 6.991 $\pm$ 0.227 & 3.982 $\pm$ 0.122 & 2.344 $\pm$ 0.027\\
  \hline
  DR: Ideal & - & - & -\\
  DR: Gibbs-fitted-IW & - & - & -\\
  DR: Gibbs-fitted-SN & - & - & -\\
  \hline
  $\lambda$-DR: Ideal & 0.412 $\pm$ 0.018 & 0.305 $\pm$ 0.018 & 0.218 $\pm$ 0.009\\
  $\lambda$-DR: Gibbs-fitted-IW & 0.435 $\pm$ 0.017 & 0.310 $\pm$ 0.023 & 0.228 $\pm$ 0.006\\
  $\lambda$-DR: Gibbs-fitted-SN & 0.416 $\pm$ 0.030 & 0.310 $\pm$ 0.013 & 0.210 $\pm$ 0.010\\
  \hline
  Emp.Lik. Ideal & - & - & -\\
  Emp.Lik. Gibbs-fitted-IW & - & - & -\\
  Emp.Lik. Gibbs-fitted-SN & - & - & -\\
  \hline
\end{tabular}
\end{table}
\vfill
\pagebreak

\begin{table}[H]
  \centering
  \caption{Bias term $B$ for different confidence intervals and target policies.}
\begin{tabular}{|c|c|c|c|}
  \hline
  Bias (multiplicative for \ac{WIS}-based CIs) & 5000 & 10000 & 20000\\
  \hline
  \ac{ESLB}: Ideal & 0.988 $\pm$ 0.000 & 0.994 $\pm$ 0.000 & 0.997 $\pm$ 0.000\\
  \ac{ESLB}: Gibbs-fitted-IW & 0.992 $\pm$ 0.001 & 0.995 $\pm$ 0.000 & 0.997 $\pm$ 0.000\\
  \ac{ESLB}: Gibbs-fitted-SN & 0.984 $\pm$ 0.001 & 0.992 $\pm$ 0.001 & 0.996 $\pm$ 0.000\\
  \hline
  $\lambda$-\ac{IS}: Ideal & 0.293 $\pm$ 0.000 & 0.261 $\pm$ 0.000 & 0.219 $\pm$ 0.000\\
  $\lambda$-\ac{IS}: Gibbs-fitted-IW & 0.212 $\pm$ 0.025 & 0.232 $\pm$ 0.008 & 0.217 $\pm$ 0.004\\
  $\lambda$-\ac{IS}: Gibbs-fitted-SN & 0.393 $\pm$ 0.011 & 0.347 $\pm$ 0.014 & 0.272 $\pm$ 0.005\\
  \hline
  Cheb-\ac{WIS}: Ideal & 0.599 $\pm$ 0.000 & 0.715 $\pm$ 0.000 & 0.800 $\pm$ 0.000\\
  Cheb-\ac{WIS}: Gibbs-fitted-IW & 0.671 $\pm$ 0.024 & 0.733 $\pm$ 0.006 & 0.800 $\pm$ 0.002\\
  Cheb-\ac{WIS}: Gibbs-fitted-SN & 0.538 $\pm$ 0.008 & 0.671 $\pm$ 0.007 & 0.776 $\pm$ 0.002\\
  \hline
  DR: Ideal & - & - & -\\
  DR: Gibbs-fitted-IW & - & - & -\\
  DR: Gibbs-fitted-SN & - & - & -\\
  \hline
  $\lambda$-DR: Ideal & 0.515 $\pm$ 0.059 & 0.540 $\pm$ 0.064 & 0.590 $\pm$ 0.046\\
  $\lambda$-DR: Gibbs-fitted-IW & 0.430 $\pm$ 0.095 & 0.570 $\pm$ 0.063 & 0.679 $\pm$ 0.045\\
  $\lambda$-DR: Gibbs-fitted-SN & 0.767 $\pm$ 0.113 & 0.744 $\pm$ 0.086 & 0.790 $\pm$ 0.038\\
  \hline
  Emp.Lik. Ideal & - & - & -\\
  Emp.Lik. Gibbs-fitted-IW & - & - & -\\
  Emp.Lik. Gibbs-fitted-SN & - & - & -\\
  \hline
\end{tabular}
\end{table}

\begin{table}[H]
  \centering
  \caption{Concentration of contexts term for different confidence intervals and target policies.}
\begin{tabular}{|c|c|c|c|}
  \hline
  Concentration of contexts & 5000 & 10000 & 20000\\
  \hline
  \ac{ESLB}: Ideal & 0.025 $\pm$ 0.000 & 0.018 $\pm$ 0.000 & 0.013 $\pm$ 0.000\\
  \ac{ESLB}: Gibbs-fitted-IW & 0.025 $\pm$ 0.000 & 0.018 $\pm$ 0.000 & 0.013 $\pm$ 0.000\\
  \ac{ESLB}: Gibbs-fitted-SN & 0.025 $\pm$ 0.000 & 0.018 $\pm$ 0.000 & 0.013 $\pm$ 0.000\\
  \hline
  $\lambda$-\ac{IS}: Ideal & 0.026 $\pm$ 0.000 & 0.018 $\pm$ 0.000 & 0.013 $\pm$ 0.000\\
  $\lambda$-\ac{IS}: Gibbs-fitted-IW & 0.026 $\pm$ 0.000 & 0.018 $\pm$ 0.000 & 0.013 $\pm$ 0.000\\
  $\lambda$-\ac{IS}: Gibbs-fitted-SN & 0.026 $\pm$ 0.000 & 0.018 $\pm$ 0.000 & 0.013 $\pm$ 0.000\\
  \hline
  Cheb-\ac{WIS}: Ideal & 0.300 $\pm$ 0.000 & 0.212 $\pm$ 0.000 & 0.150 $\pm$ 0.000\\
  Cheb-\ac{WIS}: Gibbs-fitted-IW & 0.300 $\pm$ 0.000 & 0.212 $\pm$ 0.000 & 0.150 $\pm$ 0.000\\
  Cheb-\ac{WIS}: Gibbs-fitted-SN & 0.300 $\pm$ 0.000 & 0.212 $\pm$ 0.000 & 0.150 $\pm$ 0.000\\
  \hline
  DR: Ideal & - & - & -\\
  DR: Gibbs-fitted-IW & - & - & -\\
  DR: Gibbs-fitted-SN & - & - & -\\
  \hline
  $\lambda$-DR: Ideal & 0.037 $\pm$ 0.000 & 0.026 $\pm$ 0.000 & 0.018 $\pm$ 0.000\\
  $\lambda$-DR: Gibbs-fitted-IW & 0.037 $\pm$ 0.000 & 0.026 $\pm$ 0.000 & 0.018 $\pm$ 0.000\\
  $\lambda$-DR: Gibbs-fitted-SN & 0.037 $\pm$ 0.000 & 0.026 $\pm$ 0.000 & 0.018 $\pm$ 0.000\\
  \hline
  Emp.Lik. Ideal & - & - & -\\
  Emp.Lik. Gibbs-fitted-IW & - & - & -\\
  Emp.Lik. Gibbs-fitted-SN & - & - & -\\
  \hline
\end{tabular}
\end{table}

\begin{table}[H]
  \centering
  \caption{Empirical value $\vh$ for different estimators and target policies.}
\begin{tabular}{|c|c|c|c|}
  \hline
  $\vh(\pi)$ & 5000 & 10000 & 20000\\
  \hline
  \ac{ESLB}: Ideal & 0.973 $\pm$ 0.005 & 0.974 $\pm$ 0.002 & 0.974 $\pm$ 0.002\\
  \ac{ESLB}: Gibbs-fitted-IW & 0.822 $\pm$ 0.059 & 0.901 $\pm$ 0.033 & 0.901 $\pm$ 0.016\\
  \ac{ESLB}: Gibbs-fitted-SN & 1.000 $\pm$ 0.000 & 1.000 $\pm$ 0.000 & 0.999 $\pm$ 0.000\\
  \hline
  $\lambda$-\ac{IS}: Ideal & 0.691 $\pm$ 0.044 & 0.727 $\pm$ 0.021 & 0.760 $\pm$ 0.026\\
  $\lambda$-\ac{IS}: Gibbs-fitted-IW & 0.689 $\pm$ 0.045 & 0.732 $\pm$ 0.022 & 0.745 $\pm$ 0.027\\
  $\lambda$-\ac{IS}: Gibbs-fitted-SN & 0.546 $\pm$ 0.040 & 0.587 $\pm$ 0.020 & 0.659 $\pm$ 0.024\\
  \hline
  Cheb-\ac{WIS}: Ideal & 0.973 $\pm$ 0.005 & 0.974 $\pm$ 0.002 & 0.974 $\pm$ 0.002\\
  Cheb-\ac{WIS}: Gibbs-fitted-IW & 0.822 $\pm$ 0.059 & 0.901 $\pm$ 0.033 & 0.901 $\pm$ 0.016\\
  Cheb-\ac{WIS}: Gibbs-fitted-SN & 1.000 $\pm$ 0.000 & 1.000 $\pm$ 0.000 & 0.999 $\pm$ 0.000\\
  \hline
  DR: Ideal & - & - & -\\
  DR: Gibbs-fitted-IW & - & - & -\\
  DR: Gibbs-fitted-SN & - & - & -\\
  \hline
  $\lambda$-DR: Ideal & 0.771 $\pm$ 0.052 & 0.828 $\pm$ 0.039 & 0.879 $\pm$ 0.030\\
  $\lambda$-DR: Gibbs-fitted-IW & 0.769 $\pm$ 0.049 & 0.837 $\pm$ 0.038 & 0.867 $\pm$ 0.032\\
  $\lambda$-DR: Gibbs-fitted-SN & 0.780 $\pm$ 0.066 & 0.791 $\pm$ 0.042 & 0.882 $\pm$ 0.025\\
  \hline
  Emp.Lik. Ideal & - & - & -\\
  Emp.Lik. Gibbs-fitted-IW & - & - & -\\
  Emp.Lik. Gibbs-fitted-SN & - & - & -\\
  \hline
  \hline
\end{tabular}
\end{table}

\subsection{UCI Datasets}

\begin{table}[H]
  \centering
  \caption{Concentration term for different confidence intervals and target policies.}
\resizebox{\textwidth}{!}{
\begin{tabular}{|c|c|c|c|c|c|c|c|c|}
  \hline
  Concentration / Name &    Yeast & PageBlok & OptDigits & SatImage & isolet & PenDigits & Letter & kropt \\
  Size &   1484 & 5473 & 5620 & 6435 & 7797 & 10992 & 20000 & 28056  \\
  \hline
  \ac{ESLB}: Ideal & 0.424 $\pm$ 0.003 & 0.384 $\pm$ 0.038 & 0.672 $\pm$ 0.052 & 0.778 $\pm$ 0.020 & 0.402 $\pm$ 0.041 & 0.494 $\pm$ 0.018 & 0.241 $\pm$ 0.018 & 0.279 $\pm$ 0.006\\
  \ac{ESLB}: Gibbs-fitted-IW & 2.056 $\pm$ 0.082 & 0.839 $\pm$ 0.054 & 0.858 $\pm$ 0.025 & 0.810 $\pm$ 0.088 & 0.827 $\pm$ 0.006 & 0.676 $\pm$ 0.047 & 0.665 $\pm$ 0.013 & 0.595 $\pm$ 0.009\\
  \ac{ESLB}: Gibbs-fitted-SN & 3.060 $\pm$ 0.496 & 0.993 $\pm$ 0.154 & 1.640 $\pm$ 0.204 & 1.242 $\pm$ 0.144 & 1.229 $\pm$ 0.033 & 0.996 $\pm$ 0.148 & 0.833 $\pm$ 0.035 & 0.709 $\pm$ 0.023\\  
  \hline
  $\lambda$-\ac{IS}: Ideal & 0.633 $\pm$ 0.001 & 0.352 $\pm$ 0.020 & 0.385 $\pm$ 0.029 & 0.400 $\pm$ 0.016 & 0.321 $\pm$ 0.021 & 0.301 $\pm$ 0.015 & 0.208 $\pm$ 0.012 & 0.196 $\pm$ 0.006\\
  $\lambda$-\ac{IS}: Gibbs-fitted-IW & 0.821 $\pm$ 0.071 & 0.418 $\pm$ 0.022 & 0.501 $\pm$ 0.015 & 0.435 $\pm$ 0.022 & 0.570 $\pm$ 0.015 & 0.366 $\pm$ 0.009 & 0.347 $\pm$ 0.003 & 0.263 $\pm$ 0.004\\
  $\lambda$-\ac{IS}: Gibbs-fitted-SN & 0.782 $\pm$ 0.056 & 0.393 $\pm$ 0.017 & 0.466 $\pm$ 0.015 & 0.428 $\pm$ 0.024 & 0.483 $\pm$ 0.023 & 0.337 $\pm$ 0.014 & 0.303 $\pm$ 0.006 & 0.243 $\pm$ 0.005\\  
  \hline
  $\lambda$-DR: Ideal & 0.927 $\pm$ 0.004 & 0.484 $\pm$ 0.025 & 0.524 $\pm$ 0.031 & 0.486 $\pm$ 0.017 & 0.430 $\pm$ 0.023 & 0.370 $\pm$ 0.017 & 0.263 $\pm$ 0.015 & 0.242 $\pm$ 0.013\\
  $\lambda$-DR: Gibbs-fitted-IW & 1.179 $\pm$ 0.051 & 0.550 $\pm$ 0.034 & 0.665 $\pm$ 0.027 & 0.555 $\pm$ 0.035 & 0.755 $\pm$ 0.015 & 0.451 $\pm$ 0.019 & 0.427 $\pm$ 0.017 & 0.344 $\pm$ 0.009\\
  $\lambda$-DR: Gibbs-fitted-SN & 0.985 $\pm$ 0.048 & 0.516 $\pm$ 0.024 & 0.544 $\pm$ 0.035 & 0.487 $\pm$ 0.023 & 0.527 $\pm$ 0.037 & 0.378 $\pm$ 0.024 & 0.304 $\pm$ 0.016 & 0.265 $\pm$ 0.010\\  
  \hline
  Cheb-\ac{WIS}: Ideal & 3.133 $\pm$ 0 & 2.464 $\pm$ 0 & 5.450 $\pm$ 0 & 6.852 $\pm$ 0 & 2.405 $\pm$ 0 & 3.258 $\pm$ 0 & 1.268 $\pm$ 0 & 1.541 $\pm$ 0\\
  Cheb-\ac{WIS}: Gibbs-fitted-IW & $-\infty$ & 6.275 $\pm$ 0.994 & 3.720 $\pm$ 0.561 & 5.665 $\pm$ 2.055 & 2.629 $\pm$ 0.305 & 4.375 $\pm$ 0.727 & 5.652 $\pm$ 0.242 & 4.668 $\pm$ 0.126\\
  Cheb-\ac{WIS}: Gibbs-fitted-SN & $-\infty$ & 11.510 $\pm$ 3.473 & 33.712 $\pm$ 4.930 & 17.818 $\pm$ 5.089 & 21.090 $\pm$ 0.994 & 8.806 $\pm$ 1.524 & 6.555 $\pm$ 0.139 & 4.779 $\pm$ 0.085\\  
  \hline
  DR: Ideal & - & - & - & - & - & - & - & -\\
  DR: Gibbs-fitted-IW & - & - & - & - & - & - & - & -\\
  DR: Gibbs-fitted-SN & - & - & - & - & - & - & - & -\\  
  \hline
  Emp.Lik. Ideal & - & - & - & - & - & - & - & -\\
  Emp.Lik. Gibbs-fitted-IW & - & - & - & - & - & - & - & -\\
  Emp.Lik. Gibbs-fitted-SN & - & - & - & - & - & - & - & -\\  
  \hline
\end{tabular}
}
\end{table}

\begin{table}[H]
  \centering
  \caption{Bias term for different confidence intervals and target policies.}
\resizebox{\textwidth}{!}{
\begin{tabular}{|c|c|c|c|c|c|c|c|c|}
\hline
Bias (multiplicative for SN-based CIs) / Name &    Yeast & PageBlok & OptDigits & SatImage & isolet & PenDigits & Letter & kropt \\
Size &   1484 & 5473 & 5620 & 6435 & 7797 & 10992 & 20000 & 28056  \\
  \hline
  \ac{ESLB}: Ideal & 0.996 $\pm$ 0.001 & 0.997 $\pm$ 0 & 0.989 $\pm$ 0.001 & 0.985 $\pm$ 0.001 & 0.997 $\pm$ 0.001 & 0.994 $\pm$ 0.001 & 0.999 $\pm$ 0 & 0.998 $\pm$ 0\\
\ac{ESLB}: Gibbs-fitted-IW & 0.892 $\pm$ 0.018 & 0.987 $\pm$ 0.003 & 0.993 $\pm$ 0.002 & 0.989 $\pm$ 0.005 & 0.996 $\pm$ 0.001 & 0.992 $\pm$ 0.002 & 0.988 $\pm$ 0.001 & 0.990 $\pm$ 0.001\\
\ac{ESLB}: Gibbs-fitted-SN & 0.789 $\pm$ 0.030 & 0.976 $\pm$ 0.005 & 0.951 $\pm$ 0.004 & 0.965 $\pm$ 0.008 & 0.962 $\pm$ 0.003 & 0.979 $\pm$ 0.004 & 0.985 $\pm$ 0.001 & 0.990 $\pm$ 0.001\\
  \hline
  $\lambda$-\ac{IS}: Ideal & 0.054 $\pm$ 0 & 0.067 $\pm$ 0 & 0.170 $\pm$ 0 & 0.244 $\pm$ 0 & 0.076 $\pm$ 0 & 0.151 $\pm$ 0 & 0.056 $\pm$ 0 & 0.097 $\pm$ 0\\
$\lambda$-\ac{IS}: Gibbs-fitted-IW & 0.487 $\pm$ 0.067 & 0.218 $\pm$ 0.033 & 0.118 $\pm$ 0.021 & 0.210 $\pm$ 0.079 & 0.091 $\pm$ 0.015 & 0.238 $\pm$ 0.047 & 0.509 $\pm$ 0.024 & 0.516 $\pm$ 0.016\\
$\lambda$-\ac{IS}: Gibbs-fitted-SN & 0.789 $\pm$ 0.058 & 0.361 $\pm$ 0.076 & 0.718 $\pm$ 0.052 & 0.554 $\pm$ 0.095 & 0.725 $\pm$ 0.007 & 0.506 $\pm$ 0.079 & 0.602 $\pm$ 0.015 & 0.528 $\pm$ 0.012\\
  \hline
  $\lambda$-DR: Ideal & 0.123 $\pm$ 0.013 & 0.115 $\pm$ 0.011 & 0.319 $\pm$ 0.035 & 0.478 $\pm$ 0.040 & 0.313 $\pm$ 0.009 & 0.343 $\pm$ 0.033 & 0.306 $\pm$ 0.007 & 0.294 $\pm$ 0.006\\
$\lambda$-DR: Gibbs-fitted-IW & 1.127 $\pm$ 0.174 & 0.439 $\pm$ 0.074 & 0.488 $\pm$ 0.099 & 0.557 $\pm$ 0.165 & 1.220 $\pm$ 0.077 & 0.706 $\pm$ 0.132 & 1.762 $\pm$ 0.101 & 1.533 $\pm$ 0.070\\
$\lambda$-DR: Gibbs-fitted-SN & 1.443 $\pm$ 0.182 & 0.582 $\pm$ 0.119 & 1.305 $\pm$ 0.103 & 1.154 $\pm$ 0.205 & 1.439 $\pm$ 0.070 & 1.097 $\pm$ 0.157 & 1.395 $\pm$ 0.049 & 1.256 $\pm$ 0.032\\
  \hline
  Cheb-\ac{WIS}: Ideal & 0.722 $\pm$ 0 & 0.767 $\pm$ 0 & 0.599 $\pm$ 0 & 0.543 $\pm$ 0 & 0.772 $\pm$ 0 & 0.714 $\pm$ 0 & 0.865 $\pm$ 0 & 0.841 $\pm$ 0\\
Cheb-\ac{WIS}: Gibbs-fitted-IW & 0 $\pm$ 0 & 0.567 $\pm$ 0.043 & 0.688 $\pm$ 0.031 & 0.601 $\pm$ 0.080 & 0.756 $\pm$ 0.021 & 0.652 $\pm$ 0.038 & 0.590 $\pm$ 0.010 & 0.635 $\pm$ 0.006\\
Cheb-\ac{WIS}: Gibbs-fitted-SN & 0 $\pm$ 0 & 0.426 $\pm$ 0.066 & 0.198 $\pm$ 0.033 & 0.325 $\pm$ 0.062 & 0.279 $\pm$ 0.010 & 0.484 $\pm$ 0.043 & 0.554 $\pm$ 0.005 & 0.630 $\pm$ 0.004\\
  \hline
  DR: Ideal & - & - & - & - & - & - & - & -\\
DR: Gibbs-fitted-IW & - & - & - & - & - & - & - & -\\
DR: Gibbs-fitted-SN & - & - & - & - & - & - & - & -\\  
  \hline
  Emp.Lik. Ideal & - & - & - & - & - & - & - & -\\
Emp.Lik. Gibbs-fitted-IW & - & - & - & - & - & - & - & -\\
Emp.Lik. Gibbs-fitted-SN & - & - & - & - & - & - & - & -\\
  \hline  
\end{tabular}
}
\end{table}

\begin{table}[H]
  \centering
  \caption{Concentration of contexts term for different confidence intervals and target policies.}
\resizebox{\textwidth}{!}{
\begin{tabular}{|c|c|c|c|c|c|c|c|c|}
  \hline
  Concentration of contexts / Name &    Yeast & PageBlok & OptDigits & SatImage & isolet & PenDigits & Letter & kropt \\
  Size &   1484 & 5473 & 5620 & 6435 & 7797 & 10992 & 20000 & 28056  \\
  \hline
  \ac{ESLB}: Ideal & 0.066 $\pm$ 0 & 0.034 $\pm$ 0 & 0.034 $\pm$ 0 & 0.032 $\pm$ 0 & 0.029 $\pm$ 0 & 0.024 $\pm$ 0 & 0.018 $\pm$ 0 & 0.015 $\pm$ 0\\
  \ac{ESLB}: Gibbs-fitted-IW & 0.066 $\pm$ 0 & 0.034 $\pm$ 0 & 0.034 $\pm$ 0 & 0.032 $\pm$ 0 & 0.029 $\pm$ 0 & 0.024 $\pm$ 0 & 0.018 $\pm$ 0 & 0.015 $\pm$ 0\\
  \ac{ESLB}: Gibbs-fitted-SN & 0.066 $\pm$ 0 & 0.034 $\pm$ 0 & 0.034 $\pm$ 0 & 0.032 $\pm$ 0 & 0.029 $\pm$ 0 & 0.024 $\pm$ 0 & 0.018 $\pm$ 0 & 0.015 $\pm$ 0\\  
  \hline
  $\lambda$-\ac{IS}: Ideal & 0.068 $\pm$ 0 & 0.035 $\pm$ 0 & 0.035 $\pm$ 0 & 0.033 $\pm$ 0 & 0.030 $\pm$ 0 & 0.025 $\pm$ 0 & 0.018 $\pm$ 0 & 0.016 $\pm$ 0\\
  $\lambda$-\ac{IS}: Gibbs-fitted-IW & 0.068 $\pm$ 0 & 0.035 $\pm$ 0 & 0.035 $\pm$ 0 & 0.033 $\pm$ 0 & 0.030 $\pm$ 0 & 0.025 $\pm$ 0 & 0.018 $\pm$ 0 & 0.016 $\pm$ 0\\
  $\lambda$-\ac{IS}: Gibbs-fitted-SN & 0.068 $\pm$ 0 & 0.035 $\pm$ 0 & 0.035 $\pm$ 0 & 0.033 $\pm$ 0 & 0.030 $\pm$ 0 & 0.025 $\pm$ 0 & 0.018 $\pm$ 0 & 0.016 $\pm$ 0\\  
  \hline
  $\lambda$-DR: Ideal & 0.096 $\pm$ 0 & 0.050 $\pm$ 0 & 0.049 $\pm$ 0 & 0.046 $\pm$ 0 & 0.042 $\pm$ 0 & 0.035 $\pm$ 0 & 0.026 $\pm$ 0 & 0.022 $\pm$ 0\\
  $\lambda$-DR: Gibbs-fitted-IW & 0.096 $\pm$ 0 & 0.050 $\pm$ 0 & 0.049 $\pm$ 0 & 0.046 $\pm$ 0 & 0.042 $\pm$ 0 & 0.035 $\pm$ 0 & 0.026 $\pm$ 0 & 0.022 $\pm$ 0\\
  $\lambda$-DR: Gibbs-fitted-SN & 0.096 $\pm$ 0 & 0.050 $\pm$ 0 & 0.049 $\pm$ 0 & 0.046 $\pm$ 0 & 0.042 $\pm$ 0 & 0.035 $\pm$ 0 & 0.026 $\pm$ 0 & 0.022 $\pm$ 0\\  
  \hline
  Cheb-\ac{WIS}: Ideal & 0.068 $\pm$ 0 & 0.035 $\pm$ 0 & 0.035 $\pm$ 0 & 0.033 $\pm$ 0 & 0.030 $\pm$ 0 & 0.025 $\pm$ 0 & 0.018 $\pm$ 0 & 0.016 $\pm$ 0\\
  Cheb-\ac{WIS}: Gibbs-fitted-IW & 0.068 $\pm$ 0 & 0.035 $\pm$ 0 & 0.035 $\pm$ 0 & 0.033 $\pm$ 0 & 0.030 $\pm$ 0 & 0.025 $\pm$ 0 & 0.018 $\pm$ 0 & 0.016 $\pm$ 0\\
  Cheb-\ac{WIS}: Gibbs-fitted-SN & 0.068 $\pm$ 0 & 0.035 $\pm$ 0 & 0.035 $\pm$ 0 & 0.033 $\pm$ 0 & 0.030 $\pm$ 0 & 0.025 $\pm$ 0 & 0.018 $\pm$ 0 & 0.016 $\pm$ 0\\  
  \hline
  DR: Ideal & - & - & - & - & - & - & - & -\\
  DR: Gibbs-fitted-IW & - & - & - & - & - & - & - & -\\
  DR: Gibbs-fitted-SN & - & - & - & - & - & - & - & -\\  
  \hline
  Emp.Lik. Ideal & - & - & - & - & - & - & - & -\\
  Emp.Lik. Gibbs-fitted-IW & - & - & - & - & - & - & - & -\\
  Emp.Lik. Gibbs-fitted-SN & - & - & - & - & - & - & - & -\\  
  \hline
\end{tabular}
}
\end{table}

\begin{table}[H]
  \centering
  \caption{Empirical value $\vh$ for different estimators and target policies.}
\resizebox{\textwidth}{!}{
\begin{tabular}{|c|c|c|c|c|c|c|c|c|}
  \hline
  $\vh(\pi)$ / Name &    Yeast & PageBlok & OptDigits & SatImage & isolet & PenDigits & Letter & kropt \\
  Size &   1484 & 5473 & 5620 & 6435 & 7797 & 10992 & 20000 & 28056  \\
  \hline
  \ac{ESLB}: Gibbs-fitted-IW & 0.367 $\pm$ 0.068 & 0.612 $\pm$ 0.075 & 0.510 $\pm$ 0.057 & 0.593 $\pm$ 0.070 & 0.349 $\pm$ 0.019 & 0.604 $\pm$ 0.066 & 0.425 $\pm$ 0.035 & 0.411 $\pm$ 0.023\\
  \ac{ESLB}: Gibbs-fitted-SN & 0.997 $\pm$ 0.004 & 0.919 $\pm$ 0.028 & 0.999 $\pm$ 0.002 & 0.964 $\pm$ 0.035 & 1.000 $\pm$ 0 & 0.969 $\pm$ 0.017 & 0.984 $\pm$ 0.013 & 0.840 $\pm$ 0.053\\
  \ac{ESLB}: Ideal & 0.903 $\pm$ 0.013 & 0.916 $\pm$ 0.007 & 0.907 $\pm$ 0.023 & 0.921 $\pm$ 0.024 & 0.918 $\pm$ 0.005 & 0.908 $\pm$ 0.021 & 0.912 $\pm$ 0.007 & 0.908 $\pm$ 0.010\\
  \hline
  $\lambda$-\ac{IS}: Gibbs-fitted-IW & 0.585 $\pm$ 0.041 & 0.725 $\pm$ 0.040 & 0.963 $\pm$ 0.023 & 0.700 $\pm$ 0.038 & 1.680 $\pm$ 0.068 & 0.772 $\pm$ 0.049 & 0.679 $\pm$ 0.027 & 0.400 $\pm$ 0.019\\
  $\lambda$-\ac{IS}: Gibbs-fitted-SN & 0.175 $\pm$ 0.055 & 0.514 $\pm$ 0.103 & 0.192 $\pm$ 0.070 & 0.352 $\pm$ 0.100 & 0.381 $\pm$ 0.058 & 0.361 $\pm$ 0.101 & 0.297 $\pm$ 0.032 & 0.267 $\pm$ 0.019\\
  $\lambda$-\ac{IS}: Ideal & 0.851 $\pm$ 0.013 & 0.848 $\pm$ 0.013 & 0.750 $\pm$ 0.033 & 0.691 $\pm$ 0.024 & 0.849 $\pm$ 0.014 & 0.772 $\pm$ 0.024 & 0.862 $\pm$ 0.016 & 0.820 $\pm$ 0.010\\
  \hline
  $\lambda$-DR: Ideal & 0.877 $\pm$ 0.019 & 0.870 $\pm$ 0.013 & 0.820 $\pm$ 0.046 & 0.800 $\pm$ 0.038 & 0.875 $\pm$ 0.016 & 0.835 $\pm$ 0.029 & 0.883 $\pm$ 0.013 & 0.834 $\pm$ 0.016\\
  $\lambda$-DR: Gibbs-fitted-IW & 0.682 $\pm$ 0.098 & 0.770 $\pm$ 0.052 & 0.730 $\pm$ 0.082 & 0.710 $\pm$ 0.069 & 0.528 $\pm$ 0.125 & 0.734 $\pm$ 0.054 & 0.477 $\pm$ 0.064 & 0.530 $\pm$ 0.044\\
  $\lambda$-DR: Gibbs-fitted-SN & 0.683 $\pm$ 0.086 & 0.671 $\pm$ 0.102 & 0.583 $\pm$ 0.061 & 0.752 $\pm$ 0.069 & 0.722 $\pm$ 0.071 & 0.743 $\pm$ 0.073 & 0.718 $\pm$ 0.048 & 0.687 $\pm$ 0.039\\  
  \hline
  Cheb-\ac{WIS}: Ideal & 0.903 $\pm$ 0.013 & 0.916 $\pm$ 0.007 & 0.907 $\pm$ 0.023 & 0.921 $\pm$ 0.024 & 0.918 $\pm$ 0.005 & 0.908 $\pm$ 0.021 & 0.912 $\pm$ 0.007 & 0.908 $\pm$ 0.010\\
  Cheb-\ac{WIS}: Gibbs-fitted-IW & 0.367 $\pm$ 0.068 & 0.612 $\pm$ 0.075 & 0.510 $\pm$ 0.057 & 0.593 $\pm$ 0.070 & 0.349 $\pm$ 0.019 & 0.604 $\pm$ 0.066 & 0.425 $\pm$ 0.035 & 0.411 $\pm$ 0.023\\
  Cheb-\ac{WIS}: Gibbs-fitted-SN & 0.997 $\pm$ 0.004 & 0.919 $\pm$ 0.028 & 0.999 $\pm$ 0.002 & 0.964 $\pm$ 0.035 & 1.000 $\pm$ 0 & 0.969 $\pm$ 0.017 & 0.984 $\pm$ 0.013 & 0.840 $\pm$ 0.053\\  
  \hline
  DR: Ideal & - & - & - & - & - & - & - & -\\
  DR: Gibbs-fitted-IW & - & - & - & - & - & - & - & -\\
  DR: Gibbs-fitted-SN & - & - & - & - & - & - & - & -\\  
  \hline
  Emp.Lik. Ideal & - & - & - & - & - & - & - & -\\
  Emp.Lik. Gibbs-fitted-IW & - & - & - & - & - & - & - & -\\
  Emp.Lik. Gibbs-fitted-SN & - & - & - & - & - & - & - & -\\  
  \hline
\end{tabular}
}
\end{table}

\end{document}